\crefname{hypothesis}{Hypothesis}{Hypotheses}
\title{Sketching with Spherical Designs  for   Noisy Data Fitting on Spheres\thanks{The corresponding author is Di Wang.
\funding{S. B. Lin was partially
	supported by the  National Key R\&D Program of China (No.2020YFA0713900) and the
	National Natural Science Foundation of China [Grant Nos. 62276209]. D. Wang was partially
	supported by the
	National Natural Science Foundation of China [Grant Nos. 61772374]. D. X. Zhou was partially  supported in part by the Research Grants Council of Hong Kong [Project Nos. CityU 11308020, N\_CityU 102/20, C1013-21GF], Hong Kong Institute for Data Science,  Germany/Hong Kong Joint Research Scheme  [Project No. G-CityU101/20],
  Laboratory for AI-Powered Financial Technologies, and National Science Foundation of China [Project No. 12061160462] when he worked at City University of Hong Kong 
and the first version of the paper was written.}}}
\author{Shao-Bo Lin\thanks{Center for Intelligent Decision-Making and Machine Learning, School of Management, Xi'an Jiaotong University, Xi'an 710049, China
  (\email{sblin1983@gmail.com}).}
\and Di Wang\thanks{Center for Intelligent Decision-Making and Machine Learning, School of Management, Xi'an Jiaotong University, Xi'an 710049, China
  (\email{wang.di@xjtu.edu.cn}).}
\and Ding-Xuan Zhou \thanks{School of Mathematics and Statistics, University of Sydney, Sydney NSW 2006, Australia  (\email{dingxuan.zhou@sydney.edu.au}).}}
\newcommand{\sfgrad}[1][]{ % surface-gradient
	\nabla_{*}
}
\newcommand{\sfcurl}[1][]{ % surface-gradient
	\mathbf{L}
}
\newcommand{\Jw}[1][\alpha,\beta]{ % Jacobi weight w_{\alpha,\beta}
w_{#1}
}
\newcommand{\imat}[1][d]{ % identity matrix
    I
}
\newcommand{\Lpw}[2][\Jw]{ % Sobolev spaces W_p^s for w_{\alpha,\beta}
\mathbb{L}_{#2}(#1)
}
\newcommand{\InnerLGb}[2][{\Jw[r-\frac{1}{2},r-\frac{1}{2}]}]{ % inner product for Gegenbauer weights
\left(#2\right)_{\Lpw[{#1}]{2}}
}
\newcommand{\Diff}[2][t]{ % derivative
\ifthenelse{\equal{#2}{1}}{\frac{\mathrm{d}}{\mathrm{d}#1}}{
\left(\frac{\mathrm{d}}{\mathrm{d}#1}\right)^{#2}}
}
\begin{document}

\maketitle

% REQUIRED
\begin{abstract}
% This paper proposes a { sketching} strategy with spherical designs to equip the classical spherical basis function approach for massive spherical data fitting.
This paper proposes a { sketching} strategy based on spherical designs, which is applied to the classical spherical basis function approach for massive spherical data fitting.
We conduct  theoretical analysis and numerical verifications to demonstrate the feasibility of the proposed { sketching}  strategy. From the theoretical side, we prove that { sketching} based on spherical designs can reduce the computational burden of the  spherical basis function approach without sacrificing its approximation capability. In particular, we provide  upper and lower bounds for the proposed { sketching} strategy to fit noisy data on spheres. From the experimental  side, we numerically illustrate the feasibility  of the { sketching} strategy by showing its comparable fitting performance  with the   spherical basis function approach.  
 These interesting findings show  that the proposed  { sketching} strategy is capable of fitting massive and noisy data on spheres.
\end{abstract}

% REQUIRED
\begin{keywords}
Spherical data fitting, spherical basis functions, { sketching}, spherical designs 
\end{keywords}

% REQUIRED
\begin{AMS}
  %MSC 
  68T05, 94A20, 41A35 
\end{AMS}

%\begin{AMS}
% 68T05, 62J02.
%\end{AMS}

\pagestyle{myheadings}
\thispagestyle{plain}
%\markboth{TEX PRODUCTION}{USING SIAM'S \LaTeX\ MACROS}

\section{Introduction}
Spherical data abound in our lives: from geophysics \cite{king2012lower}, quantum chemistry \cite{choi1999rapid}, planetary science \cite{wieczorek1998potential}
and astrophysics \cite{jarosik2011seven} to computer graphic science \cite{tsai2006all}, image processing \cite{mcewen2013sparse} and signal recovery \cite{mcewen2011novel}.  For example, thousands of directional data are collected  for image rendering \cite{tsai2006all};  millions of cosmic microwave background (CMB) observations  \cite{jarosik2011seven} are gained to analyze the evolution of the Universe;
billions of Gravity Recovery and Climate Experiment (GRACE) data \cite{king2012lower} are sampled to study gravity and Earth's natural systems.   { Different from classical machine learning problems \cite{gyorfi2002distribution} that assume data to be drawn randomly according to some unknown   probability distribution,    spherical data fitting frequently requires delicate sampling mechanisms to generate deterministic samples.}

% data defined on Euclidean space 

% designing   delicate sampling mechanisms to reduce the required samples   for a given task and   developing scalable and robust learning algorithms to adequately explore the value of data. 

% \textcolor{blue}{These massive and frequently contaminated spherical data} are capable of discovering potential patterns  that cannot be found by data of small size,  but also bring  numerous  challenges to  scientific computation, including 

Sampling according to sampling theorems \cite{mcewen2011novel} and   quadrature rules \cite{brauchart2007numerical} are two popular sampling mechanisms { to generate spherical data}. { Since sampling theorems  are only valid under some sparseness assumptions,    spherical quadrature rules   have received considerable attention  and can generate most of the existing scattered in the literature \cite{mhaskar2001spherical,brown2005approximation,leopardi2006partition,le2008localized}}. 
The problem is, however, that  the    quadrature weights depend  heavily on the location of the  corresponding quadrature points, resulting in the existence of inactive samples for which the weights are small.
%  To avoid the waste of sampling, spherical designs \cite{bannai2009survey} 
% % come into researchers' insights. 
% attract much attention of researchers.
Spherical designs are  special spherical quadrature rules with equal weights.
The equal-weight nature implies the equal-contribution of the quadrature points in spherical designs, which excludes  quadrature points associated with small quadrature weights and then reduces the number of sampling  {points} \cite{bondarenko2013optimal}. Due to this,
spherical designs  have been widely used to construct spherical hyper-interpolations  \cite{lin2021distributed} to learn noisy data, produce spherical regularized least squares estimators \cite{an2012regularized} to fit scattered data, and design non-convex minimization algorithms \cite{chen2018spherical} for signal recovery on spheres.

% On the other hand, it is easy to construct well-separated spherical designs \cite{bondarenko2015well}, just as Figure \ref{S2points} in the next section purports to show.

% Spherical sampling \cite{freeden2018spherical} has received considerable attention already and numerous approaches have been proposed for this purpose, among which, . Spherical sampling theorems   aim  to design suitable sampling mechanisms to minimize the number of samples to represent spherical polynomials exactly \cite{mcewen2011novel}.  Spherical quadrature rule, also referred as the numerical integral formula on the sphere \cite{brauchart2007numerical}, focuses on discretizing spherical integrals of spherical polynomials on suitable quadrature points. Compared with sampling theorems, spherical quadrature rules do not require any restrictions on the a-priori information of spherical polynomials. 
% Due to this,

In this paper, we consider problems of noisy spherical data fitting   sampled  at spherical $t$-designs. 
Let $D:=\{(x_{i},y_{i})\}_{i=1}^{|D|}$ be the set of data  
with $\{x_i\}_{i=1}^{|D|}$   a spherical $t$-design 
and  
 \begin{equation}\label{Model1:fixed}
        y_{i}=f^*(x_{i})+\varepsilon_{i},  \qquad\forall\
        i=1,\dots,|D|,
\end{equation}
where $|D|$ denotes the cardinality of $D$,   {  the noise terms  $\{\varepsilon_i\}_{i=1}^{|D|}$ are a set of independent  and identical random variables    satisfying $E[\varepsilon_{i}]=0$ and $|\varepsilon_{i}|\leq M$ for some $M>0$}, and $f^*$ is a function to model the relation between the input $x_i$ and output $y_i$.  
% We are interested in  a feasible fitting algorithm  for the problem (\ref{Model1:fixed}), especially when the data size $|D|$ is large. 
We are interested
in developing an effective algorithm to find an approximation of $f^*$ based on the given data $D$. { It should be mentioned that our approach is suitable for applications that the user  can determine the sampling mechanism on the sphere since the data inputs are assumed to be spherical designs in \eqref{Model1:fixed}}.

If $\varepsilon_i=0$, $i=1,\dots,|D|$, finding an approximation of $f^*$ in model (\ref{Model1:fixed}) is the classical interpolation problem, which can be successfully settled by     spherical polynomials \cite{womersley2001good}, splines \cite{wahba1981spline} and
spherical basis functions (SBF) \cite{narcowich2002scattered}.   In particular, it can be found in \cite{narcowich2002scattered} that the approximation error of the SBF
interpolant can be estimated  in terms of the mesh norm, provided $f^*$ is smooth.  { If $\varepsilon_i\neq 0$, the frequently large condition number} of the SBF-based interpolation matrix \cite{narcowich1998stability} makes the SBF interpolant sensitive to the noise. In this case, a regularization term is needed to guarantee the well-conditionedness  of the interpolation matrix, just as \cite{le2006continuous,hesse2017radial} did, 
{the analysis of the fitting  performance of the SBF approach with regularization is carried out when the noise is extremely small}. { Since the kernel matrix of the classical SBF approach is positive definite and  usually full, which is different from the sparse matrix  generated for Wendland SBF and corresponding multi-level schemes \cite{le2010multiscale}, it requires  $\mathcal O(|D|^2)$ and $\mathcal O(|D|^3)$ complexities for storage and computation to solve the corresponding regularized least squares problem for any fixed regularization parameter  \cite{hesse2017radial}.
These above two phenomena make  the SBF approach difficult to apply when the noise is not small  and the size of the data is large.}

In our previous work \cite{lin2021distributed,Feng2021radial}, we proved that the widely used  distributed learning  equipped with a divide-and-conquer strategy in machine learning \cite{zhang2015divide}
is feasible for noisy and massive spherical data fitting. It should be mentioned that the distributed learning approach proposed in \cite{lin2021distributed,Feng2021radial} requires multiple computational resources, which may dampen the users' spirits and force them to turn to other scalable and stable fitting algorithms.
Our purpose in this paper  is to propose a { sketching} scheme to   reduce the computational burden of the SBF approach  
% without loss of its fitting accuracy. 
while maintaining the approximation accuracy.
Different from the  classical { sketching} approach in \cite{rudi2015less,lu2019analysis} that randomly selects part of columns of the interpolation matrix, we focus on generating another spherical $s^*$-design with $s^*\leq t$ to be the centers of SBF.
% we devote to generating another spherical $s^*$-design with $s^*\leq t$ to act as  centers of SBF and create a novel interpolation matrix. 
We provide both theoretical analysis and numerical verifications for the proposed { sketching} strategy.  

 {Our contributions can be stated as follows.} First, we present upper and lower error estimates for the proposed { sketching} strategy to show that it succeeds in reducing the computational burden of the SBF approach without sacrificing its fitting accuracy.
Second, we present Sobolev-type error estimates for the proposed { sketching} strategy, which are different from \cite{hesse2017radial,Feng2021radial} where the analysis is carried out in the $L^2$ space. Third, we employ a novel integral operator approach in our proofs, which avoids the well known ``native space barrier'' \cite{narcowich2007direct} that requires $f^*$ to belong to the native space of the SBF. Finally,  we conduct two toy simulations to illustrate that the proposed { sketching} strategy is more suitable than the classical schemes  in \cite{rudi2015less,lu2019analysis} for noisy data fitting on spheres. All these  demonstrate the power of the proposed { sketching} approach and show its efficiency in fitting massive and noisy spherical data.

The rest of the paper is organized as follows. In the next section, we present a novel { sketching} strategy based on spherical designs. In Section \ref{Sec.theory}, we analyze the theoretical behaviors of the proposed  { sketching} strategy, whose proofs are postponed to Section \ref{Sec.Proof}.
In Section \ref{Sec.Numerical}, we conduct two numerical simulations to verify our assertions.

\section{{ Sketching} with Spherical Designs}\label{Sec.Algorithm}
In this section, after introducing some basic properties
of spherical designs, we propose the { sketching}
strategy based on spherical designs. 
\subsection{Spherical designs} 
Let   $\mathbb S^d$ be the unit sphere embedded in the $(d+1)$-dimensional Euclidean space $\mathbb R^ {d+1}$.
  For $t\in\mathbb N$, denote by $\Pi_t^{d}$
the class  of all algebraic polynomials of degree at most $t$  defined on $\mathbb S^d$.
A spherical $t$-design, denoted by $\mathcal T_t:=\{x_i\}_{i=1}^{|\mathcal T_t|}$, is a finite subset of $\mathbb S^d$ satisfying
\begin{equation}\label{Spherical-design}
     \frac1{|\mathcal T_t|}\sum_{i=1}^{|\mathcal T_t|} { \pi}(x_i)=\frac1{\Omega_d}\int_{\mathbb S^d}{ \pi}(x)d\omega(x),\qquad\forall {\pi}\in\Pi_t^d,
\end{equation}
where $\Omega_d=\frac{2\pi^{\frac{d+1}{2}}}{\Gamma(\frac{d+1}{2})}$ is the volume of $\mathbb S^d$,  and $d\omega$ denotes the Lebesgue measure on the sphere. Spherical $t$-designs were introduced in  \cite{Delsarte1977} which also provided a lower bound on the number of points, i.e. $|\mathcal T_t|\geq c't^d$ for some $c'>0$.  For a given $t$, it is meaningless to present an upper bound of $|\mathcal T_t|$  since the union of two spherical $t$-designs is also a spherical $t$-design. Instead,  {many studies \cite{yudin1995coverings,bannai2009survey,bondarenko2013optimal,bondarenko2015well,brauchart2015distributing,womersley2018efficient} have been done to establish lower bounds}
% avid research activities \cite{yudin1995coverings,bannai2009survey,bondarenko2013optimal,bondarenko2015well,brauchart2015distributing,womersley2018efficient}  have been triggered in determining the lower bound 
of $|\mathcal T_t|$ and { find $c_0$-tight} spherical $t$-designs defined as follows.

\begin{definition}\label{Def:design}
A spherical $t$-design is said to be $c_0$-tight for a positive constant $c_0$ { depending only on $d$} if $|\mathcal T_t|=c_0t^d$.
\end{definition}
 
{ It should be mentioned that the above definition only requires the existence of spherical $t$-designs with the optimal order of number of points rather than the Delsarte-Goethals-Seidel lower bound $ct^d$ with an absolute constant $c$ \cite{bannai2009survey}. In fact, the Delsarte-Goethals-Seidel lower bound of points of $t$-designs is known to exist for most $t$ and $d$.} 
The  { $c_0$-tight    spherical $t$-designs  in Definition \ref{Def:design} require the achievability of the lower bound $|\mathcal T_t|\geq c't^d$ with $c'$ depending on $d$,} which has been verified in the seminal paper \cite{bondarenko2013optimal} as the following lemma.

\begin{lemma}\label{Lemma:Spherica-d-1}
 There exists a constant $c_d>0$, depending only on $d$, such that for every $N\geq c_dt^d$ and $t\geq 1$, there exists an $N$-point spherical $t$-design on $\mathbb S^d$.
\end{lemma}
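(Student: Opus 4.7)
The plan is to follow the strategy of Bondarenko--Radchenko--Viazovska: partition $\mathbb{S}^d$ into $N$ area-regular cells, choose one point from each cell, and then use a topological (Brouwer-type) fixed-point argument to show that the points can be perturbed so that the resulting equal-weight cubature rule integrates $\Pi_t^d$ exactly. Throughout, I fix $t \geq 1$ and look for $c_d$ large enough that the construction works for all $N \geq c_d t^d$.

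First I would invoke an area-regular partitioning result: there is a partition $\{R_i\}_{i=1}^N$ of $\mathbb{S}^d$ with $\omega(R_i) = \Omega_d/N$ and $\mathrm{diam}(R_i) \leq C_d N^{-1/d}$ (for example Leopardi's construction, with precursors in Kuijlaars--Saff). Let $V_t \subset \Pi_t^d$ be the subspace of polynomials with zero mean on $\mathbb{S}^d$; its dimension $D_t$ is of order $t^d$. Define the residual map
\[
\Phi : R_1 \times \cdots \times R_N \to V_t, \qquad \Phi(x_1,\ldots,x_N) = \frac{1}{N}\sum_{i=1}^{N} \big(\delta_{x_i} - \tfrac{1}{\Omega_d}\omega\big)\Big|_{V_t},
\]
so that a zero of $\Phi$ is exactly a spherical $t$-design as in \eqref{Spherical-design}. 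Existence of some $N$-point design is equivalent to the statement that $0 \in \Phi\big(\prod_i R_i\big)$.

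Next I would estimate sizes. By the Markov--Bernstein inequality on $\mathbb{S}^d$, any $p \in \Pi_t^d$ with $\|p\|_\infty \leq 1$ satisfies $\|\nabla p\|_\infty \leq C t$. Thus moving a point $x_i$ inside $R_i$ changes each term of $\Phi$ by at most $C t \cdot \mathrm{diam}(R_i) / N \leq C' t / N^{1+1/d}$. Combined with a suitable orthonormal basis of $V_t$ bounded by $\sqrt{D_t} = O(t^{d/2})$ in sup norm, one shows that the linearization of $\Phi$ at any center point $x^* = (x_1^*,\ldots,x_N^*)$ is surjective onto $V_t$ with a quantitative lower bound on its singular values, provided $N \geq c_d t^d$ with $c_d$ large enough. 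This is the core technical step and is where the gap $N \geq c_d t^d$ gets consumed: the dimension of the target $V_t$ is $O(t^d)$, and surjectivity of the linearization requires having sufficiently many independent ``degrees of freedom'' relative to $D_t$.

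Finally I would close the argument with a topological fixed-point step. One shows that starting from any near-design $x^*$ (e.g.\ one point near the center of each cell), the image $\Phi(\prod_i R_i)$ contains an open ball around $\Phi(x^*)$, with radius comparable to the starting residual. A standard Brouwer-type argument, applied to an appropriate retraction built from the right inverse of $D\Phi(x^*)$, then produces an exact zero of $\Phi$ inside $\prod_i R_i$. This delivers the spherical $t$-design and completes the proof. The main obstacle is the last two steps combined: one cannot simply run Newton's method because the domain $\prod_i R_i$ is constrained, so the topological step must use the geometry of the cells together with the quantitative surjectivity of $D\Phi$; ensuring that the Brouwer retraction maps $\prod_i R_i$ into itself is the delicate part, and it is precisely here that the constant $c_d$ must be chosen large. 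Everything else is cubature-theoretic bookkeeping.
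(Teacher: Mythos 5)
First, note that the paper does not prove Lemma \ref{Lemma:Spherica-d-1} at all: it is imported verbatim from Bondarenko, Radchenko and Viazovska \cite{bondarenko2013optimal}, so you are attempting to reprove a deep theorem rather than to reconstruct an argument the authors actually give. Your sketch correctly identifies the two standard ingredients (an area-regular partition with cells of diameter $O(N^{-1/d})$, and a topological existence argument), but the step you yourself flag as ``the delicate part'' is precisely where the entire proof lives, and the mechanism you propose for it is not the one that works. You want to linearize the residual map $\Phi$ on the configuration space $\prod_i R_i$, prove quantitative surjectivity of $D\Phi$ onto $V_t$, and then run a Brouwer retraction inside the cells. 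The problem is that the margins are exactly borderline: the initial residual of a centered configuration is of size $t\,N^{-1/d}$ (Bernstein--Markov plus the diameter bound), and correcting a residual of that size through the linearization requires point displacements of size comparable to $N^{-1/d}$, i.e.\ comparable to the full cell diameter. There is no room left to absorb the second-order error of the linearization or to guarantee that the retraction maps $\prod_i R_i$ into itself, so the argument as stated does not close; enlarging $c_d$ does not help, because the residual and the allowed displacement scale the same way in $N$.

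The actual proof in \cite{bondarenko2013optimal} sidesteps this by working in the polynomial space rather than in the configuration space: to each mean-zero $P\in V_t$ one associates a configuration $x_i(P)$ obtained by flowing the initial points along the tangential gradient of $P$ for a fixed time, and one applies a Brouwer-degree theorem to the map $P\mapsto F(P)\in V_t$ representing the functional $Q\mapsto \frac1N\sum_{i=1}^N Q(x_i(P))-\frac1{\Omega_d}\int_{\mathbb S^d} Q\,d\omega$. The condition to verify is the sign condition $\langle P,F(P)\rangle>0$ on the boundary of a ball in $V_t$, which says that the gradient flow increases the average of $P$ over the points by more than the initial discrepancy; this is a strictly weaker requirement than quantitative surjectivity of a linearization, and it is what makes a finite constant $c_d$ attainable. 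If you intend to present a proof of this lemma you should either follow that route in full or simply cite the result, as the paper does.
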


The above lemma 
demonstrates the existence  of $c_0$-tight spherical $t$-designs for any $t\in\mathbb N$ and   shows that one can design a spherical $t$-design  $\mathcal T_t$ satisfying  $|\mathcal T_t|\sim t^d$.  In this way, the classical covering result established 
in \cite{yudin1995coverings} showed that $c_0$-tight spherical $t$-designs have a covering radius (or mesh norm) 
$ 
     h_{_{\mathcal T_t}}:= 
                \max_{x\in\mathbb S^d}\min_{x_i\in\mathcal T_t}\arccos(x\cdot x_i)\leq c_1|\mathcal T_t|^{-\frac1d}
$
for some $c_1$ depending only on $d$ and $c_0$, implying that these designs cannot have large holes.  However, as the union of two spherical $t$-designs is a spherical design, $c_0$-tight spherical $t$-designs may have arbitrarily poor separation, i.e., the separation radius $
   q_{_{\mathcal T_t}}:=\frac12\min_{i\neq i'}\arccos(x_i\cdot x_{i'})
$
can be arbitrarily small. The following lemma     derived  in \cite{bondarenko2015well} shows  the existence of  well separated $c_0$-tight spherical $t$-designs.

\begin{lemma}\label{Lemma:Spherica-d-2}
For $t\in\mathbb N$ and $d\geq 2$, there exist   $c_0$-tight spherical $t$-designs satisfying 
  $q_{_{\mathcal T_t}}\geq c_1|\mathcal T_t|^{-1/d}$,
where $c_0$ and $c_1$ are constants depending only on $d$.
\end{lemma}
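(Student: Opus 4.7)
The plan is to combine the existence theorem for $c_0$-tight spherical $t$-designs (Lemma \ref{Lemma:Spherica-d-1}) with a topological perturbation argument on an area-regular partition of the sphere, in the spirit of Bondarenko--Radchenko--Viazovska. Roughly, I want to show that among the many $N$-point $t$-designs guaranteed by Lemma \ref{Lemma:Spherica-d-1} with $N\asymp t^d$, one can always select one whose points are spread out, with pairwise geodesic distance bounded below by a constant multiple of $N^{-1/d}$.

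First, I would fix $N=c_0 t^d$ with $c_0$ slightly larger than the constant $c_d$ of Lemma \ref{Lemma:Spherica-d-1}, so that the existence of $N$-point $t$-designs is guaranteed with some slack. Then I would invoke an area-regular partition $\{R_1,\dots,R_N\}$ of $\mathbb{S}^d$ into $N$ cells of equal area $\Omega_d/N$, each having diameter bounded above and a concentric sub-cell of comparable size bounded below by constants times $N^{-1/d}$. Such partitions are standard (e.g.\ the Leopardi/Feige--Schechtman constructions). The goal is to locate a $t$-design with exactly one point inside each sub-cell; because distinct sub-cells are separated by a geodesic distance $\gtrsim N^{-1/d}$, this will immediately yield $q_{_{\mathcal{T}_t}}\geq c_1 |\mathcal{T}_t|^{-1/d}$ with $c_1$ depending only on $d$.

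To produce such a design, I would set up a continuous map $\Phi$ from the product of sub-cells $R_1^*\times\cdots\times R_N^*$ into the finite-dimensional space $\Pi_t^d$ (modulo constants), sending a configuration $(x_1,\dots,x_N)$ to the signed quadrature error functional
\begin{equation*}
\Phi(x_1,\dots,x_N)(\pi)=\frac{1}{N}\sum_{i=1}^{N}\pi(x_i)-\frac{1}{\Omega_d}\int_{\mathbb{S}^d}\pi(x)\,d\omega(x),\qquad \pi\in\Pi_t^d.
\end{equation*}
A configuration gives a $t$-design precisely when $\Phi=0$. The task is to show that $0$ lies in the image of $\Phi$. Following Bondarenko--Radchenko--Viazovska, I would prove this via a topological fixed-point / degree argument: one starts from an approximate design (for instance, one point from each cell of the equal-area partition yields an $\mathcal O(N^{-1/d})$-approximate quadrature for smooth test functions by a standard cubature error estimate), and then uses Brouwer's fixed-point theorem applied to a suitable perturbation map that corrects the residual $\Phi$ while keeping each $x_i$ inside $R_i^*$. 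The admissibility of the perturbation needs a quantitative estimate: because $\dim\Pi_t^d\asymp t^d$ while the degrees of freedom in the configuration are $dN\asymp t^d$, choosing $c_0$ large enough provides the extra parameters needed to absorb the correction within the sub-cells.

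The main obstacle, and where all of the difficulty is concentrated, is this last quantitative step: producing an explicit correction map whose operator norm on the tangent configuration space is large enough to cancel the approximate-design error $\Phi$ while respecting the sub-cell constraints, uniformly in $t$. This amounts to a non-degenerate Jacobian lower bound for the evaluation map $(x_1,\dots,x_N)\mapsto\bigl(\pi_j(x_i)\bigr)_{i,j}$ on orthonormal bases $\{\pi_j\}$ of $\Pi_t^d$, which follows from Marcinkiewicz--Zygmund type inequalities on the sphere after the partition is sufficiently refined. Once this Jacobian estimate is in place, Brouwer's theorem gives the desired configuration, and the well-separation $q_{_{\mathcal{T}_t}}\gtrsim |\mathcal{T}_t|^{-1/d}$ follows directly from the construction.
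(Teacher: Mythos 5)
First, note that the paper does not prove Lemma \ref{Lemma:Spherica-d-2} at all: it is quoted directly from Bondarenko, Radchenko and Viazovska \cite{bondarenko2015well}, so the only ``proof'' in the paper is a citation. Your proposal is therefore not competing with an in-paper argument but with the cited reference, and what you have written is essentially an outline of that reference's strategy (area-regular partition, one point per sub-cell, a topological zero-finding argument, a quantitative non-degeneracy estimate). As an outline it identifies the right ingredients, but it is not a proof.

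The genuine gap is the step you yourself flag as ``where all of the difficulty is concentrated,'' and it is worse than you suggest. Taking one point per cell of an area-regular partition with $N\asymp t^d$ cells does \emph{not} give a small quadrature residual on $\Pi_t^d$: the standard cubature estimate combined with the Bernstein inequality $\|\nabla\pi\|_\infty\lesssim t\|\pi\|_\infty$ gives $\bigl|\frac1N\sum_i\pi(x_i)-\frac1{\Omega_d}\int_{\mathbb S^d}\pi\,d\omega\bigr|\lesssim t\,N^{-1/d}\|\pi\|_\infty\asymp\|\pi\|_\infty$, i.e.\ the residual is $O(1)$ relative to the size of the polynomial, not $O(N^{-1/d})$. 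Consequently there is no ``small correction'' for a perturbative Brouwer argument to absorb, and a Marcinkiewicz--Zygmund type Jacobian lower bound cannot rescue this on its own: the displacement needed to kill the residual is comparable to the cell diameter, so the constraint that each $x_i$ remain in its sub-cell is exactly borderline rather than slack. This is precisely why \cite{bondarenko2013optimal,bondarenko2015well} do not iterate from an approximate design; they construct a map built from gradient flows of the test polynomials and verify a sign (Brouwer degree) condition on the boundary of the configuration domain, and in the well-separated case they must additionally engineer the partition so that separation survives. Until that device, or an equivalent quantitative substitute, is actually supplied, your argument does not close. For the purposes of this paper it suffices to cite \cite{bondarenko2015well}, as the authors do.
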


Lemma \ref{Lemma:Spherica-d-2} shows that there exist
$c_0$-tight spherical $t$-designs whose points are almost evenly distributed on the sphere, i.e.,  $q_{_{\mathcal T_t}}\sim  h_{_{\mathcal T_t}}\sim |\mathcal  T_t|^{-1/d}$.
 Throughout the paper,   $a\sim b$ for $a,b\in\mathbb R_+$ means that there exists a    constant  $\hat{c}\geq1$  depending only on $d$ such that $\hat{c}^{-1}  a\leq b\leq \hat{c} a$. Compared with  theoretical studies on the existence of $c_0$-tight spherical designs, practitioners are also interested in computing lists of spherical $t$-designs \cite{hardin1993new,chen2006existence,graf2011computation,womersley2018efficient}, most of which were investigated on $\mathbb S^2$. In particular, a  list of near  $1$-tight spherical $t$-designs for $t\leq 21$, $t\leq 100$, and $t\leq 325$ was provided  in \cite{hardin1993new,chen2006existence,womersley2018efficient}, respectively. We refer interested readers to \cite[Sec.2.5]{brauchart2015distributing} for more information on the constructions of $c_0$-tight spherical $t$-designs. Table \ref{RelationNt} exhibits the detailed  $|\mathcal T_{t}|=:N$ for the spherical $t$-design developed in \cite{womersley2018efficient}.
Furthermore, Figure \ref{S2points}  presents a geometrical distribution of different spherical $t$-designs listed in 
  \cite{womersley2018efficient} and shows that the constructed spherical $t$-design is  almost equally spaced on $\mathbb S^2$. 
 
\begin{table}[t]
\centering
\caption{The number of points for some symmetric spherical $t$-designs.}\label{RelationNt}
 \begin{tabular}{|| c || c | c | c | c | c | c | c | c | c ||} 
 \toprule[1.5pt]
 Degree $t$ & 1 & 5 & 9 & 13 & 17 & 21 & 25 & 29 & 33\\
 \hline
\#points $N$ & 2 & 12 & 48 & 94 & 156 & 234 & 328 & 438 & 564 \\ 
  \hline\hline
 Degree $t$ & 39 & 45 & 51 & 57 & 63 & 69 & 75 & 81 & 87 \\
 \hline
\#points $N$ & 782  & 1038 & 1328 & 1656 & 2018 & 2418 & 2852 & 3324 & 3830\\ 
  \hline\hline
 Degree $t$ & 93 & 99 & 105 & 111 & 117 & 123  & 129 &  135 & 141 \\ 
 \hline
\#points $N$ & 4374  & 4952 & 5568 & 6218 & 6906 & 7628 & 8388 & 9182 & 10014  \\ 
 \toprule[1.5pt]
 \end{tabular}
\end{table}

\begin{figure*}[t]
    \centering
    \subfigcapskip=-2pt
    \subfigure[Spherical $t$-designs]{\includegraphics[width=5cm,height=5cm]{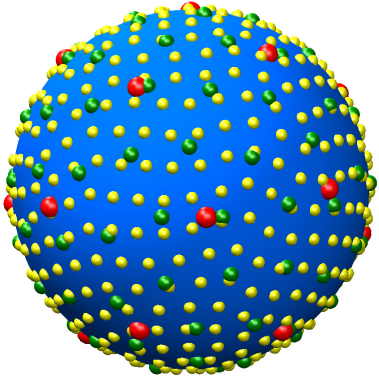}}\hspace{0.5in}
    \subfigure[Symmetric spherical $t$-designs]{\includegraphics[width=5cm,height=5cm]{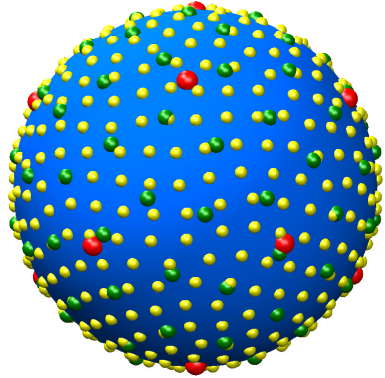}}
    \caption{Spherical $t$-designs with $t=5$, $13$, and $29$ denoted as red, green, and yellow points, respectively. }\label{S2points}
\end{figure*}

%
%The third lemma can be found in \cite{Brauchart2007,Brauchart2015}.
%
%\begin{lemma}\label{Lemma:Spherica-d-3-error}
%If $\Lambda_D$ is a $|D|$-point spherical $s$-design and $f\in \mathcal H_r$, then there exists a  constant $c$ depending only on $d$ and $r$ such that
%\begin{equation}\label{numerical-int}
%          \left|\frac1{|D|}\sum_{x\in \Lambda_{|D|}} P(x)-\int_{\mathbb S^d}P(x)d\omega(x)\right|
%          \leq c|D|^{-r/d}\|f\|_{\mathcal H^r}.
%\end{equation}
%\end{lemma}
%
%The fourth lemma can be found in \cite{Brauchart2015}.
%\begin{lemma}\label{Lemma:Spherica-d-to-mesh}
%Let $r>d/2$. If $\Lambda_D$ is a $|D|$-point spherical $s$-design and satisfies (\ref{numerical-int}), then there exists a constant $c'$ depending only on   $r$ and $d$ such that
%$$
%        h_{\Lambda_D}\leq c'|D|^{-\gamma/d},\qquad\gamma=r/(r+d/2).
%$$
%\end{lemma}
%
% Our next lemma is the $L_2$ sampling inequality for the sphere, which can be found in \cite{Hesse2017}.
%\begin{lemma}\label{Lemma:sampling-inequality}
%Let $d\geq 2$ and $r>d/2$. There exists positive constants $\hat{c}_r$ and $\tilde{c}_r$ (dependent on $d$ and $r$) such that for any finite point set $\Lambda_D=\{x_1,\dots,x_{|D|}\}$ of $|D|$ distinct points on $\mathbb S^d$ with $h_{\Lambda_D}\leq \hat{c}_r$ and for any function $f\in\mathcal H^r$, there holds
%\begin{equation}\label{sampling-inequality}
%   \|f\|_{L^2}\leq\tilde{c}_r\left[h_{\Lambda_D}^r\|f\|_{\mathcal H^r}+h_{\Lambda_D}^{d/2}\left(\sum_{i=1}^{|D|}[f(x_i)]^2\right)^{1/2}\right].
%\end{equation}
%
%\end{lemma}
%

\subsection{{ Sketching} with spherical designs}
% We use the aforementioned $c_0$-tight spherical $t$-designs to equip the widely used spherical basis function (SBF) approach \cite{hesse2017radial}
We adopt the widely used SBF approach \cite{hesse2017radial} on data $D$  whose inputs are selected from the 
$c_0$-tight spherical $t$-design and outputs satisfying \eqref{Model1:fixed}.   Our goal is to 
develop a novel { sketching} strategy for fitting massive and noisy spherical data. 
 For $k\in\mathbb N$, 
%  let $P_k^{d+1}$ be the normalized Legendre polynomial, 
let $P_k^{d+1}$ be the normalized 
generalized-Legendre  polynomial,
 i.e., $P_k^{d+1}(1)=1$ and
$$
       \int_{-1}^1P_k^{d+1}(u)P^{d+1}_j(u)(1-u^2)^{\frac{d-2}2}du
       =\frac{\Omega_{d}}{\Omega_{d-1}Z(d,k)}\delta_{k,j},
$$
where $\delta_{k,j}$ is the usual Kronecker symbol and  $Z(d,k)\sim k^{d-1}$ denotes the dimension of $\mathbb{H}^{d}_k$, the space of spherical harmonics \cite{muller2006spherical} of degree $k$. 
%  It can be found in \cite{muller2006spherical} that the dimension of $\pi_n^d$ is
% $\sum_{k=0}^nd_k^d=d_n^{d+1}\sim n^{d}$. 

We say that a univariate  function $\phi\in L^2[-1,1]$ is an SBF \cite{narcowich2007direct}, if
its Legendre-expansion
$
           \phi(u)=\sum_{k=0}^\infty
            \hat{\phi}_k\frac{Z(d,k)}{\Omega_d} P_k^{d+1}(u)
$ satisfies 
$$
     \hat{\phi}_k:= \Omega_{d-1} \int_{-1}^1P_k^{d+1}(u)\phi(u)(1-u^2)^{\frac{d-2}2}du>0.
$$
{ In addition, if $\sum_{k=0}^\infty\hat{\phi}_k\frac{Z(d,k)}{\Omega_d}<\infty$, then $\phi$ is said to be a positive definite function.}
It is well known that each SBF $\phi$ corresponds to a  native space  
$$
               \mathcal N_\phi:=\left\{f(x)=\sum_{k=0}^\infty\sum_{j=1}^{Z(d,k)}\hat{f}_{k,j}Y_{k,j}(x): \|f\|_\phi^2:=
               \sum_{k=0}^\infty
                  \hat{\phi}_k^{-1}\sum_{j=1}^{Z(d,k)}\hat{f}_{k,j}^2<\infty\right\},
$$
endowed with the inner product $
             \left\langle f,g\right\rangle_{\phi}\hspace{-0.02in}:=\hspace{-0.02in}\sum_{k=0}^\infty
              \hspace{-0.03in}\hat{\phi}_k^{-1}\hspace{-0.03in}\sum_{j=1}^{Z(d,k)}\hspace{-0.03in}\hat{f}_{k,j}\hat{g}_{k,j}
$, where $\{Y_{k,j}\}_{j=1}^{Z(d,k)}$ is an arbitrary orthonormal basis of
$\mathbb H_k^d$ and
$
                 \hat{f}_{k,j}:=\int_{\mathbb
                 S^d}f(x)Y_{k,j}(x)d\omega(x)
$
is the Fourier coefficient of $f$. Without loss of generality, we assume 
\begin{equation}\label{assumption-on-phi}
    0<\hat{\phi}_k\leq 1,\qquad k=0,1,\dots
\end{equation}
  throughout the paper. 
If $\phi$ is a positive definite function, then $\mathcal N_\phi$ is a reproducing kernel Hilbert space with
the
 reproducing kernel
 $ K(x, x')= \phi (x \cdot x')$.

% Regularized least squares \cite{hesse2017radial} based on a positive definite function  $\phi$ is a  popular scheme to tackle the  noisy  data fitting problem \eqref{Model1:fixed}. 
Regularized least squares based on a positive definite function $\phi$ is a popular scheme to approximate $f^*$ in the noisy data model \eqref{Model1:fixed}.
Given
a regularization parameter $\lambda\geq0$, regularized least squares is defined by
\begin{equation}\label{KRR}
    f_{D,\lambda} =\arg\min_{f\in \mathcal N_\phi}
    \left\{\frac{1}{|D|}\sum_{i=1}^{|D|}(f(x_i)-y_i)^2+\lambda\|f\|^2_{\phi}\right\}.
\end{equation}
As an inversion of a $|D|\times |D|$ matrix is involved, 
% the complexity units in storage and training of  (\ref{KRR})  are $\mathcal O(|D|^2)$ and $\mathcal O(|D|^3)$,  respectively.
the storage and computational complexities for solving (\ref{KRR}) are  $\mathcal{O}(|D|^2)$ and $\mathcal{O}(|D|^3)$, respectively.

For $s^*\leq t$, let $\mathcal T_{s^*}=\{x_j^*\}_{j=1}^{m}$ be a { $c_0$-tight spherical $s^*$-design} with $m=|\mathcal T_{s^*}|=c_0(s^*)^d$.
Define
\begin{equation}\label{hypothesis-space-sub}
      \mathcal H_{\mathcal T_{s^*},\phi}:=\left\{\sum_{j=1}^ma_j\phi_{{x}_j^*}:a_j\in\mathbb R\right\},
\end{equation}
where  $\phi_x(\cdot):=\phi(x,\cdot)$.  { Sketching} with spherical  $s^*$-design  is   defined  by
\begin{equation}\label{Nystrom}
       f_{D,s^*,\lambda}:=\arg\min_{f\in\mathcal
       H_{\mathcal T_{s^*},\phi}}\frac1{|D|}\sum_{i=1}^{|D|}(f(x_i)-y_i)^2+\lambda\|f\|_\phi^2.
\end{equation}
Direct computation \cite{rudi2015less} yields
\begin{equation}\label{Analytic-solution}
   f_{D,s^*,\lambda}(\cdot)=\sum_{j=1}^m \alpha_j\phi_{ {x}_j^*}(\cdot),
\end{equation}
where
\begin{equation}\label{Analytic-solution1}
   (\alpha_1,\dots,\alpha_m)^T= \left(\mathbb K_{|D|,m}^T\mathbb K_{|D|,m}+\lambda|D|\mathbb K_{m,m}\right)^\dagger \mathbb K_{|D|,m}^Ty_D,
\end{equation}
$\mathbb A^\dagger$ and $\mathbb A^T$ respectively denote the Moore-Penrose pseudo-inverse and transpose  of a matrix $\mathbb A$,  $(\mathbb K_{|D|,m})_{i,j}=\phi(x_i\cdot {x}^*_j)$, $(\mathbb K_{m,m})_{j,j'}=\phi( {x}^*_j\cdot{x}^*_{j'})$, and $y_D=(y_1,\dots,y_{|D|})^T$. Therefore, to solve \eqref{Nystrom}, it only requires $\mathcal O(|D|m)$ and $\mathcal O(|D|m^2)$ complexities in storage and computation, respectively, which significantly reduces the storage and computational complexities for solving the regularized least squares (\ref{KRR}), especially when $m$ is much less than $|D|$. 
The aim of this paper is to show that such a reduction { in computation}  does not sacrifice the fitting performance of regularized least squares. 

In the classical scattered data fitting community, the separation radius is crucial in reflecting the stability of kernel interpolation \cite{narcowich1998stability} while the mesh norm is key in the quality of approximation \cite{narcowich2002scattered,narcowich2007direct}. Thus, a quasi-uniform point set for which the ratio of mesh norm to separation radius is an absolute constant is  preferable for analysis. To finalize this section, we present our reasons why we focus on { $c_0$-tight spherical $t$-designs} rather than quasi-uniform points on the sphere. On one hand, since there is a regularization parameter $\lambda$  involved in our algorithm \eqref{Nystrom}, it is not necessary to assume the separation radius to be comparable with the mesh norm. In particular, our theoretical analysis only requires the equal-weight quadrature rule and a tight number of quadrature points,    which is satisfied by  the { $c_0$-tight spherical $t$-designs.} On the other hand, if  the quasi-uniform points set is adopted, then the classical { sketching} algorithm \eqref{Nystrom} \cite{rudi2015less,lu2019analysis}     should be modified to accommodate the different-weights  quadrature rules \cite{brown2005approximation} 
to guarantee the theoretically optimal approximation error.  It would be interesting to design a similar { sketching} strategy as \eqref{Nystrom} for quasi-uniform point sets and derive the same optimal approximation error estimate as the derived results for spherical designs in the next section.

\section{Theoretical Behaviors}\label{Sec.theory}
There are totally three SBFs involved in our theoretical  analysis. The first one is the positive
definite function $\phi$, which  is used  to produce the estimator in (\ref{Nystrom}). The second one is 
 $\psi(\cdot)=\sum_{k=0}^\infty
            \hat{\psi}_k\frac{Z(d,k)}{\Omega_d} P_k^{d+1}(\cdot)$   satisfying
\begin{equation}\label{kernel-relation}
     \hat{\psi}_k=\hat{\phi}_k^{\beta}, \qquad  0\leq \beta\leq 1,
\end{equation}
and it corresponds to a native space $\mathcal N_\psi$ where our analysis is carried out. 
Due to (\ref{assumption-on-phi}), (\ref{kernel-relation}) yields $\hat{\phi}_k \leq \hat{\psi}_k $  and consequently $\mathcal N_\phi\subseteq \mathcal N_\psi$. It should be mentioned that
$\beta=0$ in  (\ref{kernel-relation}) implies $\mathcal N_\psi=L^2(\mathbb S^d)$, while $\beta=1$ yields $\mathcal N_\psi=\mathcal N_\phi$. The last one is  $\varphi(\cdot)=\sum_{k=0}^\infty
            \hat{\varphi}_k\frac{Z(d,k)}{\Omega_d} P_k^{d+1}(\cdot)$   satisfying 
\begin{equation}\label{kernel-relation-1}
     \hat{\varphi}_k=\hat{\phi}_k^{\alpha}, \qquad  \alpha\geq \beta,
\end{equation}
and it is used to quantify the regularity of $f^*$ in terms of $f^*\in \mathcal N_\varphi$.

If $\alpha\geq 1$, then (\ref{kernel-relation-1}) together with (\ref{assumption-on-phi})
implies  $\mathcal N_\varphi\subseteq \mathcal N_\phi$, showing that $f^*$ is in the native space of $\mathcal N_\phi$. 
For $\beta\leq  \alpha< 1$, we have $\mathcal N_\phi\subset  \mathcal N_\varphi\subseteq \mathcal N_\psi$. 
We refer to the former as the in-native-space setting, while the latter as the out-of-native-space setting.

It should be mentioned that the analysis of the fitting performances of SBF interpolation or approximation is  different \cite{narcowich2007direct} for different settings. In particular, the fitting  error   for algorithm (\ref{KRR}) with $\lambda\geq0$ has been established in \cite{narcowich2002scattered,le2006continuous,hesse2017radial}  for  the in-native-space setting.  
However, for the out-of-native-spacing setting, it  usually requires more technical skills to conquer the native-space barrier, just as \cite{narcowich2007direct} did via a Bernstein-type inequality for SBF.
In this paper, we show that the analysis can be unified by using a recently developed integral operator approach on the sphere \cite{Feng2021radial}. 
 The following theorem presents an upper bound estimate for the { sketching} strategy (\ref{Nystrom}).
 
 \begin{theorem}\label{Theorem:native-out}
Let $0<\delta<1$, $D$ be the data set satisfying \eqref{Model1:fixed}  with $\{\varepsilon_i\}_{i=1}^{|D|}$ being i.i.d. zero-mean random variables satisfying $|\varepsilon|\leq M$,
$\mathcal T_{s^*}$ be a $c_0$-tight spherical $s^*$-design, $\phi$ be a positive
definite function, and $\psi$ and $\varphi$ be SBFs satisfying (\ref{kernel-relation}) and (\ref{kernel-relation-1}) with
  $0\leq \beta\leq 1$  and  $0\leq\alpha-\beta\leq 1$.
If $\hat{\phi}_k\sim k^{-2\gamma}$ with $\gamma>d/2$, $f^*\in\mathcal N_{\varphi}$,  $\lambda\sim|D|^{-\frac{2\gamma}{2\gamma(\alpha-\beta)+d}}$,
and
\begin{equation}\label{restriction-on-s*}
      c|D|^{\frac{2}{2\gamma(\alpha-\beta)+d}}\leq s^*\leq t,
\end{equation}
 then with confidence $1-\delta$, there holds
\begin{equation}\label{approxiamtion-error}
    \|J_{\phi,\psi}f_{D,s^*,\lambda}-f^*\|_\psi \leq C|D|^{-\frac{\gamma(\alpha-\beta)}{2\gamma(\alpha-\beta)+d}}\log\frac3\delta,
\end{equation}
where  $J_{\phi,\psi}:\mathcal N_\phi\rightarrow \mathcal N_\psi$ is the Canonical inclusion, and
$c$ and $C$ are constants independent of $|D|$, $t$, $\delta$, $\lambda$ and $s^*$.
\end{theorem}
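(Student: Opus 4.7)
The plan is to use the integral operator approach on the sphere to unify the in-native and out-of-native settings. Let $L_\phi : L^2(\mathbb{S}^d) \to L^2(\mathbb{S}^d)$ be the convolution operator with kernel $\phi(x\cdot y)$; its eigenvalues are $\hat{\phi}_k$ with eigenspaces $\mathbb{H}_k^d$. Then $L_\psi = L_\phi^{\beta}$ and $L_\varphi = L_\phi^{\alpha}$ by (\ref{kernel-relation})--(\ref{kernel-relation-1}), so the norms relate as $\|g\|_\psi = \|L_\phi^{-\beta/2} g\|_{L^2}$ and $f^* \in \mathcal{N}_\varphi$ means $L_\phi^{-\alpha/2} f^* \in L^2$. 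The natural variables of the analysis are $L_\phi^{-\beta/2}$-images, after which everything becomes a bound in $L^2$-like norms and the ``native space barrier'' disappears because we never need $f^* \in \mathcal{N}_\phi$ ($\alpha \geq \beta$ suffices).

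Next I introduce two empirical/sketched operators: the sampling operator $S_D : \mathcal{N}_\phi \to \mathbb{R}^{|D|}$, $(S_Df)_i = |D|^{-1/2} f(x_i)$, and the projection $P_m$ in $\mathcal{N}_\phi$ onto $\mathcal{H}_{\mathcal{T}_{s^*},\phi}$. The representer in (\ref{Analytic-solution}) shows $f_{D,s^*,\lambda} = (P_m S_D^* S_D P_m + \lambda P_m)^{\dagger} P_m S_D^* y_D$. Inserting the population regularized function $f_\lambda := (L_\phi + \lambda I)^{-1} L_\phi f^*$ and its un-sketched empirical counterpart, I split
\begin{equation*}
J_{\phi,\psi} f_{D,s^*,\lambda} - f^* = \underbrace{J_{\phi,\psi}(f_{D,s^*,\lambda} - f_{D,\lambda})}_{\text{sketching}} + \underbrace{J_{\phi,\psi}(f_{D,\lambda} - f_\lambda)}_{\text{sample}} + \underbrace{J_{\phi,\psi} f_\lambda - f^*}_{\text{bias}}.
\end{equation*}

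The bias term is handled by the spectral calculus for $L_\phi$: since $f^* = L_\phi^{\alpha/2} g$ with $g \in L^2$, standard functional calculus gives $\|J_{\phi,\psi} f_\lambda - f^*\|_\psi \lesssim \lambda^{(\alpha-\beta)/2} \|g\|_{L^2}$. For the sample term I use a Hilbert-space Bernstein inequality to control $\|L_\phi^{-\beta/2}(f_{D,\lambda}-f_\lambda)\|_{L^2}$ via the norms of $L_\phi^{1/2}(L_\phi+\lambda I)^{-1/2}$ and the effective dimension $\mathrm{tr}(L_\phi(L_\phi+\lambda I)^{-1}) \lesssim \lambda^{-d/(2\gamma)}$ coming from $\hat{\phi}_k \sim k^{-2\gamma}$; this yields the variance bound $\lambda^{-\beta/2}(M/\sqrt{|D|\lambda^{d/(2\gamma)}} + \cdots)\log(3/\delta)$, which balances the bias exactly at the claimed rate under the stated choice of $\lambda$.

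The hard part will be the sketching term. The key tool is that for the $c_0$-tight $s^*$-design $\mathcal{T}_{s^*}$, the equal-weight quadrature $Q_{s^*}(F) = \frac{\Omega_d}{m}\sum_{j=1}^m F(x_j^*)$ is exact on $\Pi_{s^*}^d$, so truncating $\phi$ at degree $\lfloor s^*/2\rfloor$ and exploiting $\hat{\phi}_k \sim k^{-2\gamma}$ gives $\|L_\phi - L_{\phi,m}\|_{\mathcal{N}_\phi \to \mathcal{N}_\phi} \lesssim (s^*)^{-2\gamma+d}$ where $L_{\phi,m}$ denotes the sketched/discretized operator associated with $\mathcal{H}_{\mathcal{T}_{s^*},\phi}$. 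Plugging this into a second-order perturbation bound for the regularized inverse and transferring through $J_{\phi,\psi}$ produces a sketching error of order $(s^*)^{-(2\gamma-d)(\alpha-\beta)/(2\gamma)} \cdot \lambda^{(\alpha-\beta)/2 - 1}$-type, and the condition $s^* \geq c|D|^{2/(2\gamma(\alpha-\beta)+d)}$ in (\ref{restriction-on-s*}) is precisely what makes this term match the bias--variance rate $|D|^{-\gamma(\alpha-\beta)/(2\gamma(\alpha-\beta)+d)}$. Combining the three bounds with a single union-bound over $\delta$ yields (\ref{approxiamtion-error}).
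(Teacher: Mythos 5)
Your overall framework---integral operators, the source condition $f^*=L_{\phi,\psi}^{(\alpha-\beta)/2}h^*$, and a bias/variance/sketching split---is the same family of argument the paper uses, but your specific decomposition and your key perturbation estimate both have problems. You insert the un-sketched empirical estimator $f_{D,\lambda}$ and the population quantity $f_\lambda$, so your ``sketching'' term is $f_{D,s^*,\lambda}-f_{D,\lambda}$. Both of these estimators contain the noisy data $y_D$; this difference is a difference of two regularized inverses (one projected, one not) applied to $S_D^Ty_D$, and to bound it you must control how the projection defect $I-P_{D_m}$ interacts with the noise. Your proposal never does this: ``a second-order perturbation bound for the regularized inverse'' is not enough, because the perturbation multiplies a random vector whose preconditioned norm is itself only of order $\lambda^{-d/(4\gamma)}|D|^{-1/2}$ rather than $O(1)$, and the cross terms have to be tracked. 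The paper sidesteps this by comparing the sketched empirical estimator to the \emph{sketched population} quantity $f^\diamond_{D_m,\lambda}=J_{\phi,\psi}\,g_{D_m,\lambda}(\mathcal L_\phi)J_{\phi,\psi}^Tf^*$ (Proposition \ref{Proposition:Error-decomposition-out}), so that the noise enters exactly one term, $\mathcal S_1$, which is tamed by the uniform operator bound of Lemma \ref{Lemma:property-g}.

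The more decisive gap is quantitative. Even granting your quadrature estimate $\|L_\phi-L_{\phi,m}\|_{\mathcal N_\phi\to\mathcal N_\phi}\lesssim (s^*)^{-2\gamma+d}$, keeping the sketching error at the target rate through a resolvent perturbation requires this quantity to be $\lesssim\lambda$, i.e.\ $s^*\gtrsim\lambda^{-1/(2\gamma-d)}$. The theorem only assumes $s^*\gtrsim\lambda^{-1/\gamma}$ (this is what \eqref{restriction-on-s*} becomes under $\lambda\sim|D|^{-2\gamma/(2\gamma(\alpha-\beta)+d)}$), and for $d/2<\gamma<d$ one has $\lambda^{-1/(2\gamma-d)}\gg\lambda^{-1/\gamma}$, so your route fails to prove the stated result precisely in that regime. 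The quantity that actually drives the paper's proof is the \emph{preconditioned} difference $\|(\mathcal L_\phi+\lambda I)^{-1/2}(L_{\phi,D_m}-\mathcal L_\phi)\|_{\phi\to\phi}\lesssim\lambda^{-1/2}(s^*)^{-\gamma}$ from Lemma \ref{Lemma:operator-diff}; requiring its square to be $O(1)$ gives $(s^*)^{\gamma}\gtrsim\lambda^{-1}$, which is exactly \eqref{restriction-on-s*}. Your bias term and your sample-error bound via the effective dimension $\lambda^{-d/(2\gamma)}$ are consistent with the paper's $\mathcal A(D,\lambda,m)$ and $\mathcal P_{D,\lambda}$ and are fine in spirit, but the sketching component needs to be reorganized around the preconditioned operator differences before the argument closes.
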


Instead of assuming $f^*\in\mathcal N_\phi$ \cite{hesse2017radial,Feng2021radial},  our analysis in Theorem \ref{Theorem:native-out}  is carried out  for $f^*\in\mathcal N_\varphi$ with $0\leq \alpha-\beta\leq 1$. Such an analysis avoids the classical native-space barrier \cite{narcowich2007direct} via showing the same format of error bounds in (\ref{approxiamtion-error}) for all $\alpha$ satisfying $0\leq \alpha-\beta\leq 1$.  
In particular, if $\beta=0$ and $0<\alpha<1$, then (\ref{approxiamtion-error})  reduces to  
\begin{equation}\label{Appro-error-native-out}
    \|J_{\phi,\psi}f_{D,s^*,\lambda}-f^*\|_{L^2(\mathbb S^d)} \leq C|D|^{-\frac{\alpha\gamma}{2\alpha\gamma+d}}\log\frac3\delta,
\end{equation}
showing that the proposed { sketching} strategy  (\ref{Nystrom}) is also effective  even when $f^*\notin \mathcal N_\phi$. If we set $\beta=1$ and $1<\alpha\leq 2$, then (\ref{approxiamtion-error}) reduces to 
\begin{equation}\label{Appro-error-native-in}
    \|f_{D,s^*,\lambda}-f^*\|_{\phi} \leq C|D|^{-\frac{(\alpha-1)\gamma}{2(\alpha-1)\gamma+d}}\log\frac3\delta,
\end{equation}
presenting   Sobolev-type error estimates for noisy data fitting. Since we only impose   $0\leq \beta\leq 1$ and $0\leq \alpha-\beta\leq 1$ in our analysis,   Theorem \ref{Theorem:native-out} covers numerous $(\psi,\varphi)$-pairs adapting to different error analysis frameworks \cite{narcowich2002scattered,levesley2005approximation,le2006continuous,narcowich2007direct,hangelbroek2010kernel,hangelbroek2011kernel,hesse2017radial,zhou2018deep,gao2020multivariate,lin2021distributed,an2021lasso}.

If we set $s^*=t$, then the { sketching} strategy (\ref{Nystrom}) becomes the classical regularized least squares \eqref{KRR}. Theorem \ref{Theorem:native-out} shows that with confidence $1-\delta$, there holds
\begin{equation}\label{approxiamtion-error-RLS}
    \|J_{\phi,\psi}f_{D,\lambda}-f^*\|_\psi \leq C|D|^{-\frac{\gamma(\alpha-\beta)}{2\gamma(\alpha-\beta)+d}}\log\frac3\delta.
\end{equation} 
Compared with  \cite{Feng2021radial}, the derived error estimate is novel in the sense that we provide a larger range of $\alpha$ and $\beta$  than $\alpha=1$ and $\beta=0$. This is  non-trivial since our result is available in the out-of-native-space setting.   Noting (\ref{restriction-on-s*}) and $m=|\mathcal T_{s^*}|=c_0(s^*)^d$, we have $m\geq c_1|D|^{\frac{2d}{2\gamma(\alpha-\beta)+d}}$ for $c_1=c_0c^d$. This implies that the size of { sketching} depends on $\gamma$, $\alpha$, and $\beta$. If $\alpha-\beta=1$, then $\gamma>d/2$ naturally yields a computation-reduction of the { sketching} strategy. In general, $\gamma$ should satisfy $\gamma>d/(2(\alpha-\beta))$.

 Finally, comparing the derived estimate (\ref{approxiamtion-error-RLS})   with the results in \cite{hesse2017radial} with $\alpha=1$ and $\beta=0$, the approximation rate of an order $|D|^{-\frac{\gamma}{2\gamma+d}}$ is 
worse than $|D|^{-\frac{\gamma}{d}}$. This seems that the computation-reduction is   built upon a sacrifice of fitting performance at first glance. In the following theorem, we show that the derived error bounds in Theorem \ref{Theorem:native-out} cannot be essentially improved.

\begin{theorem}\label{Theorem:lower-bound}
 Let $\mathcal M$ be  the set of  zero-mean distributions with uniform bound $M$,   $D$ be the data set satisfying \eqref{Model1:fixed}  with $\{\varepsilon_i\}_{i=1}^{|D|}$ drawn i.i.d. according to a distribution in $\mathcal M$, $f_D$ be an arbitrary function derived from $D$, $\phi$, $\psi$, and $\varphi$ be SBFs satisfying (\ref{kernel-relation}) and (\ref{kernel-relation-1}) with $0\leq \beta\leq 1$ and $\alpha-\beta\geq 0$ {  
and $\hat{\phi}_k \tilde k^{-2\gamma}$ with $\gamma >d/2$. Then there exists a $\rho^*\in \mathcal M$ and an $f^*_{bad}\in\mathcal N_{\varphi}$  with $\|f_{bad}^*\|_\varphi\leq U$ for some     $U>0$ such that
\begin{eqnarray*}
   \mathbf  P_{\rho^*}\left[\|f_D-f^*_{bad}\|_{\psi}\geq C_1|D|^{-\frac{\gamma(\alpha-\beta)}{2\gamma(\alpha-\beta)+d}}\right]
    \geq
    \frac{1}{4},
\end{eqnarray*}
where $\mathbf P_{\rho^*}$ denotes the probability with respect to the distribution $\rho^*$, and $C_1$ is a constant  independent of $|D|$, $\delta$ and $t$}.
\end{theorem}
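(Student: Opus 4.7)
My plan is to prove the lower bound by a standard information-theoretic (Fano-type) reduction to a multiple-hypothesis testing problem, tailored to the spherical design setting. I would construct a finite family $\{f_w\}_{w\in W}\subset\mathcal N_\varphi$ of candidate regression functions that is simultaneously well separated in $\|\cdot\|_\psi$ and statistically nearly indistinguishable under the observation model \eqref{Model1:fixed}, and then invoke Fano's inequality in Tsybakov's form to convert these two ingredients into the claimed probability bound. The critical frequency is $L^*\sim|D|^{1/(2\gamma(\alpha-\beta)+d)}$.

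For the packing, let $\{Y_{k,j}\}$ be the usual spherical harmonic basis and let $\Lambda$ be the index set of pairs $(k,j)$ with $k\in[L^*/2,L^*]$, so that $n=|\Lambda|\sim (L^*)^d$. The Varshamov--Gilbert lemma supplies a code $W\subset\{-1,+1\}^n$ with $|W|\geq 2^{n/8}$ and pairwise Hamming distance at least $n/8$. Define
\begin{equation*}
   f_w(x)=\epsilon_{|D|}\sum_{(k,j)\in\Lambda}w_{k,j}\,Y_{k,j}(x),\qquad \epsilon_{|D|}\sim (L^*)^{-\gamma\alpha-d/2},
\end{equation*}
with scaling calibrated by $\|f_w\|_\varphi^2\sim\epsilon_{|D|}^2\sum_{k,j}\hat\varphi_k^{-1}\sim\epsilon_{|D|}^2(L^*)^{2\gamma\alpha+d}\leq U^2$, using $\hat\varphi_k\sim k^{-2\gamma\alpha}$. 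This guarantees $\{f_w\}\subset\mathcal N_\varphi$ with norm bounded by the prescribed radius $U$.

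The two quantitative ingredients are then checked as follows. Using $\hat\psi_k\sim k^{-2\gamma\beta}$, the Hamming-distance lower bound, and orthonormality one obtains
\begin{equation*}
   \|f_w-f_{w'}\|_\psi^2\gtrsim\epsilon_{|D|}^2(L^*)^{2\gamma\beta}\cdot n\sim (L^*)^{-2\gamma(\alpha-\beta)}\sim |D|^{-\frac{2\gamma(\alpha-\beta)}{2\gamma(\alpha-\beta)+d}},
\end{equation*}
which already pins down the target separation radius. For the information side, select a noise law $\rho^*\in\mathcal M$ with tractable log-likelihood, e.g.\ a two-point sign-flip with probability $\tfrac12+f_w(x_i)/(2M)$ or a bounded smoothed Gaussian, yielding the Gaussian-type bound $\mathrm{KL}(\mathbf P_w^{|D|}\,\|\,\mathbf P_{w'}^{|D|})\lesssim M^{-2}\sum_{i=1}^{|D|}(f_w(x_i)-f_{w'}(x_i))^2$. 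The spherical-design hypothesis is then critical: because $(f_w-f_{w'})^2\in\Pi_{2L^*}^d$, the exact-quadrature identity \eqref{Spherical-design} (applicable as soon as $2L^*\leq t$) collapses the empirical sum to a true $L^2$ integral, giving $|D|^{-1}\sum_i(f_w-f_{w'})^2(x_i)=\Omega_d^{-1}\|f_w-f_{w'}\|_{L^2(\mathbb S^d)}^2\lesssim\epsilon_{|D|}^2 n\sim (L^*)^{-2\gamma\alpha}$. Thus $\mathrm{KL}\lesssim |D|(L^*)^{-2\gamma\alpha}\lesssim (L^*)^d\sim\log|W|$, where the second inequality follows from $|D|\leq (L^*)^{2\gamma\alpha+d}$, itself a consequence of $\alpha\geq\alpha-\beta$ and the choice of $L^*$. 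Standard Fano/Tsybakov then produces $w^*\in W$ and the desired $f_{\mathrm{bad}}^*:=f_{w^*}$.

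The step I expect to be the principal obstacle is the compatibility between the packing resolution and the design degree: quadrature exactness forces $2L^*\leq t$, and this is precisely what lets the calculation proceed in fixed-design form. Since $|D|=|\mathcal T_t|\sim t^d$ and $L^*\sim t^{d/(2\gamma(\alpha-\beta)+d)}$, the requirement $2L^*\leq t$ is automatic in the nontrivial regime $\alpha>\beta$ because $d/(2\gamma(\alpha-\beta)+d)<1$, but this should be flagged explicitly. A secondary technical point is calibrating $\rho^*\in\mathcal M$ so that the KL bound is genuinely quadratic while respecting the $|\varepsilon_i|\leq M$ constraint; the Bernoulli sign-flip is the cleanest choice, but demands $\|f_w\|_\infty\leq M$, which I would enforce either by tuning $U$ whenever $\mathcal N_\varphi\hookrightarrow L^\infty$ (i.e.\ $\gamma\alpha>d/2$) or by switching to a bounded smoothed noise whose KL divergence admits the same quadratic upper bound for small shifts.
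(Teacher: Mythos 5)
Your proposal is correct and follows essentially the same route as the paper: a Varshamov--Gilbert packing of a single frequency band calibrated to $L^*\sim|D|^{1/(2\gamma(\alpha-\beta)+d)}$, separation measured in $\|\cdot\|_\psi$, a Kullback--Leibler bound collapsed to an $L^2$ quantity via the exactness of the spherical $t$-design, and a Fano/Tsybakov reduction (the paper uses \cite[Lemma 20]{fischer2020sobolev} and bands $[n+1,2n]$ with $\{0,1\}$-strings weighted by $\hat\phi_k^{\beta/2}$, which matches your construction up to normalization). The one substantive difference is your choice of a bounded noise law respecting $|\varepsilon_i|\leq M$, where the paper instead takes $\varepsilon_i\sim\mathcal N(0,M^2)$; on this point your version is actually the more careful one, since the theorem restricts the noise to the class $\mathcal M$ of uniformly bounded distributions, and your flagged degree condition for quadrature exactness of the squared difference is likewise stated more precisely than the paper's $n\leq t/2$.
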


Theorem \ref{Theorem:lower-bound} shows that the fitting error derived  in \eqref{approxiamtion-error} is optimal in a probability sense.
Combining Theorem \ref{Theorem:native-out} with Theorem \ref{Theorem:lower-bound}, 
% we obtain that the proposed { sketching} strategy (\ref{Nystrom}) does not degenerate the approximation capability of regularized least squares (\ref{KRR}), while significantly reduces the computational burden. 
we obtain that the proposed { sketching} strategy (\ref{Nystrom}) maintains the approximation capability of regularized least squares (\ref{KRR})
while significantly reducing the computational burden.
This makes (\ref{Nystrom}) available for massive and noisy data fitting problems on spheres. Furthermore, 
% the reason of the fitting performance degeneration, compared with \cite{hesse2017radial},    is due to the large noise in the fitting problem (\ref{Model1:fixed}). 
the reason for the degradation in fitting performance
is the large amount of noise in the model (\ref{Model1:fixed})
when compared with \cite{hesse2017radial}.
In fact, the magnitude of noise  $M$ in (\ref{Model1:fixed}) can be comparable with $\|f^*\|_\varphi$, which is far beyond the scope of \cite{hesse2017radial}. As shown in our proof, the accommodation of large noise leaves a large room for   { sketching}, showing that larger noise admits smaller $s^*$ for { sketching}  with spherical $s^*$-designs. The restriction of $s^*$ in (\ref{restriction-on-s*}) is presented for the worst-case analysis since we do not give any lower bound on the magnitude of noise.

\section{Numerical Verifications}\label{Sec.Numerical}

In this section, some numerical results are reported to verify our theoretical statements. Three { sketching} methods are employed for comparisons. The first method chooses the first $m$ samples from the training set for the { sketching} set (denoted by First). The second method randomly chooses $m$ samples from the training set for the { sketching} set (denoted by Random). These two methods have been widely used for kernel learning with random samples \cite{rudi2015less,lu2019analysis}  in Euclidean space  and provide  baselines for our analysis.
The third method uses Womersley's symmetric spherical $s^*$-designs with $m=(s^*)^2/2+s^*/2+O(1)$ points \cite{womersley2018efficient}
\footnote{https://web.maths.unsw.edu.au/\%7Ersw/Sphere/EffSphDes/} for { sketching} set (denoted by $s^*$-designs). 

We start by introducing two testing functions. The first function is the Franke function modified by Renka \cite[p. 146]{renka1988multivariate},
\begin{eqnarray}
f_1(x) &=& 0.75\exp(-(9x^{(1)}-2)^2/4-(9x^{(2)}-2)^2/4-(9x^{(3)}-2)^2/4) \nonumber\\
            &+&   0.75\exp(-(9x^{(1)}+1)^2/49-(9x^{(2)}+1)/10-(9x^{(3)}+1)/10) \nonumber\\
            &+&   0.5\exp(-(9x^{(1)}-7)^2/4-(9x^{(2)}-3)^2/4-(9x^{(3)}-5)^2/4) \nonumber\\
            &-&   0.2\exp(-(9x^{(1)}-4)^2-(9x^{(2)}-7)^2-(9x^{(3)}-5)^2),
\end{eqnarray}
where ${x}=(x^{(1)}, x^{(2)}, x^{(3)})^T$.
The second function is constructed via the well known Wendland function \cite{chernih2014wendland}
\begin{equation}
\tilde{\psi}(u) = (1-u)_{+}^8(32u^3+25u^2+8u+1),
\end{equation}
where $u_+ = \max\{u, 0\}$, and it is defined by
\begin{equation}
f_2({x}) = \sum\limits_{i=1}^{20} \tilde{\psi}(\|{x}-{z}_i\|_2),
\end{equation}
where ${z}_i$ ($i=1,\cdots,20$) are the center points of the regions of an equal area partitioned by Leopardi's recursive zonal sphere partitioning
procedure \cite{leopardi2006partition} \footnote{http://eqsp.sourceforge.net}. It should be noted that $f_1$ is an extremely smooth  spherical function,  while $f_2$  is in the Sobolev space $W^r(\mathbb S^d)$ with $r=4.5$ \cite{lin2021distributed}. 

In the simulations, the inputs $\{{x}_i\}_{i=1}^N$ of training samples are generated by Womersley's symmetric spherical $141$-designs, which includes $10014$ points on the unit sphere. The corresponding outputs $\{y_i\}_{i=1}^N$
are generated by the function $f_j$ ($j=1,2$) plus { truncated Gaussian} noise, i.e., for each point $x_i$,
\begin{equation}\label{Model1:fixed1}
y_i = f_j(x_i) + \varepsilon_i,
\end{equation}
 { where $\varepsilon_i$ is the independent   truncated Gaussian  noise $\mathcal{N}(0, \delta^2)$, i.e., $\varepsilon_i$ is initially generated by the Gaussian noise $\mathcal{N}(0, \delta^2)$ and then truncated to $[-10, 10]$}. The inputs $\{x_i'\}_{i=1}^{N'}$  of testing samples are  { $N'=10000$ generalized
spiral points on the unit sphere}, and the corresponding outputs $\{y_i'\}_{i=1}^{N'}$ are generated by $y_i'=f_j(x_i')$. For the approximation of testing function
$f_1$, we use the positive
definite function
$\phi_1(x_1, x_2)=\exp(-\frac{\|x_1 - x_2\|_2^2}{2\sigma^2})$ with the regularization parameter $\lambda$ being chosen from the set
$\{\frac{1}{2^q} | \frac{1}{2^q}>10^{-10}, q=0,1,2,\cdots\}$  and the  width $\sigma$ being chosen from 10 values that are drawn in a logarithmic, equally spaced interval
$[0.1, 1]$ \footnote{For the noise-free training data, i.e., $\delta=0$, the  width $\sigma$ is chosen from 10 values that are drawn in logarithmic,
equally spaced interval $[0.028, 0.28]$.}.  For the approximation of testing function $f_2$, the positive
definite function is defined as $\phi_2(x_1, x_2)=\tilde{\psi}(\|x_1 - x_2\|_2)$ with the regularization parameter $\lambda$ being  chosen from the
set $\{\frac{1}{1.5^q} | \frac{1}{1.5^q}>10^{-10}, q=0,1,2,\cdots\}$.
All parameters in the simulations are selected by grid search.

\begin{figure*}[t]
    \centering
    \subfigcapskip=-2pt
    \subfigure{\includegraphics[width=3.15cm,height=2.45cm]{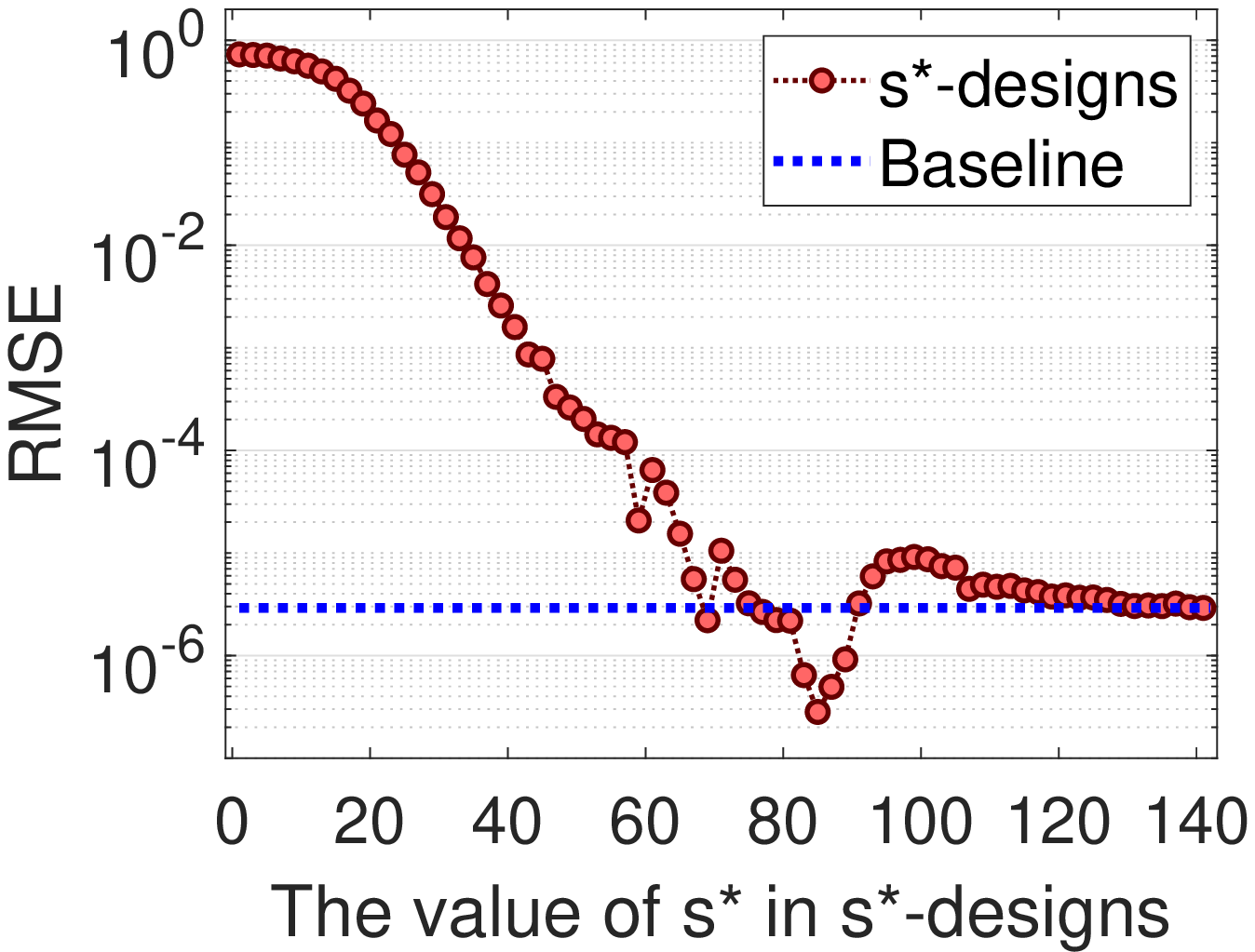}}
    \subfigure{\includegraphics[width=3.15cm,height=2.45cm]{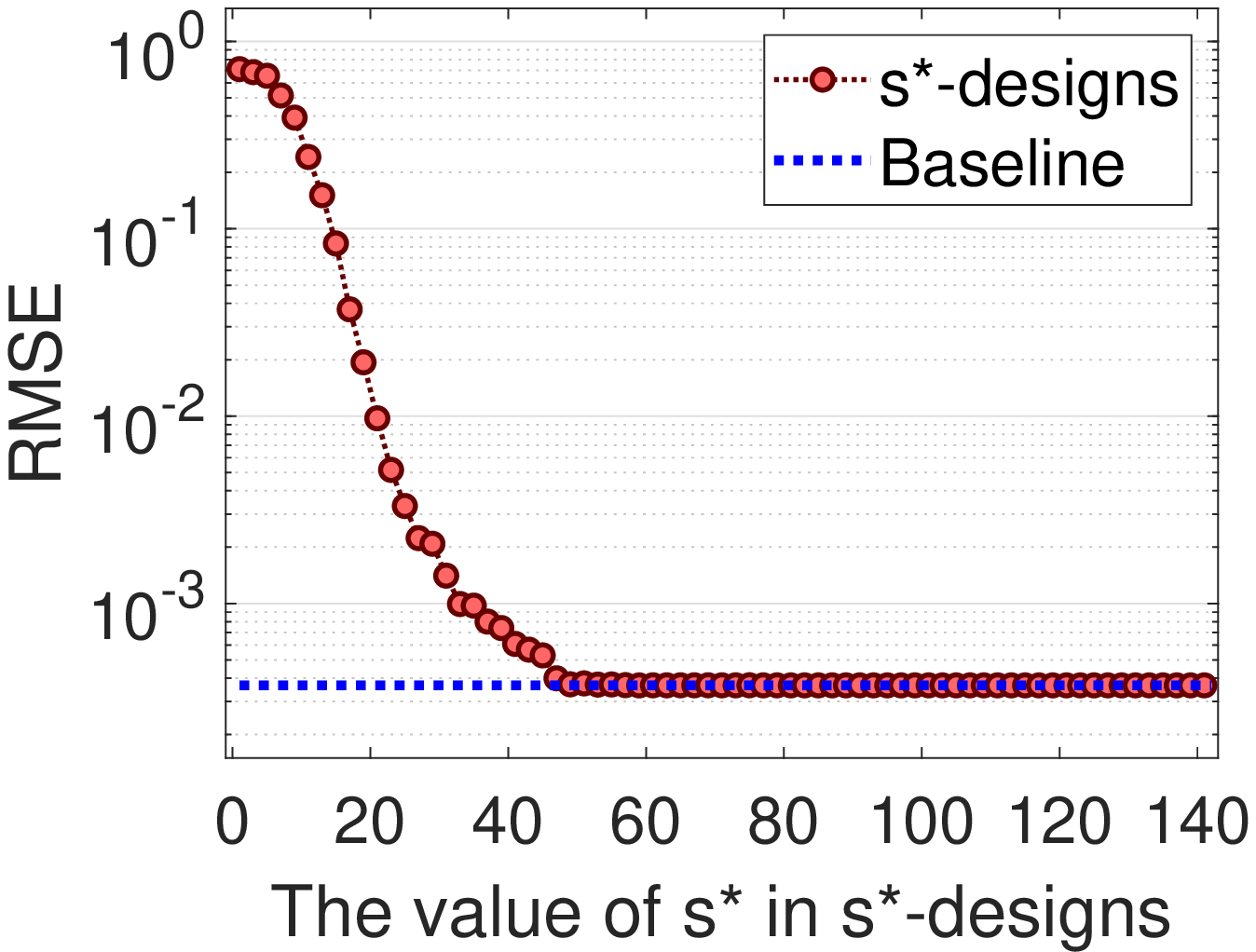}}
    \subfigure{\includegraphics[width=3.15cm,height=2.45cm]{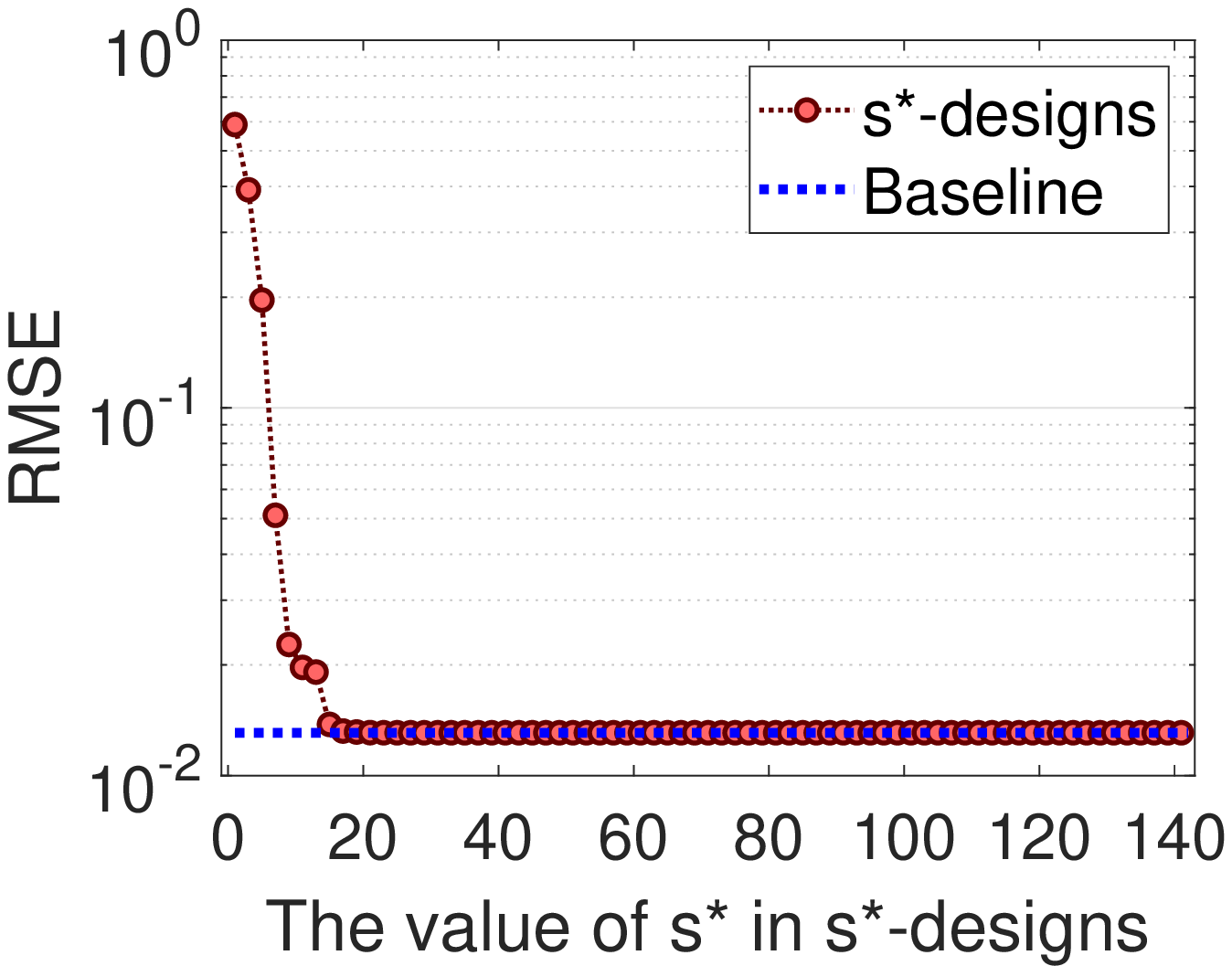}}
    \subfigure{\includegraphics[width=3.15cm,height=2.45cm]{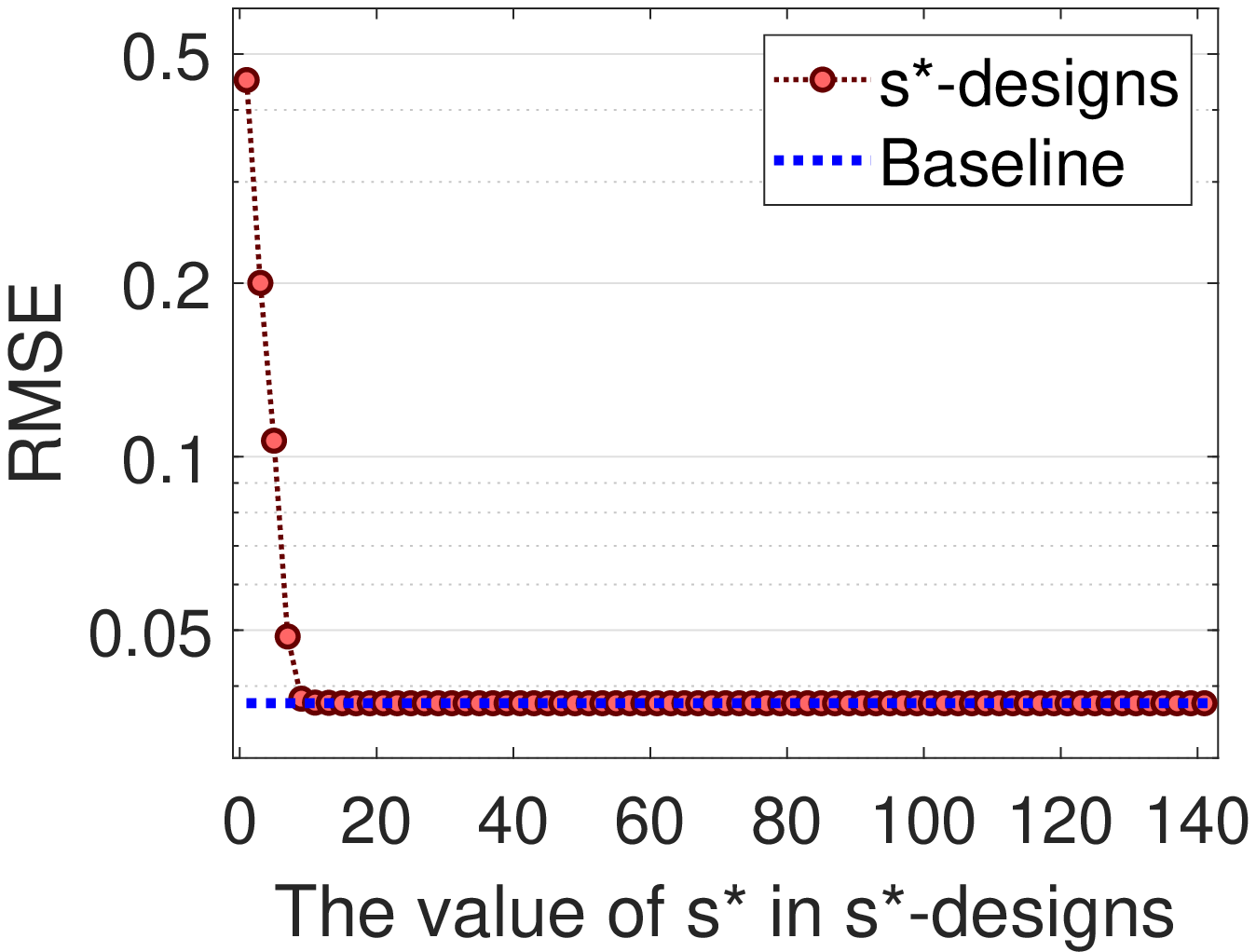}}\\
    \vspace{-0.08in}
    \setcounter{subfigure}{0}
    \subfigure[noise free]{\includegraphics[width=3.15cm,height=2.45cm]{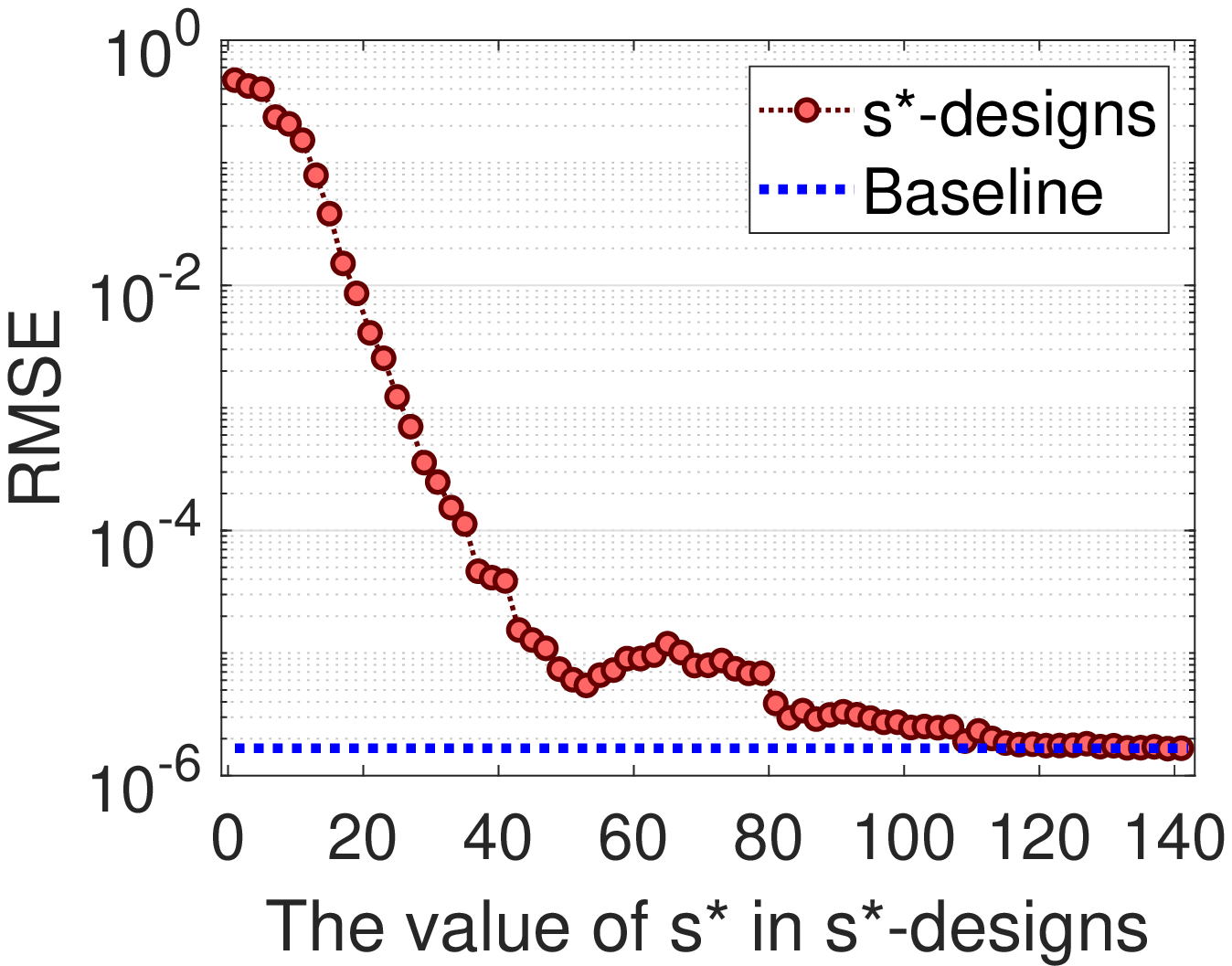}}
    \subfigure[$\delta=0.001$]{\includegraphics[width=3.15cm,height=2.45cm]{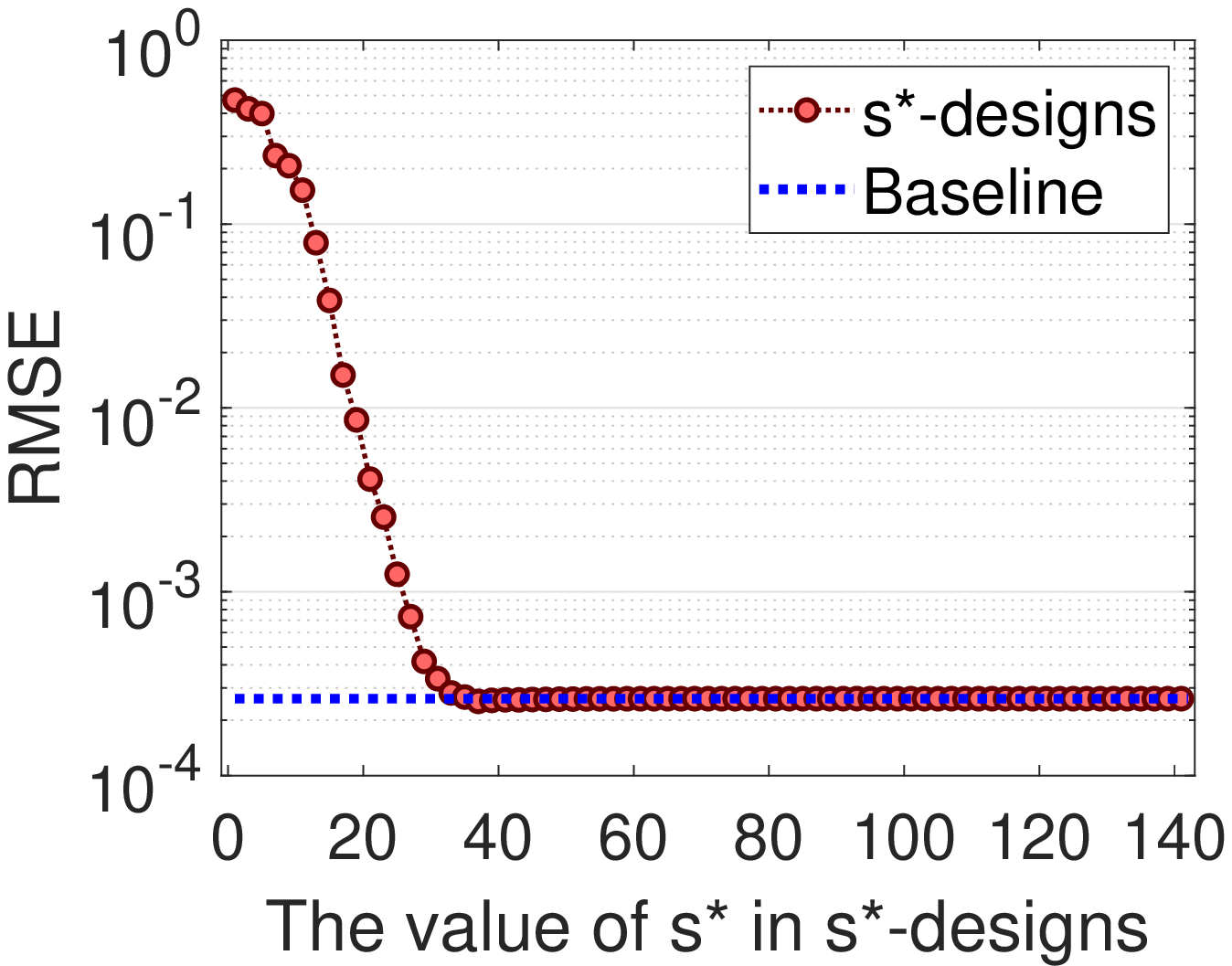}}
    \subfigure[$\delta=0.1$]{\includegraphics[width=3.15cm,height=2.45cm]{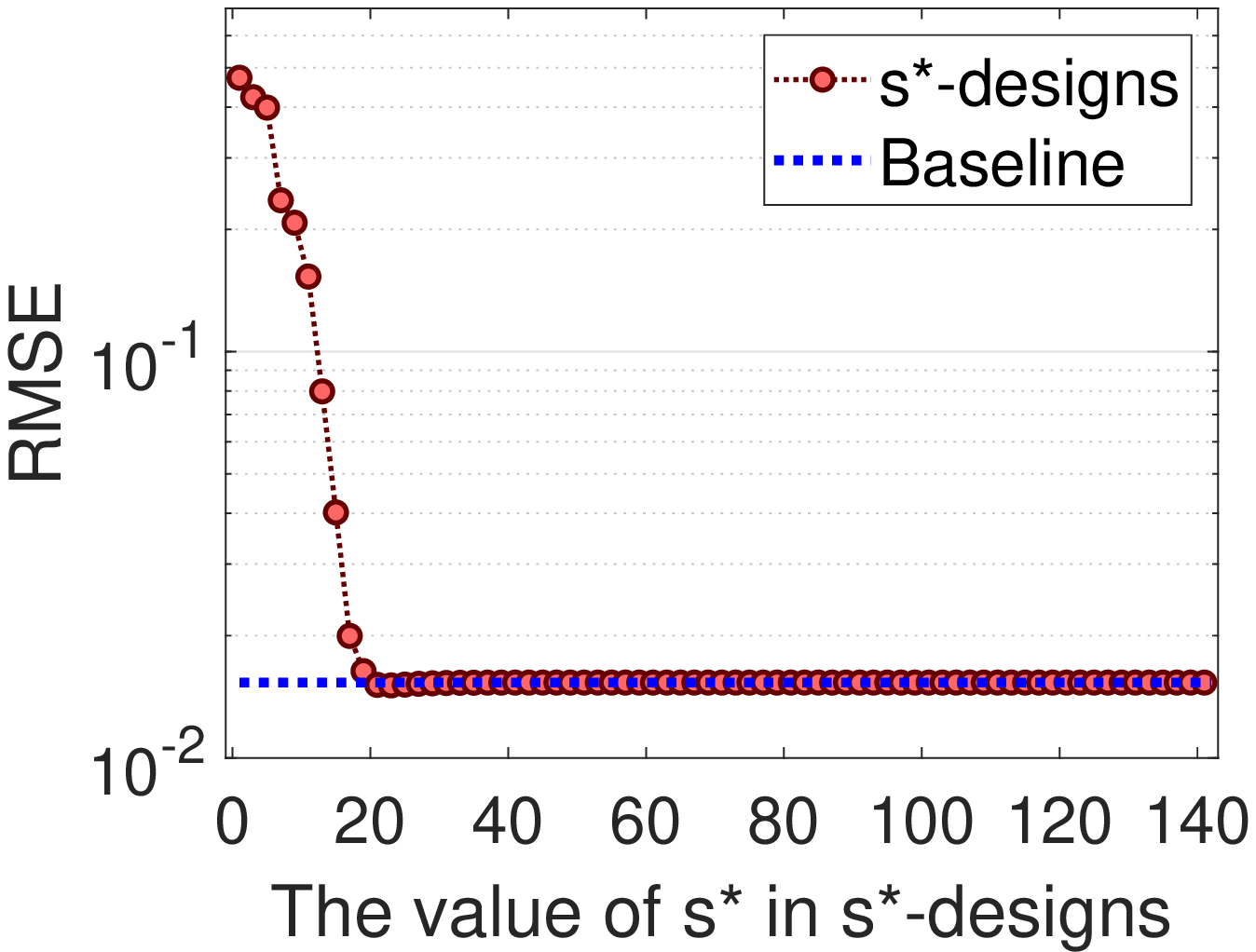}}
    \subfigure[$\delta=0.5$]{\includegraphics[width=3.15cm,height=2.45cm]{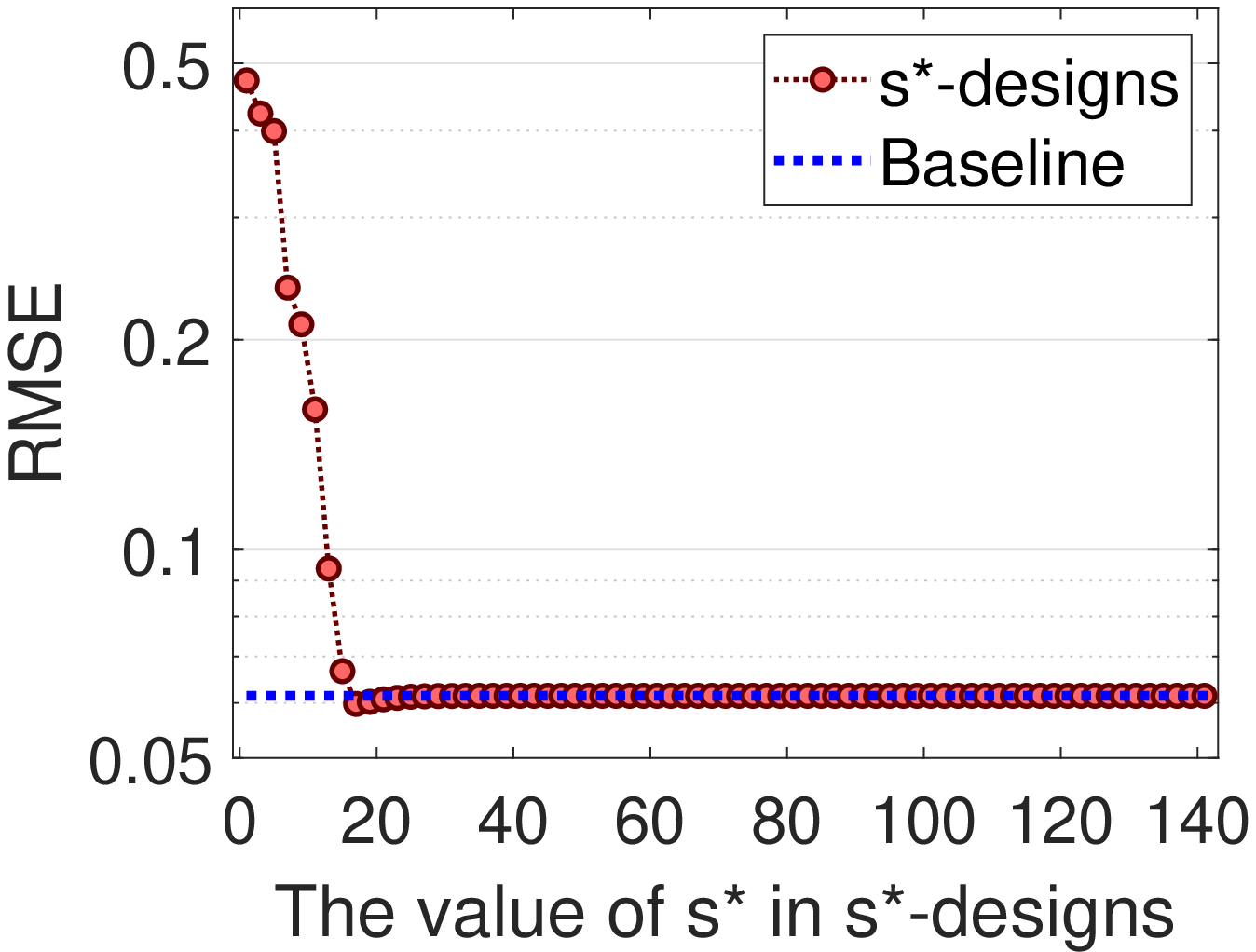}}
    % \caption{The relation between RMSE and the number $s^*$ (i.e., sampling ratio) for { sketching} with $s^*$-designs under different levels of { truncated Gaussian} noise. The results in case of the Franke function and $f_2$ are respectively in the top row and bottom row.}\label{RMSERecovery_sDesign}
    \caption{The relation between RMSE and the number $s^*$ for { sketching} with $s^*$-designs under different levels of { truncated Gaussian} noise. {The sub-figures of the top and bottom rows are the results for the approximation of $f_1$ and $f_2$, respectively.} }\label{RMSERecovery_sDesign}
\end{figure*}

\begin{figure*}[t]
    \centering
    \subfigcapskip=-2pt
    \subfigure{\includegraphics[width=3.15cm,height=2.5cm]{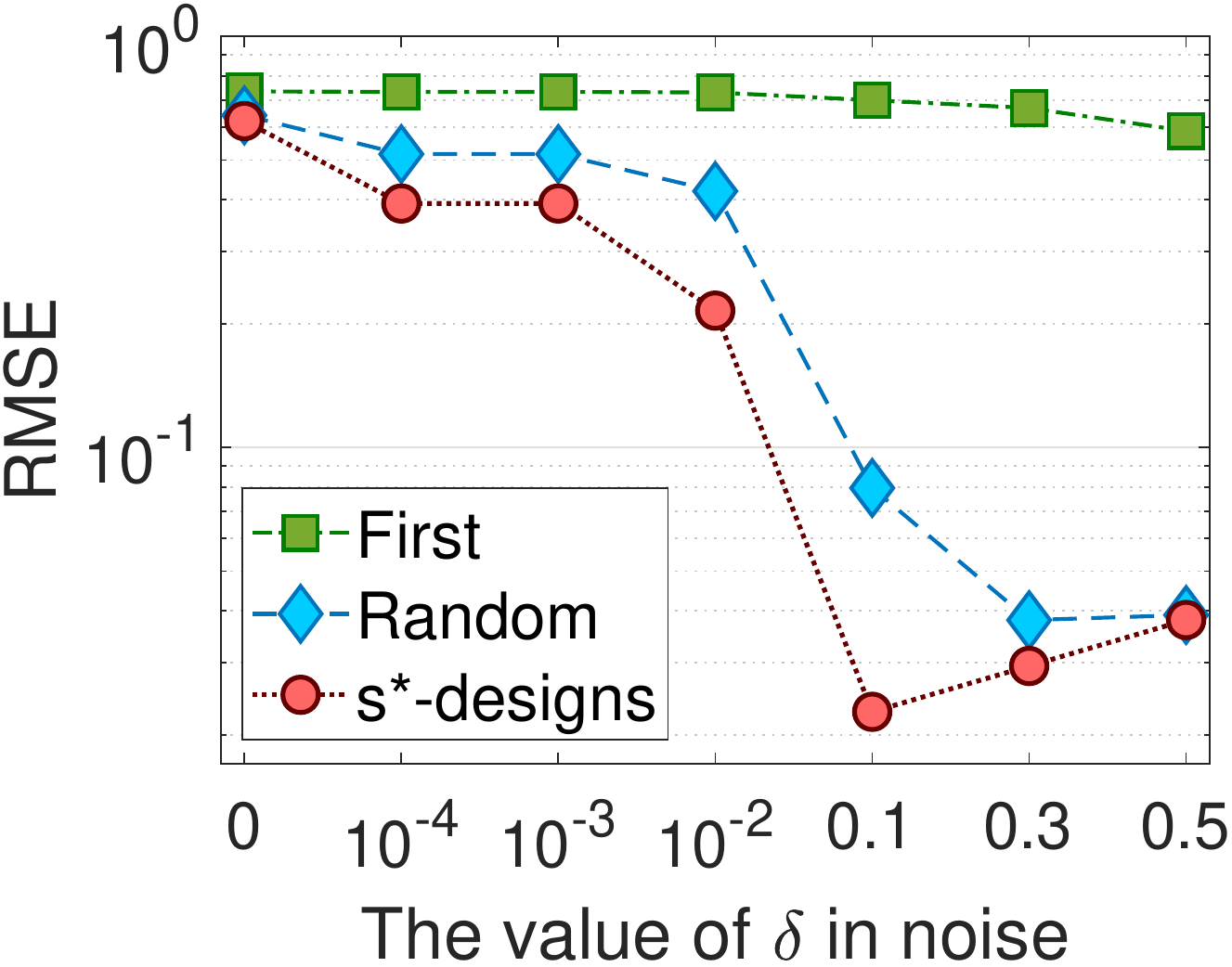}}
    \subfigure{\includegraphics[width=3.15cm,height=2.45cm]{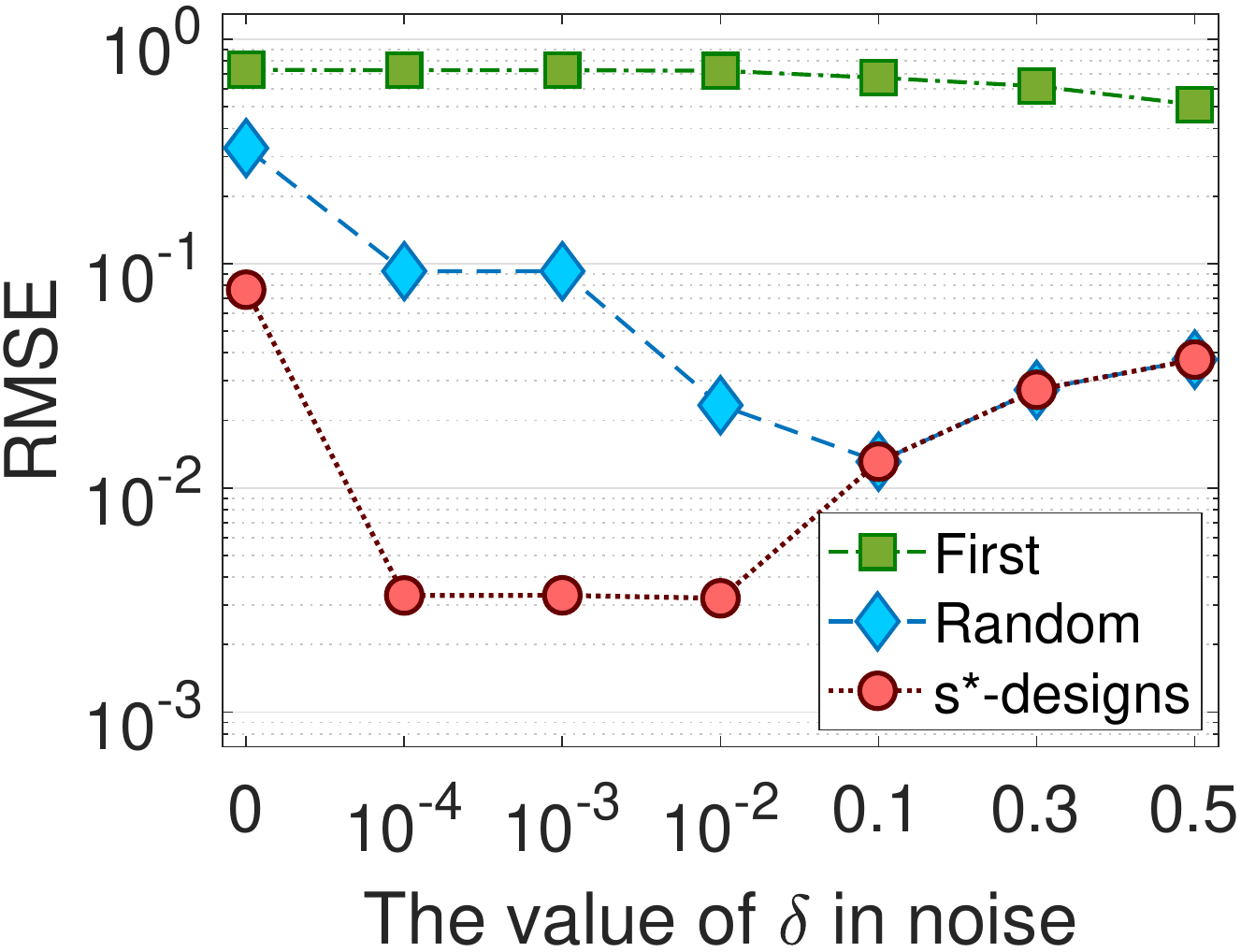}}
    \subfigure{\includegraphics[width=3.15cm,height=2.45cm]{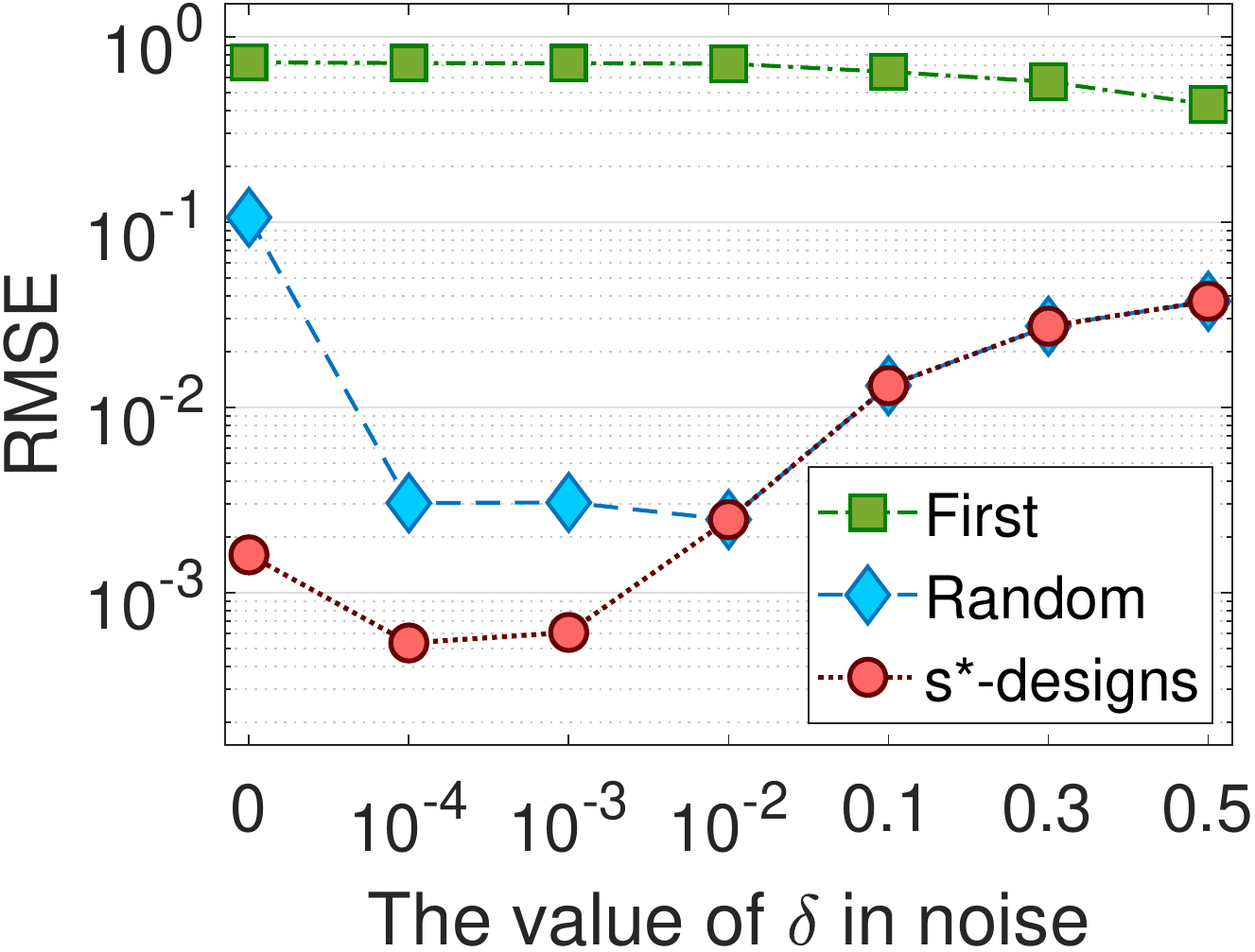}}
    \subfigure{\includegraphics[width=3.15cm,height=2.45cm]{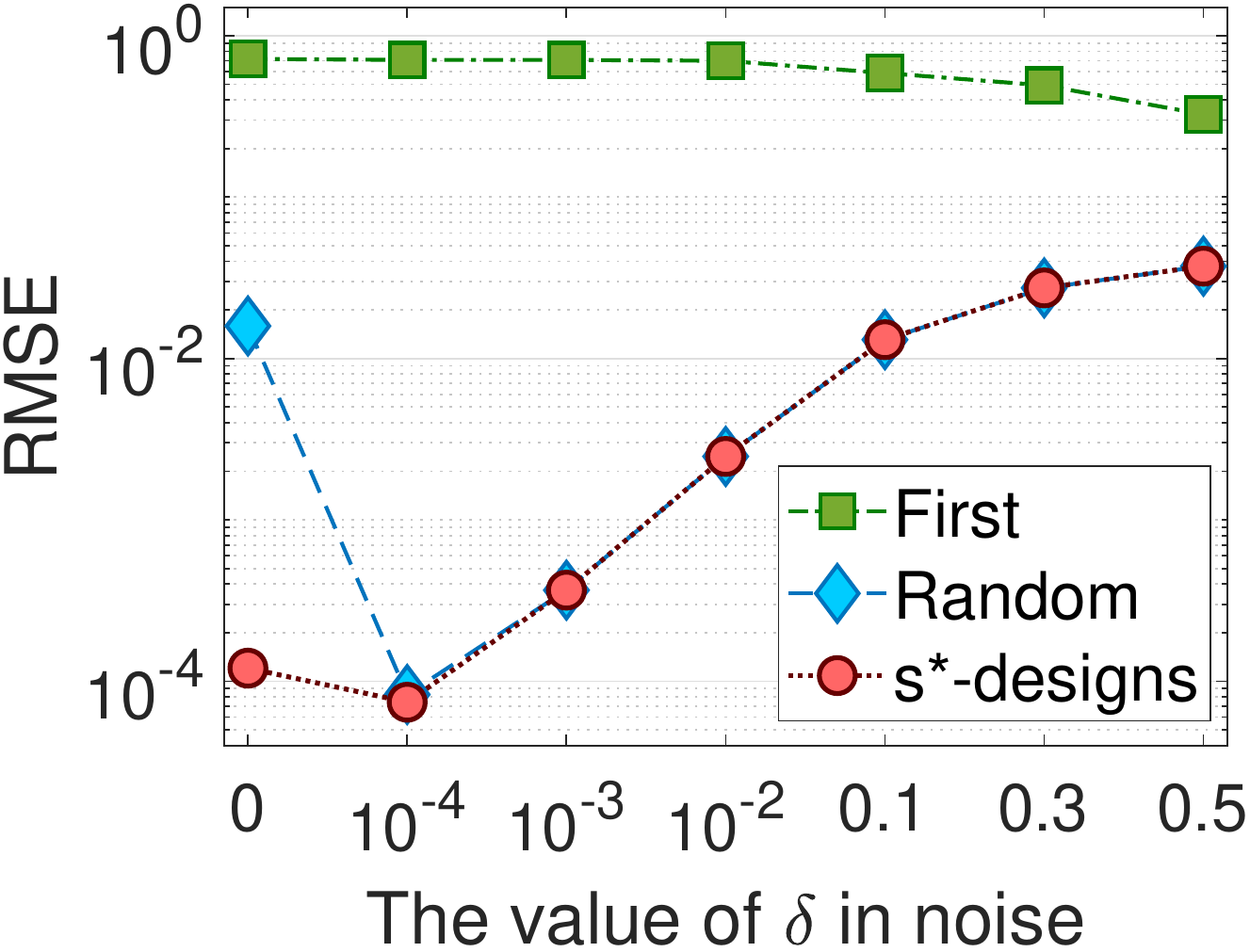}}\\
    \vspace{-0.08in}
    \setcounter{subfigure}{0}
    \subfigure[$s^*\hspace{-0.02in}=\hspace{-0.02in}9$ (SR$=\hspace{-0.02in}0.48\%$)]{\includegraphics[width=3.15cm,height=2.45cm]{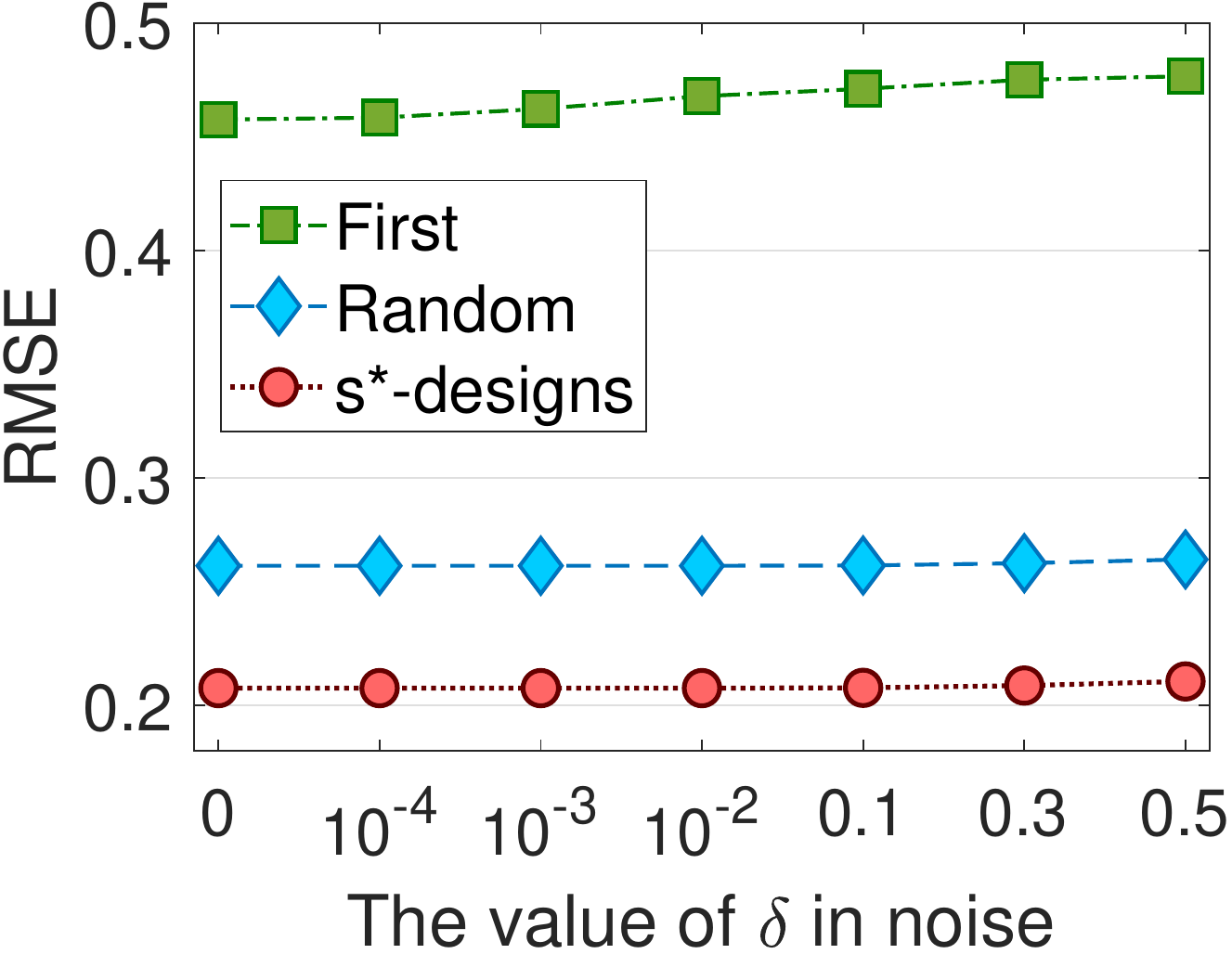}}
    \subfigure[$s^*\hspace{-0.02in}=\hspace{-0.02in}25$ (SR$=\hspace{-0.02in}3.28\%$) ]{\includegraphics[width=3.15cm,height=2.45cm]{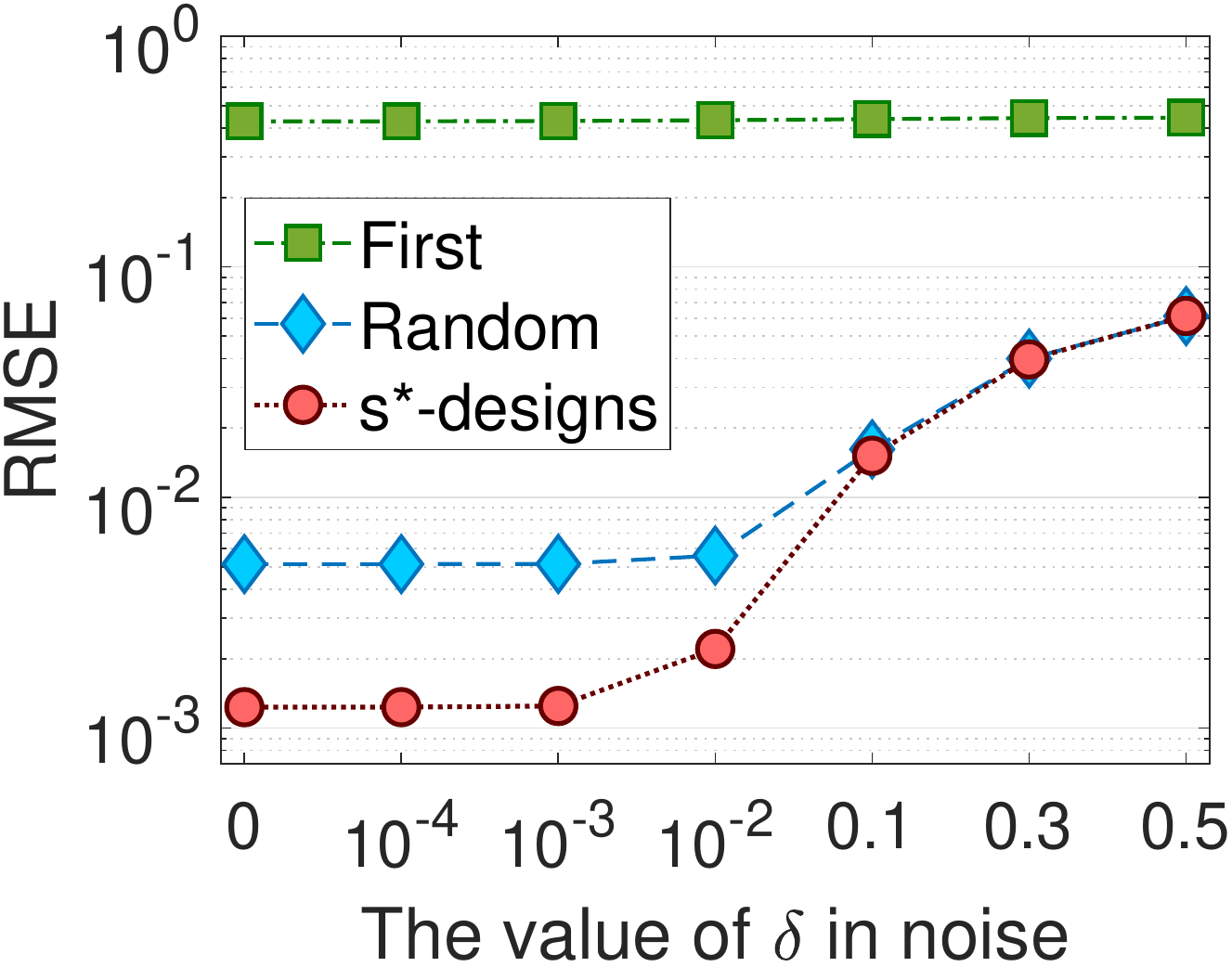}}
    \subfigure[$s^*\hspace{-0.02in}=\hspace{-0.02in}41$ (SR$=\hspace{-0.02in}8.63\%$) ]{\includegraphics[width=3.15cm,height=2.45cm]{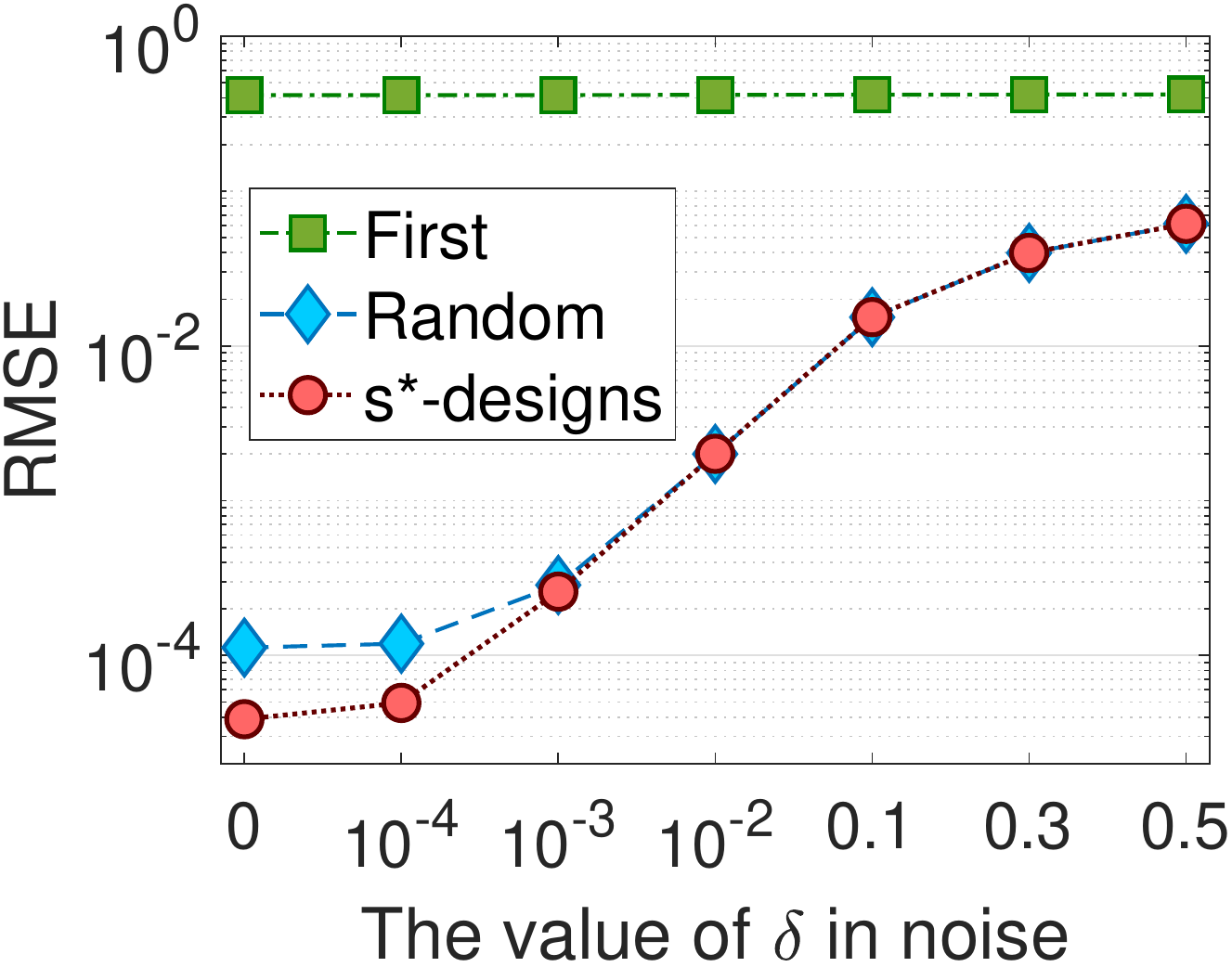}}
    \subfigure[$s^*\hspace{-0.04in}=\hspace{-0.03in}57$ (SR$=\hspace{-0.04in}16.54\%$) ]{\includegraphics[width=3.15cm,height=2.45cm]{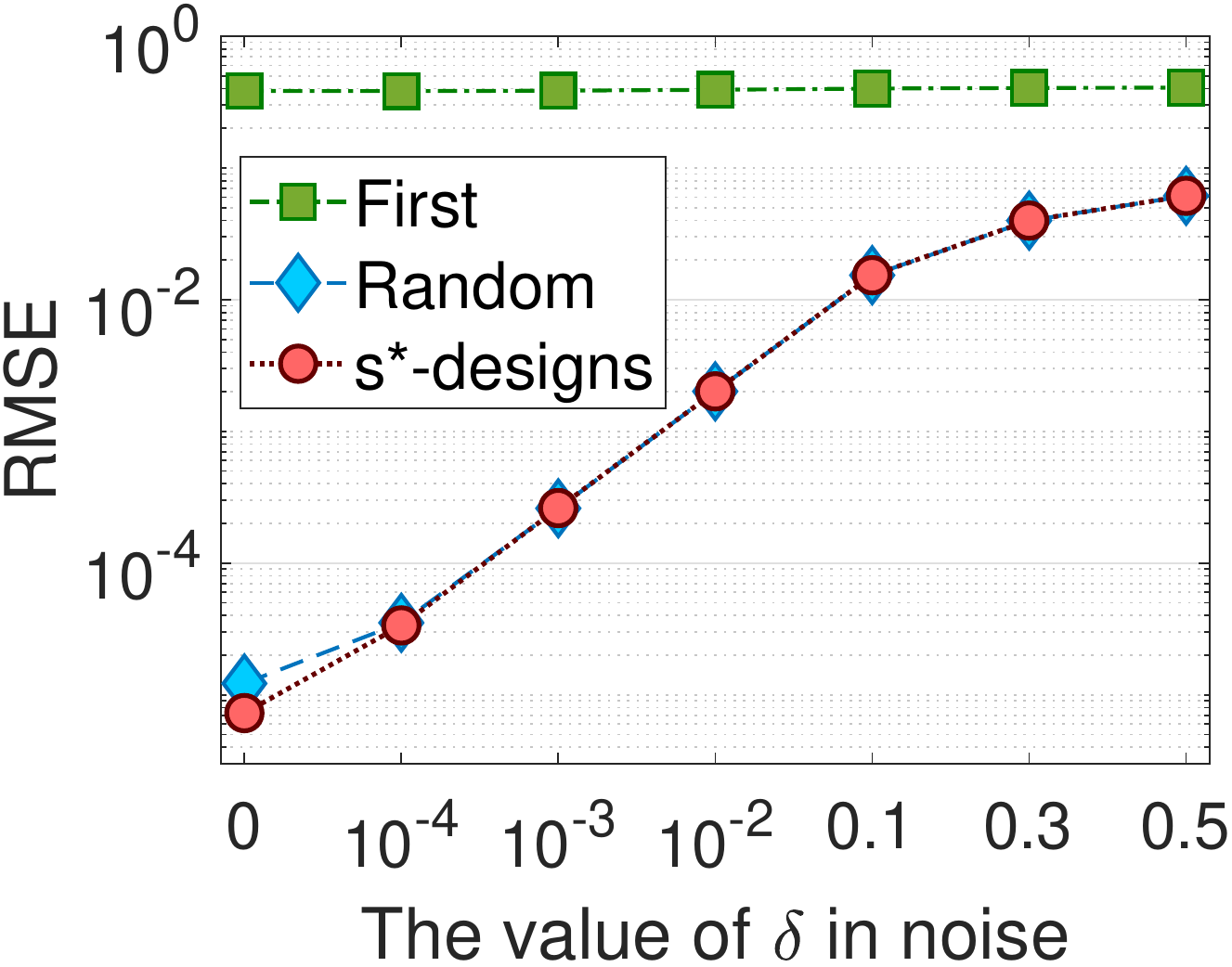}}
% 	\caption{The comparison of RMSE among the three { sketching} methods with increasing levels of { truncated Gaussian} noise for fixed numbers of $s^*$. The results in case of the Franke function and $f_2$ are respectively in the top row and bottom row.}\label{RMSEComparison}
	\caption{The comparison of RMSE among the three { sketching} methods with increasing levels of { truncated Gaussian} noise for fixed numbers of $s^*$.  {The sub-figures of the top and bottom rows are the results for the approximation of $f_1$ and $f_2$, respectively.}}\label{RMSEComparison}
\end{figure*}

The simulations are done for three purposes. The first one is devoted to investigating the approximation performance of { sketching} with $s^*$-designs under different levels of { truncated Gaussian} noise. The second simulation focuses on comparisons for the three { sketching} methods with different sampling ratios (SRs). The last one aims at giving an intuitive visualization for the recovery results of { sketching} with $s^*$-designs.

\textbf{Simulation 1: } In this simulation, we generate the training samples according to (\ref{Model1:fixed1}) for { truncated Gaussian} noise with
standard deviations $\delta\in\{0, 0.001, 0.1, 0.5\}$.  For each $\delta$, we record RMSEs of { sketching} with $s^*$-designs by varying 
the number $s^*$ in the set $\{1, 3, 5, \cdots, 141\}$. 
% It is notable that the model is a standard KRR with training all samples when the number $s^*=141$, and the corresponding RMSE provides a baseline to assess the performances of { sketching} with $s^*$-designs.
It is notable that the case of $s^*=141$ is the standard regularized least squares (\ref{KRR}) with training all samples, and then the corresponding RMSE provides a baseline to assess the performance of { sketching} with $s^*$-designs. The results of RMSE on the  testing set as a function of the number $s^*$, which  corresponds to the sampling ratio (SR), for different levels of { truncated Gaussian} noise are shown in Figure \ref{RMSERecovery_sDesign}. From the results, we can conclude the following assertions: 1) $s^*$-designs do not perform very well for noise-free training data ($\delta=0$), and their accuracy is stable and comparable with regularized least squares only when $s^*$ is larger than $119$, i.e., the SR reaches more than $71.32\%$ (the number of samples of $119$-designs is $7142$, and $\frac{7142}{10014}\approx 0.7132$).
2) For noisy training data, RMSE decreases rapidly as the number $s^*$ increases, reaching the accuracy of regularized least squares under a relatively small $s^*$. In addition, { sketching} with $s^*$-designs is more effective on data with higher levels of { truncated Gaussian} noise.  This verifies our theoretical assertions after Theorem \ref{Theorem:lower-bound} that  the accommodation of large noise in (\ref{Model1:fixed}) leaves a large room for  { sketching}, implying  that larger noise admits smaller $s^*$ for { sketching}  with spherical $s^*$-designs.

\begin{figure*}[t]
    \centering
    \subfigcapskip=-2pt
    \subfigure{\includegraphics[width=3.1cm, height=3.7cm]{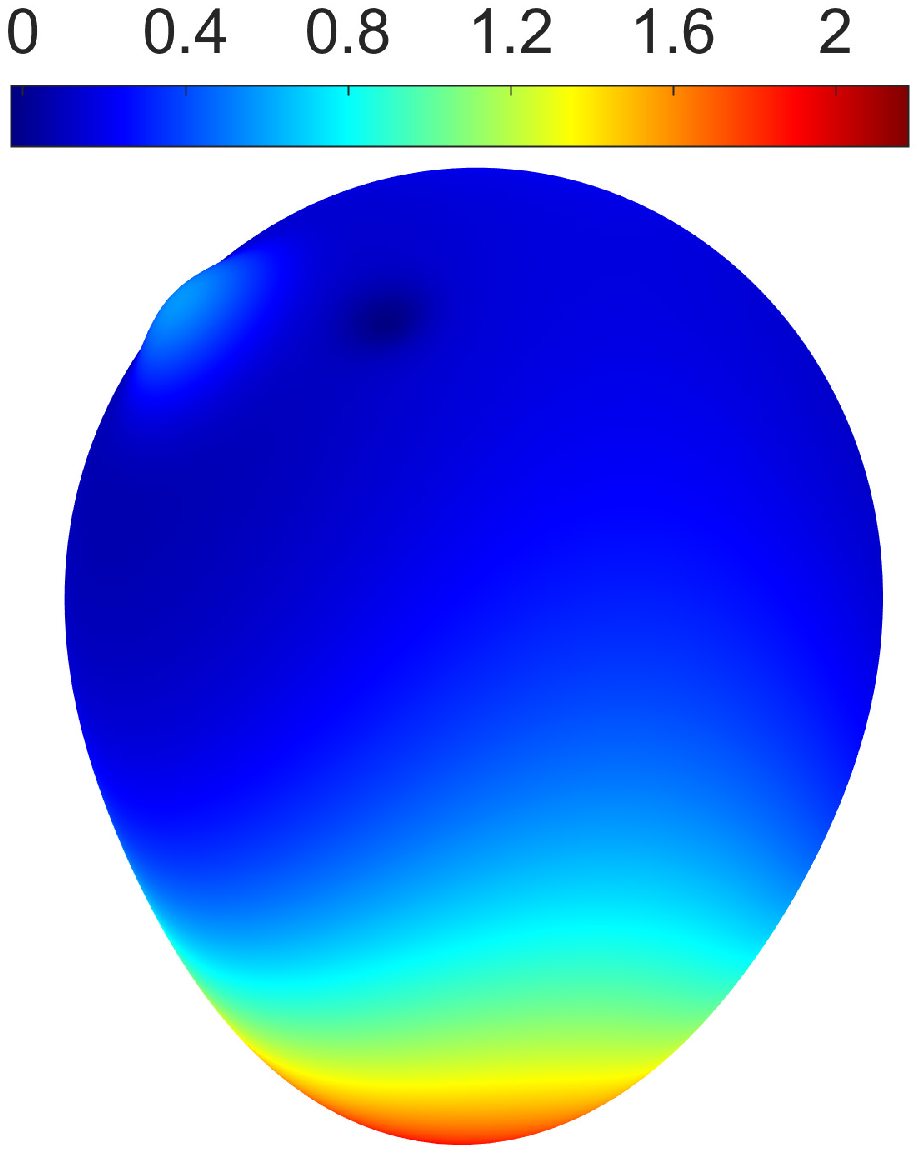}}
    \subfigure{\includegraphics[width=3.1cm, height=3.7cm]{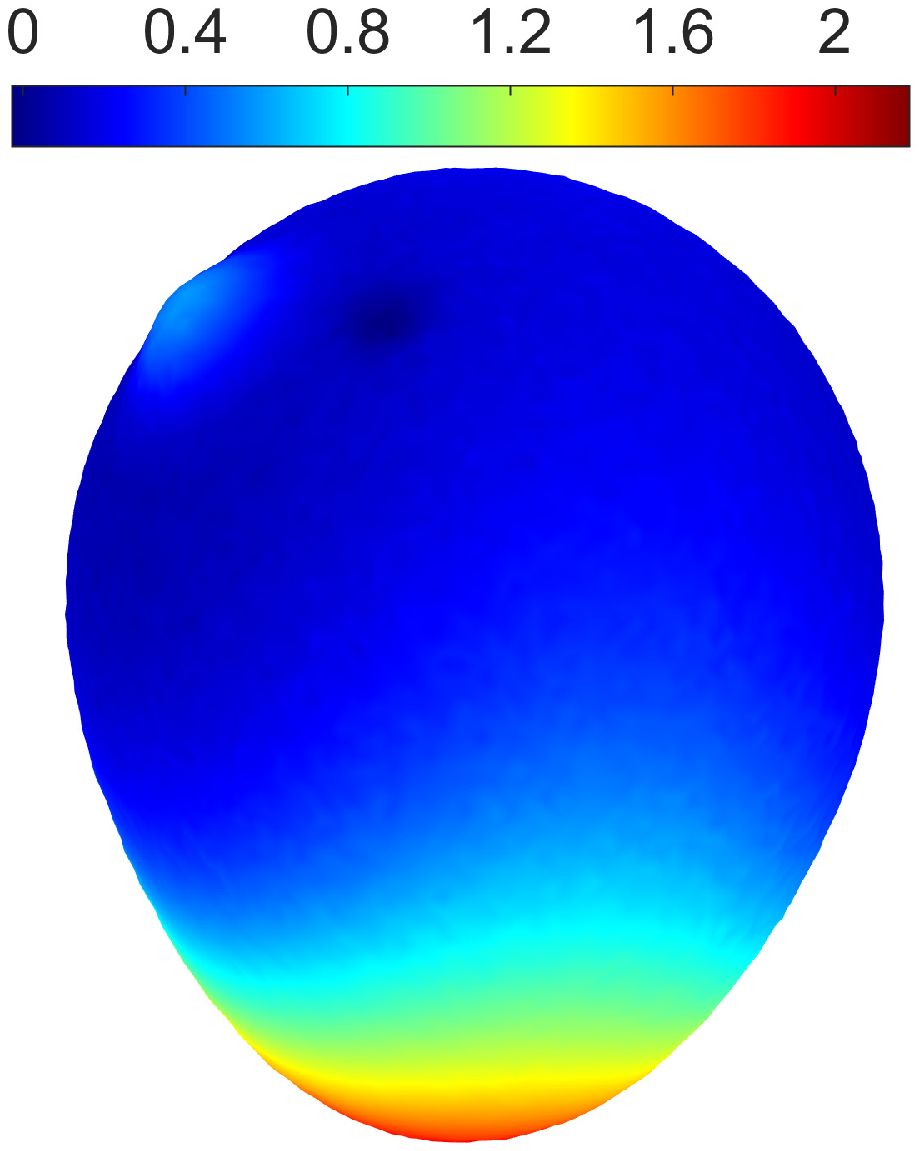}}
    \subfigure{\includegraphics[width=3.1cm, height=3.7cm]{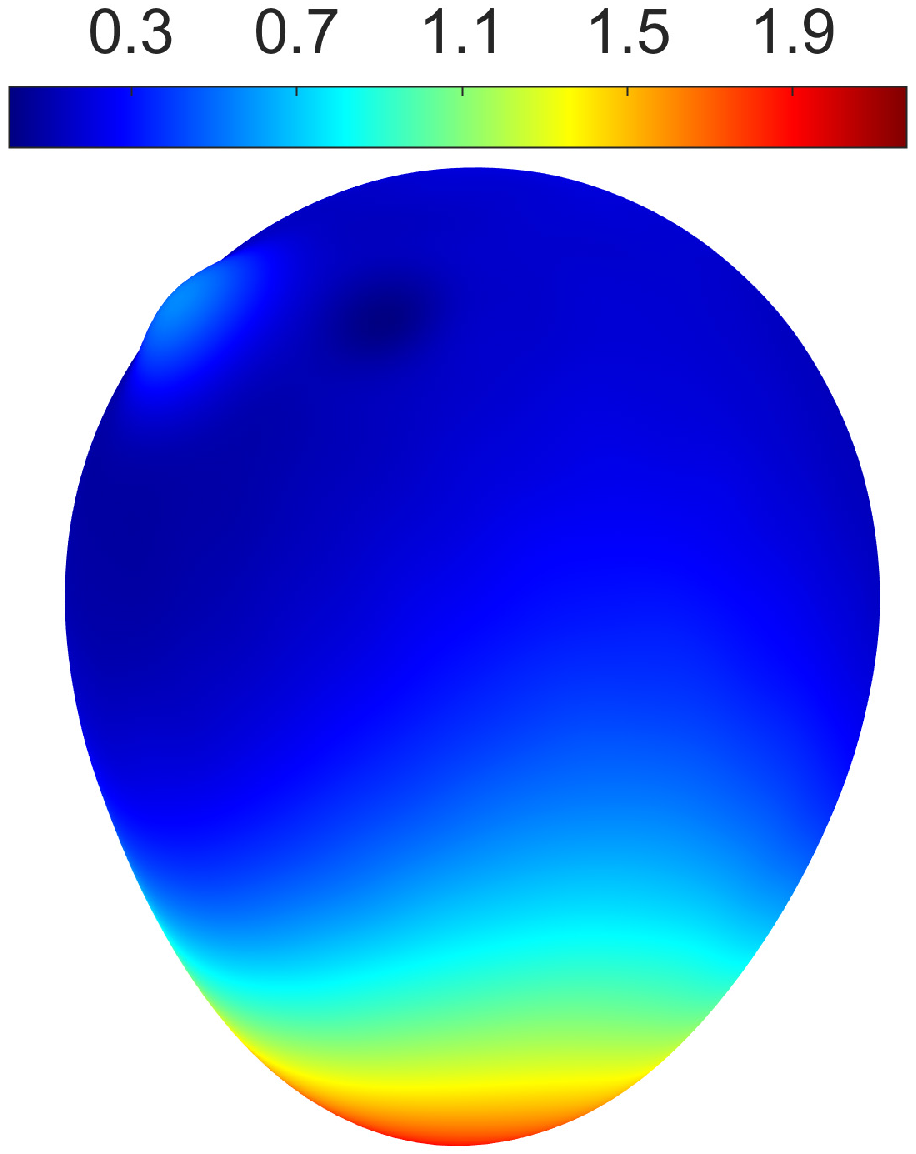}}
    \subfigure{\raisebox{0.05\height}{\includegraphics[width=3.1cm, height=3.5cm]{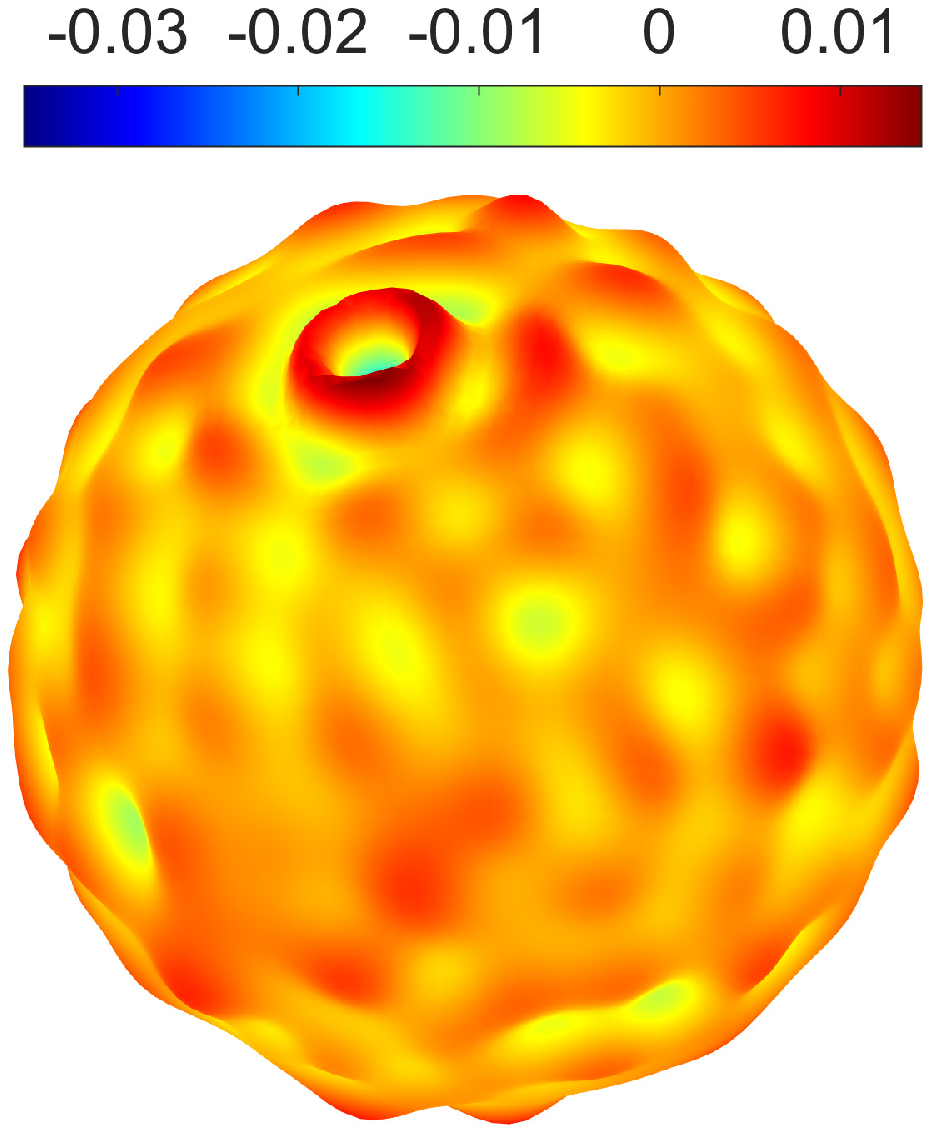}}}\\
    \vspace{-0.08in}
    \setcounter{subfigure}{0}
    \subfigure[exact function]{\includegraphics[width=3.1cm, height=3.7cm]{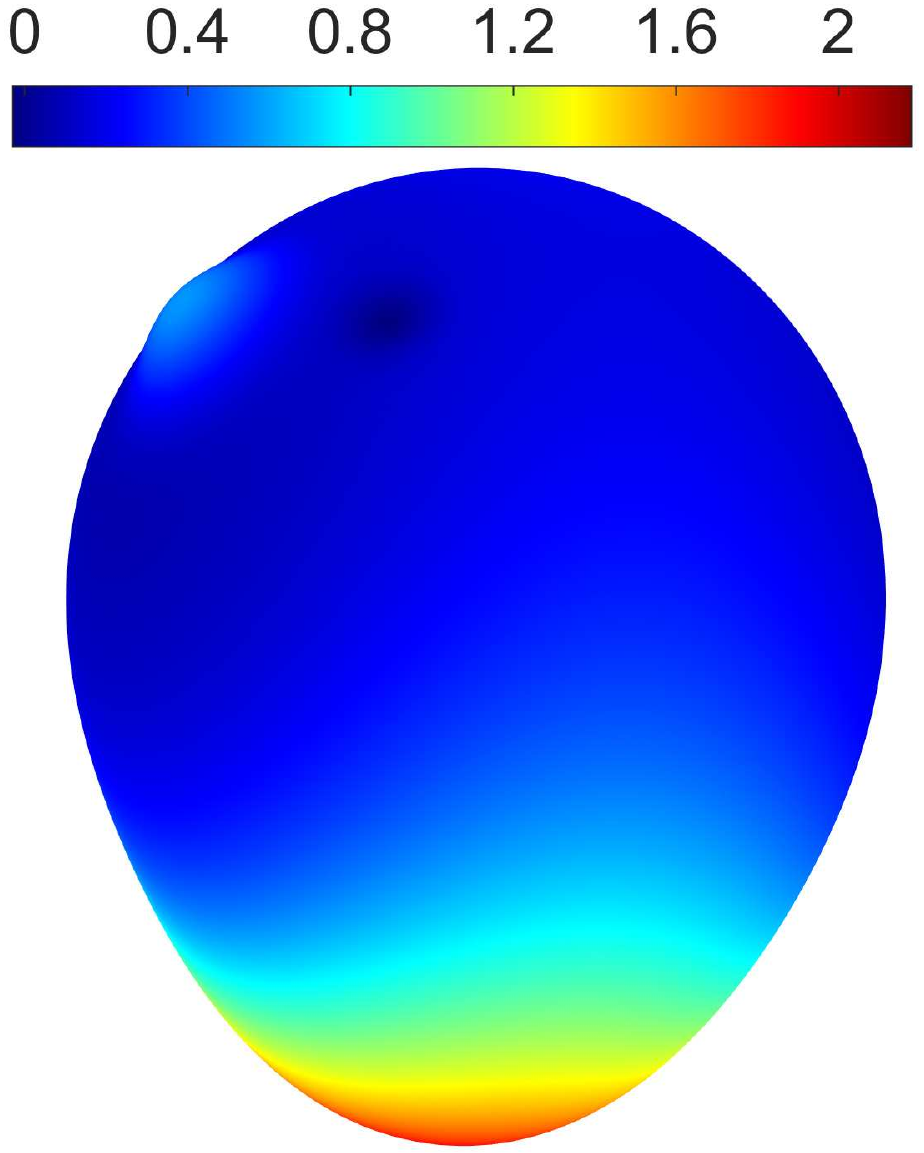}}
    \subfigure[noisy function]{\includegraphics[width=3.1cm, height=3.7cm]{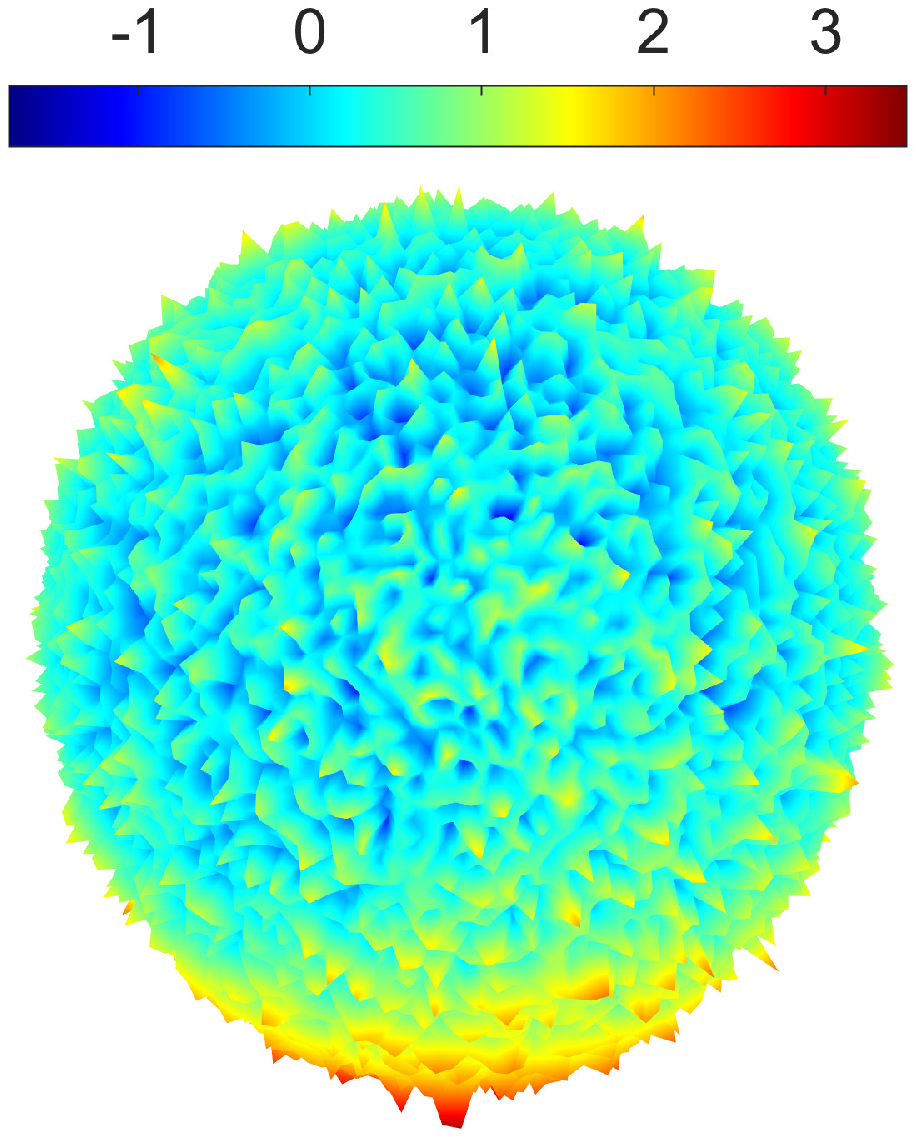}}
    \subfigure[recovery]{\includegraphics[width=3.1cm, height=3.68cm]{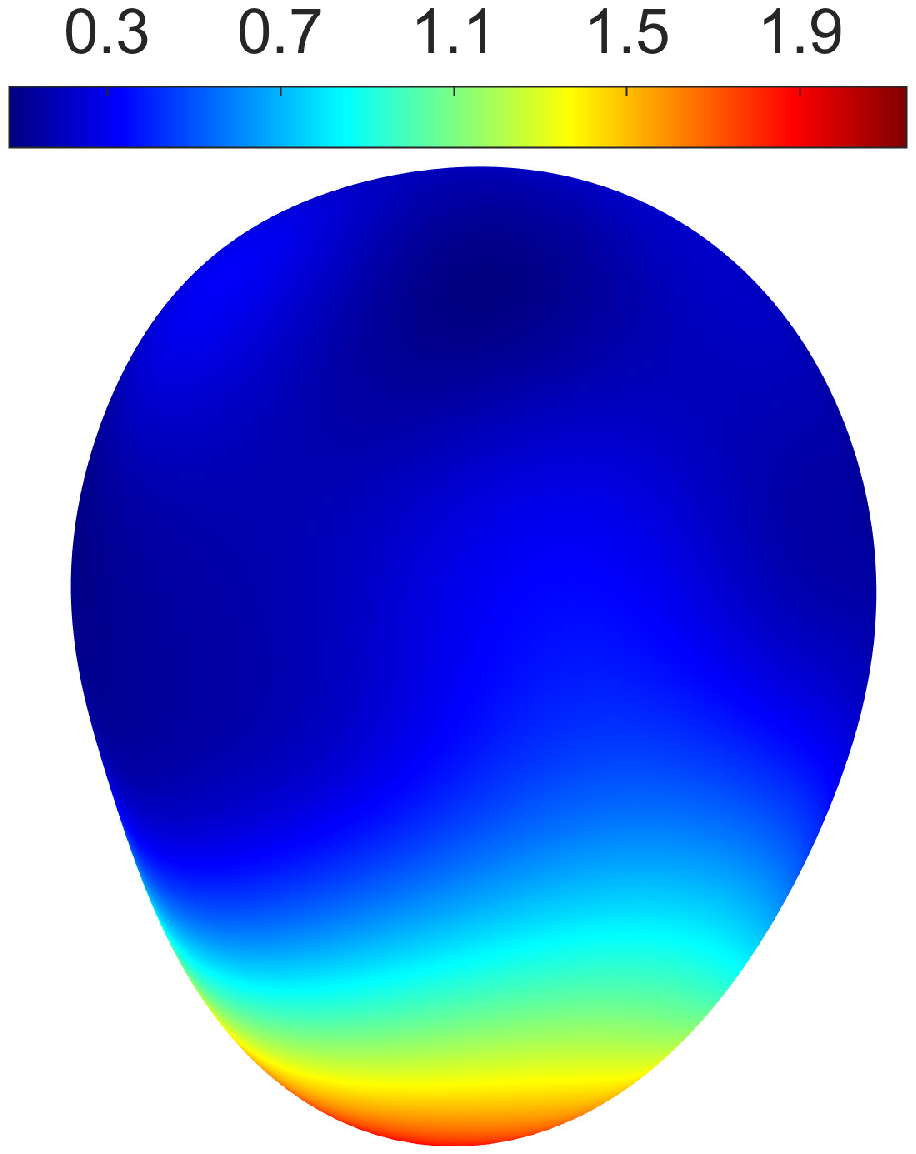}}
    \subfigure[error]{\raisebox{0.05\height}{\includegraphics[width=3.1cm, height=3.5cm]{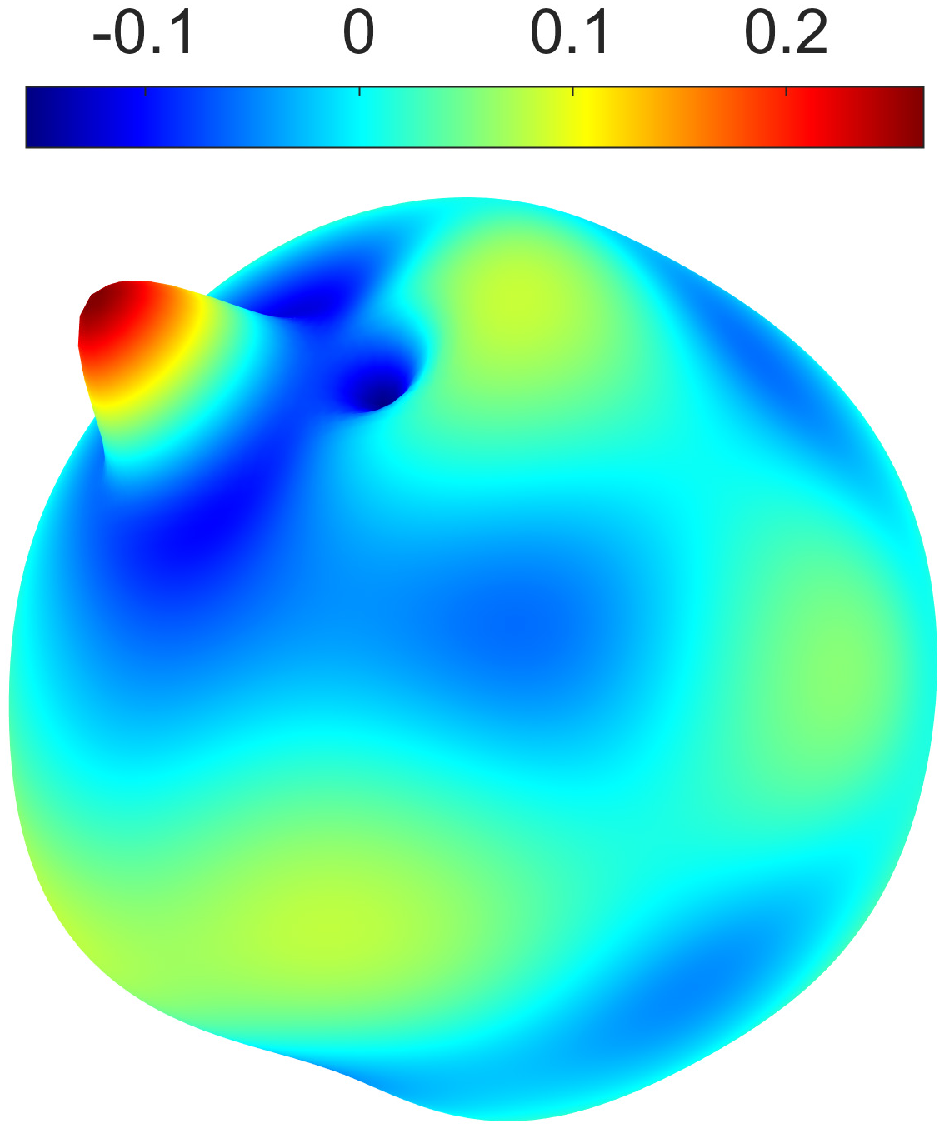}}}
	\caption{Approximation results of $f_1$ over the unit sphere (with noise standard deviation $\delta=0.01$ and $\delta=0.5$) via { sketching} with $s^*$-designs.}\label{FrankeNoiseRecovery}
\end{figure*}

\begin{figure*}[t]
    \centering
    \subfigcapskip=-2pt
    \subfigure{\includegraphics[width=3.1cm, height=3.5cm]{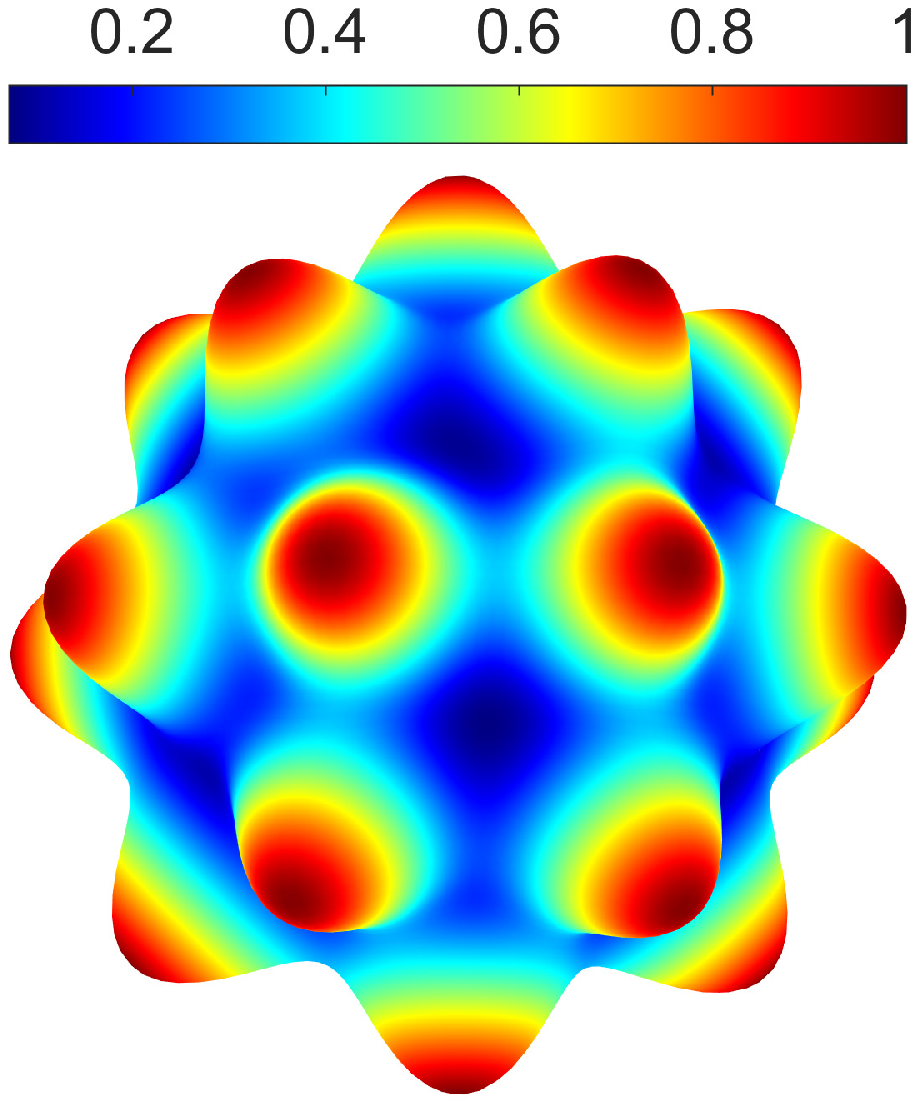}}
    \subfigure{\includegraphics[width=3.1cm, height=3.5cm]{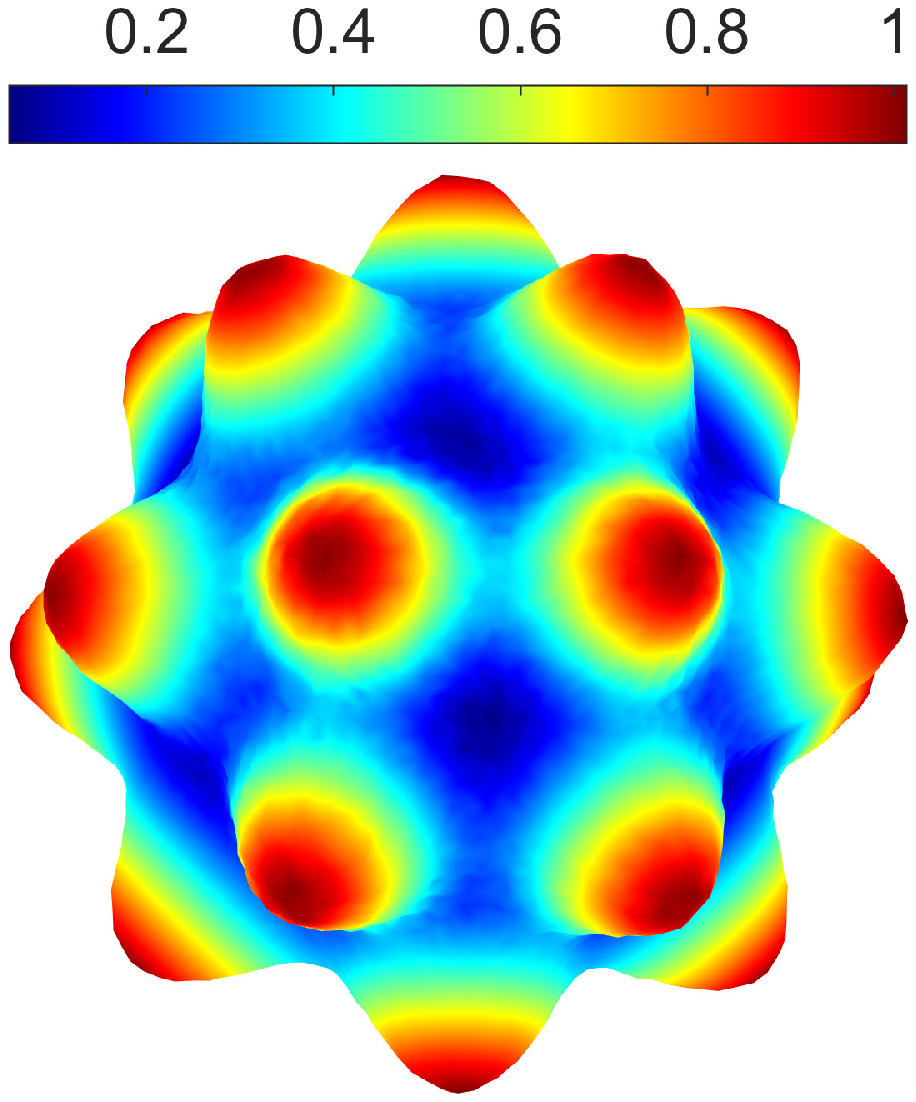}}
    \subfigure{\includegraphics[width=3.1cm, height=3.5cm]{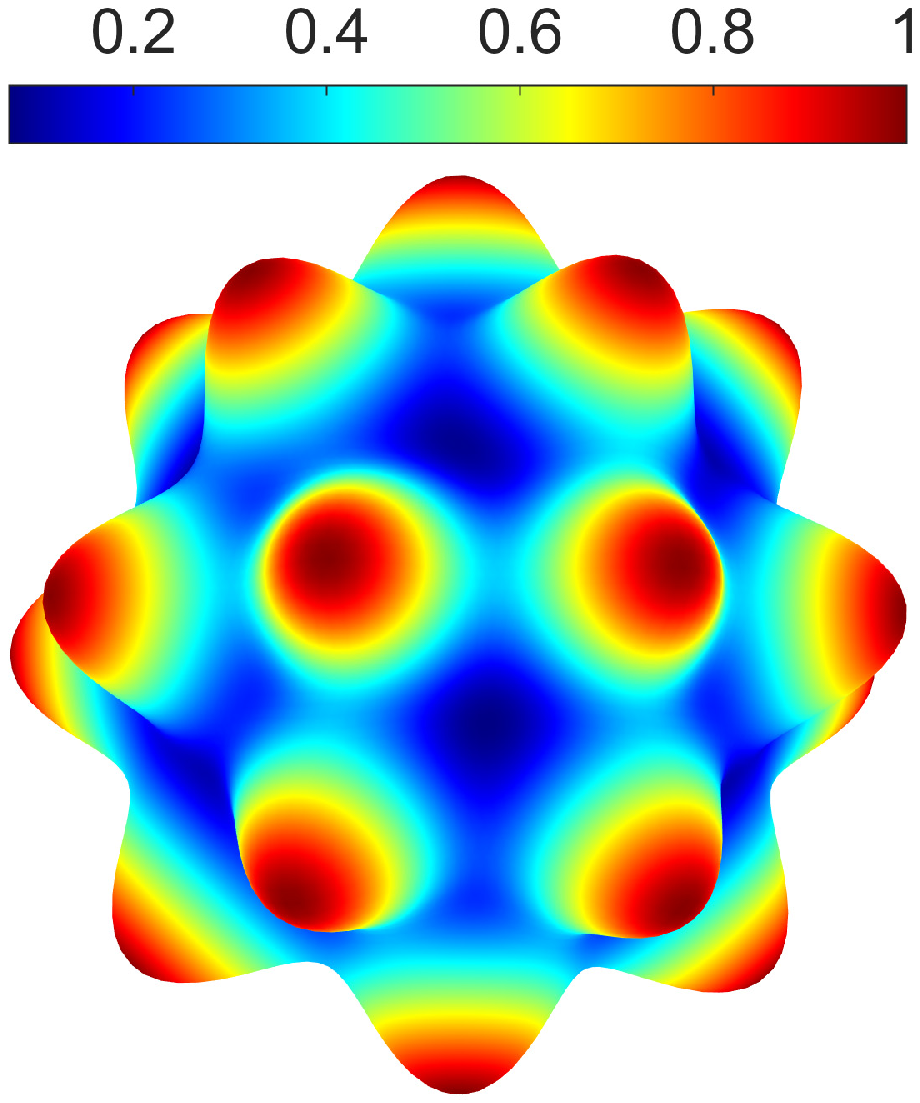}}
    \subfigure{\raisebox{0.05\height}{\includegraphics[width=3.1cm, height=3.31cm]{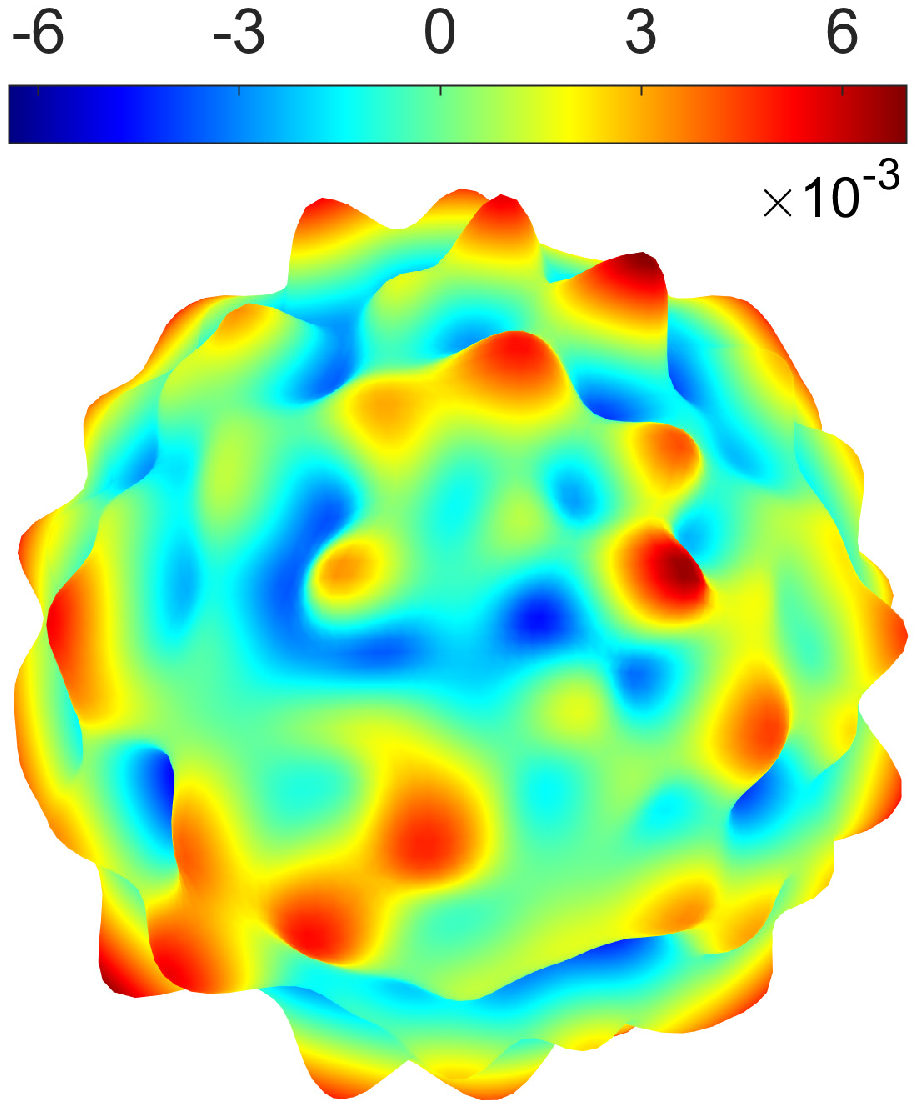}}}\\
    \vspace{-0.08in}
    \setcounter{subfigure}{0}
    \subfigure[exact function]{\includegraphics[width=3.1cm, height=3.5cm]{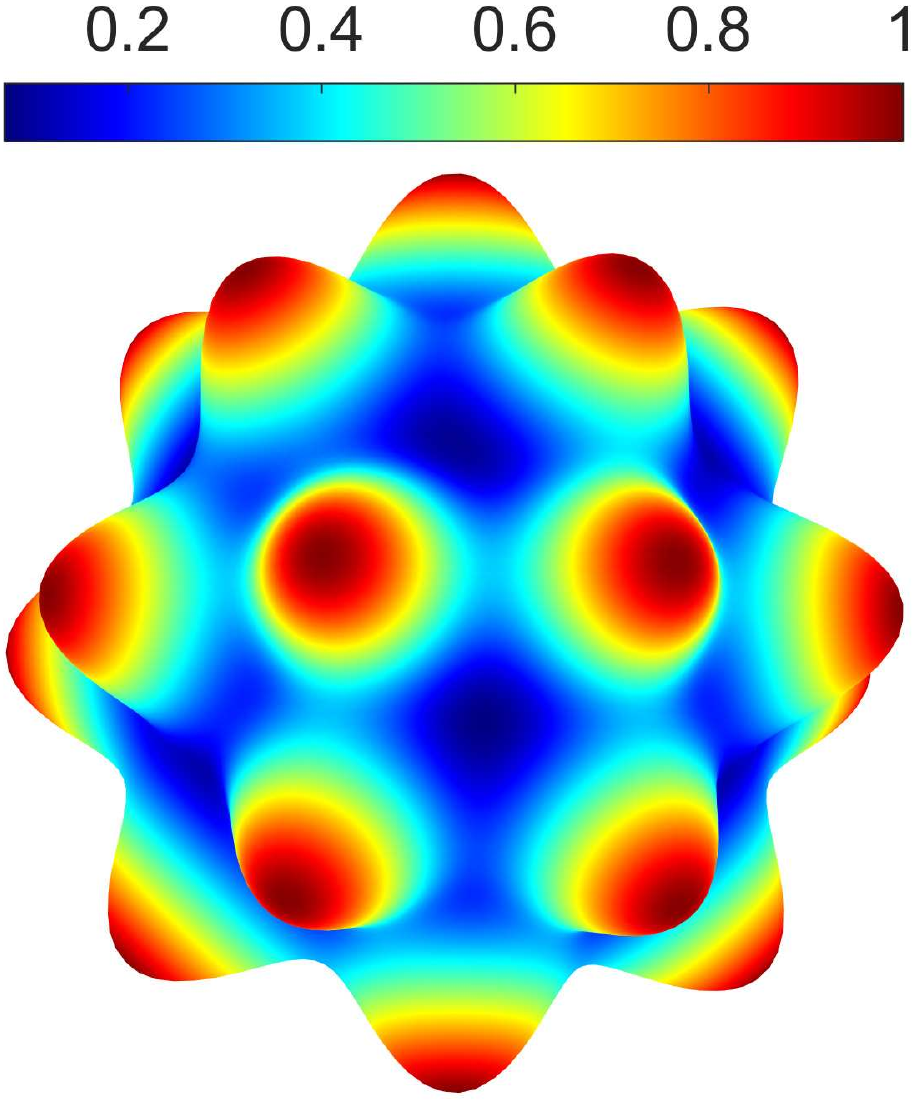}}
    \subfigure[noisy function]{\includegraphics[width=3.1cm, height=3.5cm]{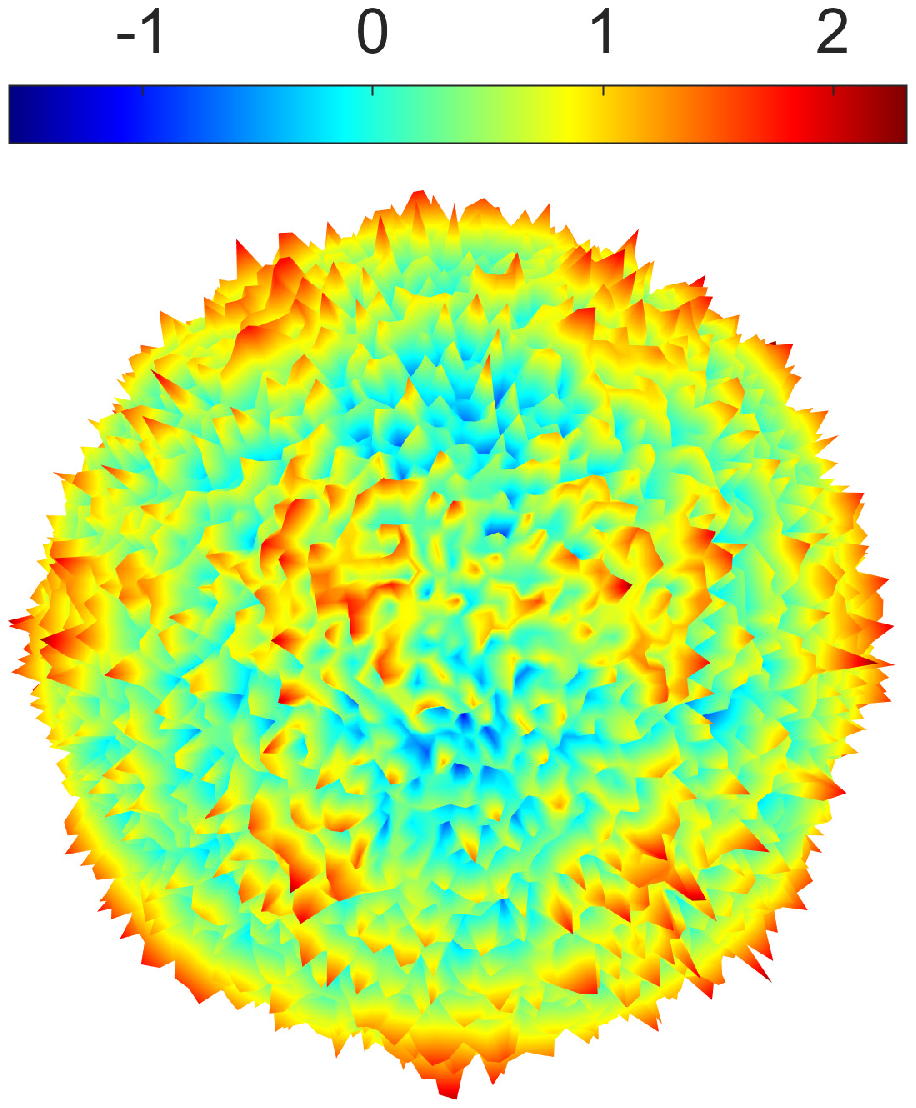}}
    \subfigure[recovery]{\includegraphics[width=3.1cm, height=3.5cm]{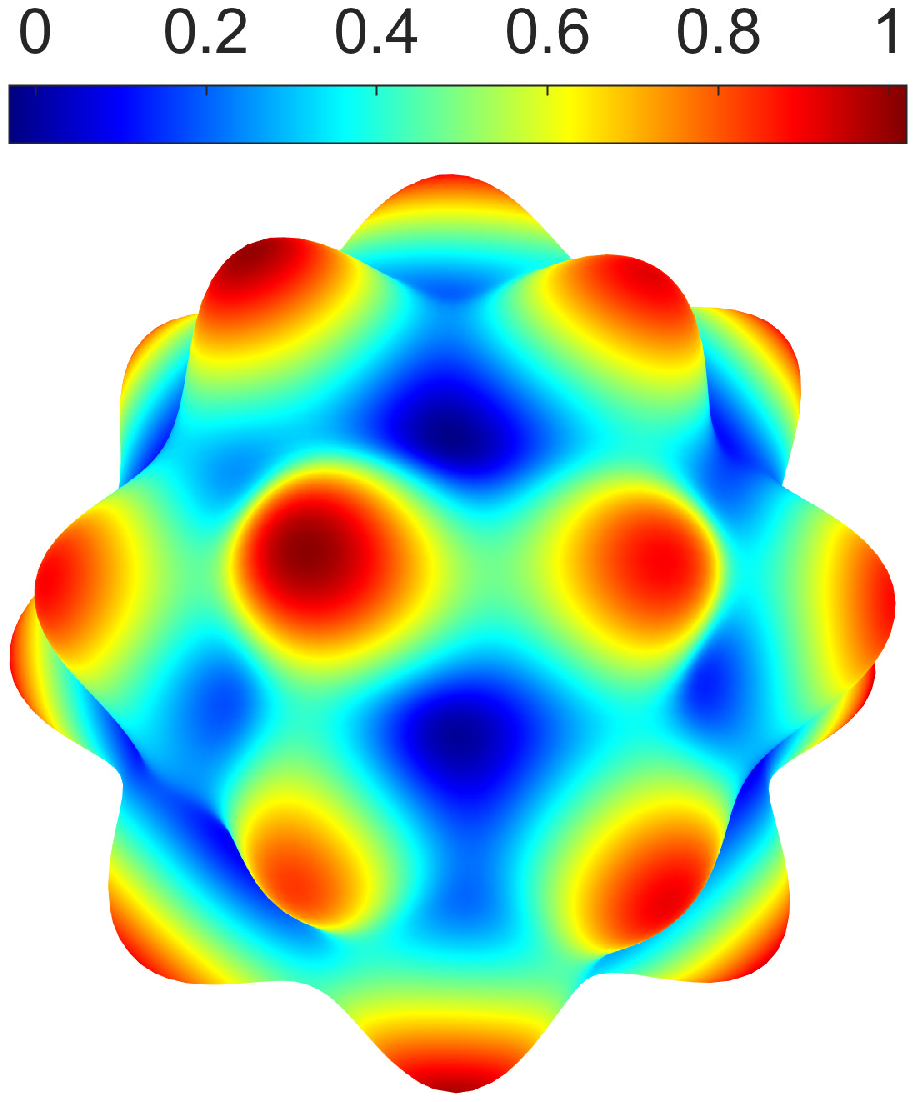}}
    \subfigure[error]{\raisebox{0.05\height}{\includegraphics[width=3.1cm, height=3.31cm]{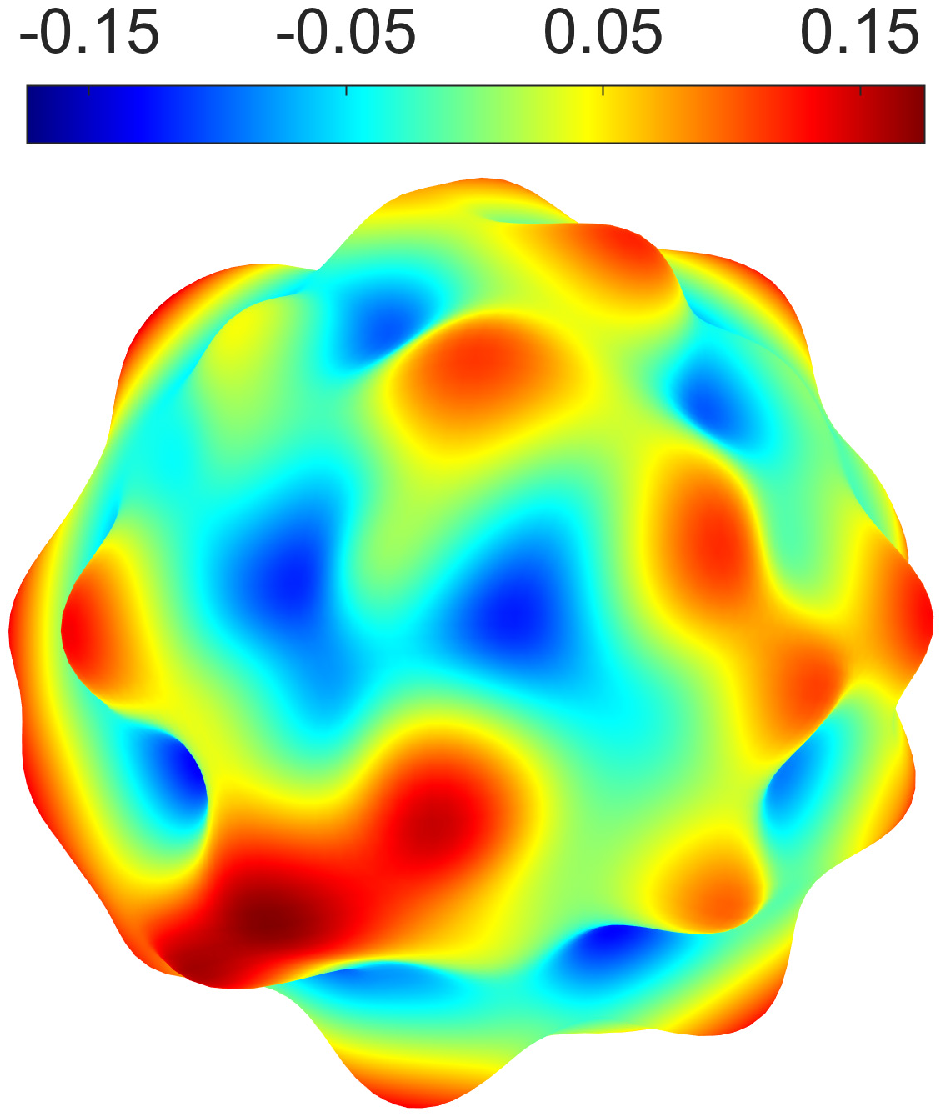}}}
	\caption{Approximation results of $f_2$ over the unit sphere (with noise standard deviation $\delta=0.01$ and $\delta=0.5$) via { sketching} with $s^*$-designs. }\label{FNoiseRecovery}
\end{figure*}

\textbf{Simulation 2: } This simulation compares $s^*$-designs with the other two aforementioned { sketching} methods by varying standard deviations $\delta\in\{0, 10^{-4}, 10^{-3},$ $10^{-2}, 0.1, 0.3, 0.5\}$ of { truncated Gaussian} noise for fixed numbers $s^*\in\{9, 25, 41, 57\}$. To make a fair comparison, the sizes of { sketching} sets for the three methods are the same. For example, if we select $25$-designs for { sketching}, i.e., the number of points is $328$, then we also select $328$ samples from the training set for the other two methods.
The relation between RMSE and the standard deviation $\delta$ for each fixed number $s^*$ is shown in Figure \ref{RMSEComparison}.
% , where the results in the case of $f_1$ and $f_2$ are respectively in the top and bottom rows.
From these results, we have the following observations.
1) { Sketching} with selecting the first $m$ samples from the training set has the worst performance for all values of $\delta$ and $s^*$.
2) { Sketching} with randomly selected training samples achieves a noticeable gain in the approximation of the testing functions, 
because the randomly selected $m$ training samples have more opportunities to construct the overall outline of the testing function than the first $m$ training samples when SR is small. 3) Although the performance of randomly selected training samples is excellent, it can still be significantly improved by using $s^*$-designs, especially for small SRs. This is the main focus of this paper, which provides a { sketching} method with a lower SR while maintaining a comparable performance to the standard regularized least squares (\ref{KRR}).

\textbf{Simulation 3: } In this simulation, some approximation results by using $s^*$-designs are visualized in Figures \ref{FrankeNoiseRecovery} and \ref{FNoiseRecovery}, where the top and bottom rows are the results of the training data with noise standard deviations
$\delta=0.01$ and $\delta=0.5$, respectively. The sub-figures (a) show the pictures of exact functions, and the sub-figures (b) show the noisy functions (i.e., $f+\varepsilon$). The sub-figures (c) show the approximation of { sketching} with $s^*$-designs, where the number $s^*=33$ in the case of $\delta=0.01$ and $s^*=9$ in the case of $\delta=0.5$ for the function $f_1$, and $s^*=29$ in the case of $\delta=0.01$ and $s^*=19$ in the case of $\delta=0.5$ for the function $f_2$. The sub-figures (d) are the visualizations of the corresponding approximation error. These results also verify that { sketching} with $s^*$-designs can achieve satisfactory accuracy for the approximation of testing functions when SR is relatively small.

\section{Proofs}\label{Sec.Proof}

In this section, we apply  the idea of integral operator  from \cite{rudi2015less,lin2017distributed,lu2019analysis,Feng2021radial,sun2021nystr} and \cite{tsybakov2009introduction,blanchard2018optimal,fischer2020sobolev}
to prove the upper  and lower bounds, respectively.

\subsection{Preliminaries  of positive operators}

Before presenting the error analysis,  we recall  some basic definitions and properties of
positive operators  established  in \cite{bhatia2013matrix}. Let
$\mathcal H_1$ and $\mathcal H_2$ be two Hilbert spaces and $\mathcal L(\mathcal H_1,\mathcal H_2)$ be the space of all
bounded linear operators from $\mathcal H_1$ to $\mathcal H_2$. 
For $A\in\mathcal L(\mathcal H_1,\mathcal H_2)$, the operator norm is defined  by
$$
        \|A\|_{\mathcal H_1\rightarrow\mathcal H_2}=\sup_{\|f\|_{\mathcal H_1}\leq 1}\|Af\|_{\mathcal H_2}.
$$
If $f\in\mathcal H_1$, there naturally holds
\begin{equation}\label{norm-relation-111}
      \|Af\|_{\mathcal H_2}\leq \|A\|_{\mathcal H_1\rightarrow\mathcal H_2}\|f\|_{\mathcal H_1}.
\end{equation}
Let $A^T$ be the adjoint operator of $A$. We have
\begin{equation}\label{norm-relation-222}
   \|A\|_{\mathcal H_1\rightarrow\mathcal H_2}
   =
   \|A^T\|_{\mathcal H_2\rightarrow\mathcal H_1}
   =\|(A^TA)^{1/2}\|_{\mathcal H_1\rightarrow\mathcal H_1}
   =\|A^TA\|^{1/2}_{\mathcal H_1\rightarrow\mathcal H_1}.
\end{equation}
If $A$ is compact, then it can be found in \cite[eqs(2.43)]{engl1996regularization} that for any piecewise continuous function $\nu(\cdot):[0,\infty)\rightarrow[0,\infty)$, there holds
\begin{equation}\label{exchange-adjoint}
    \nu(AA^T)A=A\nu(A^TA).
\end{equation}

% The
% adjoint of an operator $A\in \mathcal L(\mathcal H_1,\mathcal H_2)$, denoted by $A^T$, is the unique operator in $\mathcal
% L(\mathcal H_2,\mathcal H_1)$ such that 
% $$
%          \langle g, Af\rangle_{\mathcal H_2}=\langle A^Tg,f\rangle_{\mathcal H_1}
% $$
% for all $f\in\mathcal H_1$ and $g\in\mathcal H_2$.  

For $A\in\mathcal L(\mathcal H,\mathcal H)$, if $A=A^T$,  then $A$ is called  
self-adjoint. A self-adjoint operator is said to be positive, if $\langle f,Af\rangle_{\mathcal H}\geq 0$ for all $f\in\mathcal H$.  
For positive operators $A$ and $B$, we use the notation $A\preceq B$ to mean  that $B-A$ is positive. 
For $\mu:[0,\infty)\rightarrow[0,\infty)$, define   $\mu(A)$ by spectral calculus. 
We say that $\mu$ is operator monotone if 
$$
         A\preceq B \Rightarrow \mu(A)\preceq \mu(B).
$$
Operator monotonicity plays an important role in analyzing  differences of operators. In particular,  the following lemma can be found in \cite[Theorem X.1.1]{bhatia2013matrix}.

\begin{lemma}\label{Lemma:operator-mono1}
Let $\mu(\cdot)$ be an operator monotone function on $[0,\infty)$ such that $\mu(0)=0$. Then for any positive operators $A$ and $B$ defined on $\mathcal H$, there holds
$$
    \|\mu(A)-\mu(B)\|_{\mathcal H\rightarrow\mathcal H}
    \leq \mu(\|A-B\|_{\mathcal H\rightarrow\mathcal H}).
$$
\end{lemma}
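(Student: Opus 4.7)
The plan is to combine operator monotonicity of $\mu$ with a translation-control inequality for $\mu$ on positive operators, the latter being extracted from the L\"owner integral representation of operator monotone functions on $[0,\infty)$ vanishing at $0$. Since $A$ and $B$ are positive and hence self-adjoint, $A-B$ is self-adjoint, and the spectral theorem yields the two-sided operator inequality $-c\,I\preceq A-B\preceq c\,I$ with $c:=\|A-B\|_{\mathcal H\rightarrow\mathcal H}$, equivalently $A\preceq B+c\,I$ and $B\preceq A+c\,I$. Applying operator monotonicity of $\mu$ immediately gives $\mu(A)\preceq \mu(B+c\,I)$ and $\mu(B)\preceq \mu(A+c\,I)$.

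The heart of the proof is the translation bound
\begin{equation*}
\mu(X+c\,I)-\mu(X)\ \preceq\ \mu(c)\,I,\qquad X\succeq 0,\ c\geq 0,
\end{equation*}
which I would establish via the L\"owner representation
\begin{equation*}
\mu(t)=\alpha t+\int_0^\infty \frac{\lambda t}{\lambda+t}\,d\nu(\lambda),
\end{equation*}
with $\alpha\geq 0$ and $\nu$ a positive Borel measure on $(0,\infty)$. The linear piece contributes $\alpha c\,I$ exactly, so it suffices to verify the inequality on each Nevanlinna kernel $f_\lambda(t)=\lambda t/(\lambda+t)$. Writing $f_\lambda(t)=\lambda-\lambda^{2}/(\lambda+t)$ and invoking the resolvent identity yields the closed form
\begin{equation*}
f_\lambda(X+c\,I)-f_\lambda(X)=\lambda^{2} c\,(\lambda\,I+X)^{-1}\bigl((\lambda+c)\,I+X\bigr)^{-1},
\end{equation*}
which is a positive operator whose norm is at most $\lambda c/(\lambda+c)=f_\lambda(c)$ and is therefore $\preceq f_\lambda(c)\,I$. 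Integrating against $d\nu(\lambda)$ and adding back the linear contribution delivers the translation bound.

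Feeding this bound into the one-sided comparisons derived in the first paragraph produces $\mu(A)-\mu(B)\preceq \mu(c)\,I$ and, by symmetry, $\mu(B)-\mu(A)\preceq \mu(c)\,I$. Because $\mu(A)-\mu(B)$ is self-adjoint, this two-sided operator inequality $-\mu(c)\,I\preceq \mu(A)-\mu(B)\preceq \mu(c)\,I$ is equivalent to the scalar bound $\|\mu(A)-\mu(B)\|_{\mathcal H\rightarrow\mathcal H}\leq \mu(c)=\mu(\|A-B\|_{\mathcal H\rightarrow\mathcal H})$, which is the claim.

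The main obstacle is the translation bound: it is where all the analytic content sits, and it cannot be obtained from operator monotonicity alone without invoking deeper structural information. The L\"owner representation, or equivalently the Bendat--Sherman/Ando theorem asserting that every operator monotone function on $[0,\infty)$ with $\mu(0)=0$ is operator concave, is essentially forced here; with it in hand the verification collapses to elementary resolvent manipulations on the one-parameter family $\{f_\lambda\}_{\lambda>0}$, after which everything else is operator-theoretic bookkeeping.
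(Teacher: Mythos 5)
Your argument is correct: the reduction to the one-sided comparisons $A\preceq B+c\,I$, the translation bound $\mu(X+c\,I)\preceq\mu(X)+\mu(c)\,I$ via the L\"owner representation and the resolvent computation on the kernels $f_\lambda(t)=\lambda t/(\lambda+t)$, and the final passage from the two-sided operator inequality to the norm bound are all sound. The paper does not prove this lemma itself but cites it as Theorem X.1.1 of Bhatia's \emph{Matrix Analysis}, and your proof is essentially the standard argument given there, so there is nothing to compare beyond noting the agreement.
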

There are numerous operator monotone functions presented in \cite[Chapter V]{bhatia2013matrix}, among which the monomial $t^r$ is the most widely used, just as the following lemma \cite[Theorem V.1.9]{bhatia2013matrix} shows.

\begin{lemma}\label{Lemma:operator-mono2}
The function $\mu(t)=t^r$ is operator monotone on $[0,\infty)$ for $0\leq r\leq 1$.
\end{lemma}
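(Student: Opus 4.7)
The plan is to prove the claim by combining an integral representation of $t^r$ with the classical operator anti-monotonicity of the inverse. The cases $r=0$ (constant function) and $r=1$ (identity) are immediate, so I focus on $0<r<1$, where the starting point is the standard Frullani-type identity
\begin{equation*}
    t^r = \frac{\sin(\pi r)}{\pi}\int_0^\infty \frac{t}{t+s}\, s^{r-1}\,ds, \qquad t\geq 0,\quad 0<r<1.
\end{equation*}
This identity lifts to self-adjoint positive operators via spectral calculus: for any positive $A\in\mathcal L(\mathcal H,\mathcal H)$,
\begin{equation*}
    A^r = \frac{\sin(\pi r)}{\pi}\int_0^\infty \bigl(I - s(A+sI)^{-1}\bigr)\, s^{r-1}\,ds,
\end{equation*}
with the integral understood in the strong operator sense. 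Thus it suffices to show that for each fixed $s>0$ the map $A\mapsto -s(A+sI)^{-1}$ is operator monotone, and then integrate against the positive measure $\frac{\sin(\pi r)}{\pi} s^{r-1}\,ds$.

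The core step is therefore the operator anti-monotonicity of the inverse on strictly positive operators: if $0\prec A\preceq B$, then $B^{-1}\preceq A^{-1}$. The standard argument is to write $A\preceq B$ as $B^{-1/2}AB^{-1/2}\preceq I$, take the inverse of this bounded self-adjoint operator (which reverses the order because $x\mapsto 1/x$ is decreasing on $(0,\infty)$), and then conjugate by $B^{-1/2}$ to recover $B^{-1}\preceq A^{-1}$. Applying this to the shifted operators $A+sI$ and $B+sI$ (which are strictly positive for $s>0$) gives $(B+sI)^{-1}\preceq (A+sI)^{-1}$, and multiplying by the negative scalar $-s$ yields
\begin{equation*}
    -s(A+sI)^{-1} \preceq -s(B+sI)^{-1},
\end{equation*}
i.e.\ $A\mapsto I-s(A+sI)^{-1}$ is operator monotone on strictly positive operators.

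To integrate, I would note that if $X(s)\preceq Y(s)$ for every $s>0$, then for every $f\in\mathcal H$ the scalar function $s\mapsto \langle f,(Y(s)-X(s))f\rangle_{\mathcal H}$ is nonnegative, so integrating against $s^{r-1}\,ds$ preserves nonnegativity and hence the operator order. This gives $A^r\preceq B^r$ for $0\prec A\preceq B$, and the case where $A$ or $B$ is merely positive semidefinite follows by replacing $A,B$ with $A+\varepsilon I, B+\varepsilon I$ and letting $\varepsilon\to 0^+$, using strong continuity of $X\mapsto X^r$ on bounded sets of positive operators.

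The main obstacle is making the integral representation rigorous at the operator level: one must check that the integral $\int_0^\infty \bigl(I-s(A+sI)^{-1}\bigr)s^{r-1}\,ds$ converges (the integrand's norm behaves like $\min(1,\|A\|/s)\,s^{r-1}$, integrable for $0<r<1$) and coincides with $A^r$ defined by functional calculus. Both checks reduce by the spectral theorem to the scalar Frullani identity, so no new analytic input is needed once the integrability is verified. Everything else---the anti-monotonicity of inversion and the preservation of order under integration---is elementary.
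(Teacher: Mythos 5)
Your proof is correct. Note that the paper does not prove this lemma at all: it simply cites \cite[Theorem V.1.9]{bhatia2013matrix}, and the argument you give---the L\"{o}wner-type integral representation $t^r=\frac{\sin(\pi r)}{\pi}\int_0^\infty \frac{t}{t+s}\,s^{r-1}\,ds$, reduction to the operator anti-monotonicity of $A\mapsto (A+sI)^{-1}$, integration of the pointwise order, and an $\varepsilon$-regularization to pass from strictly positive to merely positive operators---is essentially the standard proof of that cited theorem. All the steps you flag as needing care (norm convergence of the Bochner integral via the bound $\|A(A+sI)^{-1}\|\leq\min(1,\|A\|/s)$, agreement with the functional-calculus definition of $A^r$ via the scalar identity, and the conjugation trick $B^{-1/2}AB^{-1/2}\preceq I$ for inverting the order) check out, so the argument is complete as a self-contained replacement for the citation.
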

Based on Lemma \ref{Lemma:operator-mono1} and Lemma \ref{Lemma:operator-mono2}, we get the following lemma directly.
\begin{lemma}\label{Lemma:operator-mono3}
Let $0\leq r\leq 1$. Then for any positive operators $A$ and $B$ defined on $\mathcal H$, there holds
$$
    \|A^r-B^r\|_{\mathcal H\rightarrow\mathcal H}
    \leq  \|A-B\|_{\mathcal H\rightarrow\mathcal H}^r.
$$
\end{lemma}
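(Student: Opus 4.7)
The plan is to recognize Lemma \ref{Lemma:operator-mono3} as a direct specialization of Lemma \ref{Lemma:operator-mono1} via the power function $\mu(t)=t^r$. First I would verify that this $\mu$ meets the two hypotheses required by Lemma \ref{Lemma:operator-mono1}. On the one hand, $\mu(0)=0$ holds for every $r\in(0,1]$; on the other hand, Lemma \ref{Lemma:operator-mono2} is exactly the statement that $t^r$ is operator monotone on $[0,\infty)$ throughout the admissible range $0\leq r\leq 1$. Once both hypotheses are in place, Lemma \ref{Lemma:operator-mono1} applied to the positive operators $A$ and $B$ yields
$$
\|A^r-B^r\|_{\mathcal H\rightarrow\mathcal H}\leq \mu\bigl(\|A-B\|_{\mathcal H\rightarrow\mathcal H}\bigr)=\|A-B\|_{\mathcal H\rightarrow\mathcal H}^r,
$$
which is precisely the stated bound.

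The endpoint $r=0$ deserves only a brief separate remark: under the convention $A^0=B^0=I_{\mathcal H}$, the left-hand side vanishes identically and the inequality holds trivially, so no extra care is needed there. Beyond this edge case there is effectively no obstacle in the argument; all of the genuine work has already been done in Lemmas \ref{Lemma:operator-mono1} and \ref{Lemma:operator-mono2}, the former being a nontrivial Bhatia-type statement on operator monotone functions and the latter being the classical L\"owner--Heinz monotonicity of fractional powers. Lemma \ref{Lemma:operator-mono3} simply packages the most frequently invoked consequence — a H\"older-type bound $\|A^r-B^r\|\leq \|A-B\|^r$ — in the form needed for the operator-difference estimates that will appear repeatedly in the integral-operator analysis of Section \ref{Sec.Proof}.
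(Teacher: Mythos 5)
Your proposal is correct and follows exactly the route the paper takes: the paper derives Lemma \ref{Lemma:operator-mono3} "directly" by combining Lemma \ref{Lemma:operator-mono1} (the Bhatia bound for operator monotone functions vanishing at zero) with Lemma \ref{Lemma:operator-mono2} (operator monotonicity of $t^r$ for $0\leq r\leq 1$). Your additional remark on the trivial endpoint $r=0$ is a harmless refinement that the paper omits.
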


Besides the operator monotonicity, we also need two other lemmas concerning positive operators. The first one is 
the well known Cordes inequality that can  be found in 
  \cite[Lemma VII.5.5]{bhatia2013matrix}.

\begin{lemma}\label{Lemma:Codes-inequality}
Let $A$ and $B$ be positive operators on $\mathcal H$. Then for any $0\leq r\leq 1$, there holds
$$
     \|A^r B^r\|_{\mathcal H\rightarrow\mathcal H}\leq \|AB\|^r_{\mathcal H\rightarrow\mathcal H}.
$$
\end{lemma}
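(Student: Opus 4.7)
The plan is to reduce the Cordes inequality to a positive-operator norm comparison via the $C^*$-identity $\|T\|^2=\|T^*T\|$ and then close it through operator monotonicity of $t\mapsto t^r$ (Lemma~\ref{Lemma:operator-mono2}) in the form of Hansen's operator concavity inequality. Since $A,B$ are self-adjoint, $(A^rB^r)^*=B^rA^r$ and $(AB)^*=BA$, so
\begin{equation*}
\|A^rB^r\|^{2}_{\mathcal{H}\to\mathcal{H}}=\|(A^rB^r)^*(A^rB^r)\|_{\mathcal{H}\to\mathcal{H}}=\|B^rA^{2r}B^r\|_{\mathcal{H}\to\mathcal{H}},
\end{equation*}
\begin{equation*}
\|AB\|^{2r}_{\mathcal{H}\to\mathcal{H}}=\|(AB)^*(AB)\|^{r}_{\mathcal{H}\to\mathcal{H}}=\|BA^2B\|^{r}_{\mathcal{H}\to\mathcal{H}},
\end{equation*}
and the target reduces to showing $\|B^rA^{2r}B^r\|\le\|BA^2B\|^r$. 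By the homogeneity $A\mapsto A/\|A\|$, $B\mapsto B/\|B\|$, which scales both sides of the target equally, I may assume without loss of generality that $\|A\|,\|B\|\le 1$, so that $A$ and $B$ are contractions.

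Since $t\mapsto t^r$ is operator monotone on $[0,\infty)$ by Lemma~\ref{Lemma:operator-mono2}, it is also operator concave with $0^r=0$, so Hansen's inequality applies: for any positive operator $X$ and any contraction $C$,
\begin{equation*}
C^*X^rC\preceq (C^*XC)^r.
\end{equation*}
Applied with $C=B$ and $X=A^2$, this yields the positive-operator inequality $BA^{2r}B\preceq (BA^2B)^r$, whose operator norms (using $\|M^r\|_{\mathcal{H}\to\mathcal{H}}=\|M\|^r_{\mathcal{H}\to\mathcal{H}}$ for positive $M$ by spectral calculus) give the one-sided estimate $\|A^rB\|_{\mathcal{H}\to\mathcal{H}}\le\|AB\|^{r}_{\mathcal{H}\to\mathcal{H}}$.

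The main obstacle is to transfer the power $r$ onto the second factor as well, upgrading this one-sided estimate to the full Cordes inequality $\|A^rB^r\|\le\|AB\|^r$. A naive two-fold iteration of Hansen (once on each factor) produces only the weaker bound $\|A^rB^r\|\le\|AB\|^{r^2}$, so the sharp exponent requires more. The closing move is either a careful interpolation in the exponent via the polar decomposition $AB=U(BA^2B)^{1/2}$ with $U$ a partial isometry satisfying $\|U\|\le 1$, which makes $AB^2A$ unitarily similar to $BA^2B$ and allows one to relate $A^rB^r$ and $A^rB$ through spectral calculus on $|AB|$, or alternatively a derivation from the Araki--Lieb--Thirring trace inequality $\operatorname{Tr}((BA^2B)^r)\ge \operatorname{Tr}(B^rA^{2r}B^r)$ (valid for positive operators and $r\in(0,1]$) by passing to the Schatten $p\to\infty$ limit, which automatically converts a trace bound into the required operator-norm bound.
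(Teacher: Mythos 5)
This lemma is the classical Cordes inequality, which the paper does not prove but imports verbatim from Bhatia's book, so the only question is whether your argument is a complete self-contained proof. It is not. The part you actually carry out is correct: the $C^*$-identity reduction, the normalization $\|A\|,\|B\|\le 1$, and the application of Hansen's inequality $C^*X^rC\preceq (C^*XC)^r$ with $C=B$, $X=A^2$ do yield the one-sided estimate $\|A^rB\|_{\mathcal H\to\mathcal H}\le\|AB\|^r_{\mathcal H\to\mathcal H}$ (note this weaker statement is not scale-invariant in $B$ and genuinely needs $\|B\|\le 1$, which is already a sign that it is lossy). But you then acknowledge yourself that passing from $\|A^rB\|\le\|AB\|^r$ to the symmetric $\|A^rB^r\|\le\|AB\|^r$ ``requires more,'' and the two closing moves you offer are not executed. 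The first (polar decomposition $AB=U|AB|$ and ``spectral calculus on $|AB|$'') cannot be completed as described: $A^rB^r$ is not a function of $|AB|$, and the unitary equivalence of $AB^2A$ with $BA^2B$ on the orthogonal complement of the kernels gives no handle on the mixed power $A^rB^r$. The second (Araki--Lieb--Thirring plus a Schatten $p\to\infty$ limit) is a legitimate known route, but the single $q=1$ trace inequality you quote does not ``automatically'' convert to an operator-norm bound; one needs the full ALT family $\mathrm{Tr}\,(B^{r/2}A^rB^{r/2})^{q}\le \mathrm{Tr}\,(B^{1/2}AB^{1/2})^{rq}$ for $q\to\infty$, together with a finite-rank approximation to make traces meaningful on a general Hilbert space, and ALT is itself a theorem at least as deep as the one being proved. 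So the essential content of the lemma is left open.

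For reference, the standard proof behind the citation is shorter and more elementary than either of your proposed completions, and needs neither Hansen nor ALT. Normalize $\|A\|,\|B\|\le1$ and let $S\subseteq[0,1]$ be the set of exponents for which the inequality holds for all such $A,B$; clearly $0,1\in S$. If $r,s\in S$, then using $\|T\|^2=\rho(T^*T)$ for the spectral radius $\rho$ and the identity $\rho(XY)=\rho(YX)$,
\begin{equation*}
\bigl\|A^{\frac{r+s}{2}}B^{\frac{r+s}{2}}\bigr\|^2
=\rho\bigl(B^{\frac{r+s}{2}}A^{r+s}B^{\frac{r+s}{2}}\bigr)
=\rho\bigl(B^{s}A^{s}\cdot A^{r}B^{r}\bigr)
\le\|A^{s}B^{s}\|\,\|A^{r}B^{r}\|
\le\|AB\|^{r+s},
\end{equation*}
so $S$ is closed under midpoints; it therefore contains all dyadic rationals in $[0,1]$, and norm-continuity of $r\mapsto A^rB^r$ on $(0,1]$ gives $S=[0,1]$. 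If you want to keep your Hansen-based opening, you would still need an argument of this interpolation type (or a full ALT derivation) to symmetrize the exponent; as written, the proof stops exactly where the difficulty begins.
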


The second one, proved in  \cite[Proposition 6]{rudi2015less}, shows an important property of the projection operator.

\begin{lemma}\label{Lemma:Projection general}
Let $\mathcal H$, $\mathcal K$, and $\mathcal F$ be three separable
Hilbert spaces. Let $Z:\mathcal H\rightarrow\mathcal K$ be a bounded
linear operator and $P$ be a projection operator on $\mathcal H$
such that $\mbox{range}P=\overline{\mbox{range}Z^T}$. Then for any
bounded linear operator $F:\mathcal F\rightarrow\mathcal H$ and any
$\lambda>0$, we have
$$
     \|(I-P)F\|_{\mathcal F\rightarrow\mathcal H}\leq\lambda^{1/2}\|(Z^TZ+\lambda I)^{-1/2}F\|_{\mathcal F\rightarrow\mathcal H}.
$$
\end{lemma}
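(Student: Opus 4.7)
The plan is to reduce the desired operator-norm inequality to the single scalar operator inequality
\begin{equation*}
I - P \;\preceq\; \lambda\,(Z^T Z + \lambda I)^{-1} \qquad \text{on } \mathcal H,
\end{equation*}
and then to establish this via the spectral calculus applied to the positive self-adjoint operator $A := Z^T Z$. The $C^*$-identity $\|BF\|^2_{\mathcal F\to\mathcal H} = \|F^T B^T B F\|_{\mathcal F\to\mathcal F}$ will then transport the estimate from $\mathcal H$ back to the operator norm of maps $\mathcal F\to\mathcal H$.

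First I would unpack the geometry of $P$. Since $P$ is an orthogonal projection with $\mathrm{range}(P) = \overline{\mathrm{range}(Z^T)} = \ker(Z)^{\perp}$, the complementary projection $I-P$ is precisely the orthogonal projection onto $\ker(Z)$; in particular it is self-adjoint and idempotent, so $(I-P)^T(I-P) = I-P$. From $\|Zf\|^2 = \langle Af, f\rangle$ I read off $\ker(A) = \ker(Z)$, hence $\overline{\mathrm{range}(A)} = \ker(A)^{\perp} = \mathrm{range}(P)$, which identifies $P$ with the spectral projection of $A$ associated with the set $(0,\infty)$.

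Second, writing $A = \int_{[0,\infty)} t\,dE_t$ for the spectral resolution, the geometric identification above becomes $I - P = E_{\{0\}}$, while $\lambda(A+\lambda I)^{-1} = \int_{[0,\infty)} \frac{\lambda}{t+\lambda}\,dE_t$. Comparing the two integrands pointwise (both equal $1$ at $t=0$, whereas on $(0,\infty)$ the left one vanishes and the right one is strictly positive) delivers the desired operator inequality $I - P \preceq \lambda(A+\lambda I)^{-1}$. Conjugation by $F$ preserves the order $\preceq$, so
\begin{equation*}
F^T (I-P) F \;\preceq\; \lambda\, F^T (Z^TZ+\lambda I)^{-1} F
\end{equation*}
as positive self-adjoint operators on $\mathcal F$. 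Taking operator norms (both sides are positive) and invoking the $C^*$-identity in the form $\|(I-P)F\|^2 = \|F^T(I-P)F\|$ and $\|(Z^TZ+\lambda I)^{-1/2}F\|^2 = \|F^T(Z^TZ+\lambda I)^{-1}F\|$, followed by a square root, yields the stated bound. No serious obstacle is anticipated; the only delicate point is the identification $\overline{\mathrm{range}(Z^TZ)} = \overline{\mathrm{range}(Z^T)}$ that underpins the spectral description of $P$, and this follows from the standard equality $\ker(Z^TZ) = \ker(Z)$ upon taking orthogonal complements.
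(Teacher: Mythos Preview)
Your proof is correct. The paper does not actually prove this lemma; it merely cites \cite[Proposition 6]{rudi2015less} as the source. Your route---identifying $I-P$ with the spectral projection $E_{\{0\}}$ of $A=Z^TZ$ via $\ker(Z^TZ)=\ker(Z)$, then comparing the scalar functions $\mathbf{1}_{\{0\}}(t)\le \lambda/(t+\lambda)$ on $[0,\infty)$ to obtain $I-P\preceq\lambda(Z^TZ+\lambda I)^{-1}$, and finally conjugating by $F$ and invoking the $C^*$-identity---is a clean, self-contained argument that supplies what the paper omits.
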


\subsection{Operator representation of { sketching}}
Different from the classical approach in \cite{narcowich2002scattered,narcowich2007direct,hesse2017radial}, our analysis is based on the operator theory presented in the previous subsection. For this purpose, we should at first give an operator representation for (\ref{Nystrom}).
For  any  $f\in \mathcal N_\phi$, define the sampling operator \cite{smale2004shannon,smale2005shannon} by $S_Df:=(f(x_1),\dots,f(x_{|D|}))^T$, then its  scaled adjoint is 
$
    S^T_{D}{\bf c}:= \frac{1}{|D|}\sum_{i=1}^{|D|}c_i\phi_{x_i}.
$ 
Write
\begin{equation}\label{def.empirical-operator}
     L_{\phi,D}f:=S^T_{D} S_{D}f= \frac1{|D|}\sum_{i=1}^{|D|}f(x_i)\phi_{x_i},\qquad f\in\mathcal N_\phi.
\end{equation}
Then $L_{\phi,D}:\mathcal N_\phi\rightarrow\mathcal N_\phi$ is a positive operator of  rank $|D|$.
Throughout the proof, we denote   $D_m=\mathcal T_{s^*}$  
% Let $\mathcal S_{D_m}:\mathcal H_{D_m}\rightarrow \mathbb R^{|D|}$ be   defined by $\mathcal S_{D_m}f=(f(x_1),\dots,f(x_{|D|}))^T$   such that the range of its scaled adjoint  operator $\mathcal S^T_{D_m}$ is exactly $\mathcal H_{D_m}$,
and $\mathcal H_{D_m}=\mathcal H_{\mathcal T_{s^*},\phi}$ for the sake of brevity.
% Denote by 
% $
%       U_m\Sigma_m V_m^T
% $ the SVD of $\mathcal S_{D_m}$. 
Let   $P_{D_m}$ be the projection from $\mathcal N_\phi$ to $\mathcal
H_{D_m}$. It is easy to derive that
\begin{equation}\label{projection-p-1}
        (I-P_{D_m})^v=I-P_{D_m},\qquad \forall v\in\mathbb N.
\end{equation}

The following lemma proved in \cite{rudi2015less,lu2019analysis}  presents an operator representation of $f_{D,s^*,\lambda}$.

\begin{lemma}\label{Lemma:operator-representation}
Let $f_{D,s^*,\lambda}$ be defined by (\ref{Nystrom}), then we have
\begin{equation}\label{Nystrom operator}
       f_{D,s^*,\lambda}=g_{D_m,\lambda}(L_{\phi,D})S_D^Ty_D,
\end{equation}
where for a positive operator $A$,
\begin{equation}\label{spetral definetion}
     g_{D_m,\lambda}(A)
     :=
     (P_{D_m}AP_{D_m}+\lambda I)^{-1}P_{D_m} .
\end{equation}
\end{lemma}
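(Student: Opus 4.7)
The strategy is to rewrite (\ref{Nystrom}) as a Tikhonov-type normal equation on $\mathcal N_\phi$ and then reduce to $\mathcal H_{D_m}$ using the projector $P_{D_m}$. Using $S_D^TS_D=L_{\phi,D}$ together with the defining identity $\langle S_Df,c\rangle_{\ell^2}=|D|\langle f,S_D^Tc\rangle_\phi$ (which is what the ``scaled adjoint'' means here), the empirical risk becomes
\begin{equation*}
\frac{1}{|D|}\|S_Df-y_D\|_{\ell^2}^2 \;=\; \langle f,L_{\phi,D}f\rangle_\phi - 2\langle f,S_D^Ty_D\rangle_\phi + \mathrm{const}.
\end{equation*}
Adding $\lambda\|f\|_\phi^2$ yields a strictly convex quadratic on the finite-dimensional subspace $\mathcal H_{D_m}$, so $f_{D,s^*,\lambda}$ exists and is unique.

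Next I would write down the first-order optimality condition over $\mathcal H_{D_m}$. The gradient of the objective on the ambient space $\mathcal N_\phi$ equals $2(L_{\phi,D}+\lambda I)f-2S_D^Ty_D$, and optimality over the subspace demands that its projection onto $\mathcal H_{D_m}$ vanish, i.e.\
\begin{equation*}
P_{D_m}L_{\phi,D}f_{D,s^*,\lambda} + \lambda P_{D_m}f_{D,s^*,\lambda} \;=\; P_{D_m}S_D^Ty_D.
\end{equation*}
Since $f_{D,s^*,\lambda}\in\mathcal H_{D_m}$ satisfies $P_{D_m}f_{D,s^*,\lambda}=f_{D,s^*,\lambda}$, I can insert a $P_{D_m}$ between $L_{\phi,D}$ and $f_{D,s^*,\lambda}$ without loss, and the second term simplifies to $\lambda f_{D,s^*,\lambda}$, producing
\begin{equation*}
\bigl(P_{D_m}L_{\phi,D}P_{D_m}+\lambda I\bigr)f_{D,s^*,\lambda} \;=\; P_{D_m}S_D^Ty_D.
\end{equation*}
The operator $P_{D_m}L_{\phi,D}P_{D_m}$ is self-adjoint and positive semi-definite on $\mathcal N_\phi$, so $P_{D_m}L_{\phi,D}P_{D_m}+\lambda I$ is invertible (indeed, it acts as $\lambda I$ on $\mathcal H_{D_m}^\perp$); inverting and reading off the definition (\ref{spetral definetion}) of $g_{D_m,\lambda}$ yields $f_{D,s^*,\lambda}=g_{D_m,\lambda}(L_{\phi,D})S_D^Ty_D$, which is precisely (\ref{Nystrom operator}).

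The argument is essentially bookkeeping rather than deep analysis. The only points that require genuine care are (a) the $1/|D|$ factor hidden inside the scaled adjoint $S_D^T$, so that $L_{\phi,D}=S_D^TS_D$ emerges with exactly the normalization appearing in (\ref{def.empirical-operator}); and (b) checking that the bracket $P_{D_m}L_{\phi,D}P_{D_m}+\lambda I$ is invertible on the whole of $\mathcal N_\phi$ rather than only on $\mathcal H_{D_m}$, which is what makes $g_{D_m,\lambda}(L_{\phi,D})$ a well-defined bounded operator on $\mathcal N_\phi$ and allows it to act on $S_D^Ty_D$ directly.
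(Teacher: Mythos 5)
Your proof is correct. The paper itself does not prove this lemma---it defers to \cite{rudi2015less,lu2019analysis}---but your derivation is exactly the standard argument from those references: reduce the empirical risk to the quadratic form $\langle f,L_{\phi,D}f\rangle_\phi-2\langle f,S_D^Ty_D\rangle_\phi+\mathrm{const}$ via the scaled adjoint, impose the projected first-order condition on $\mathcal H_{D_m}$, use $P_{D_m}f_{D,s^*,\lambda}=f_{D,s^*,\lambda}$ to close the bracket into $P_{D_m}L_{\phi,D}P_{D_m}+\lambda I$, and invert. The bookkeeping on the $1/|D|$ normalization is right, and your observation that $(P_{D_m}L_{\phi,D}P_{D_m}+\lambda I)^{-1}P_{D_m}$ automatically lands in $\mathcal H_{D_m}$ (since $\lambda f=P_{D_m}(\cdot)$ from the equation itself) correctly reconciles the unconstrained inversion with the constraint $f\in\mathcal H_{D_m}$. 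The only implicit hypothesis is $\lambda>0$, which is needed for the plain inverse in \eqref{spetral definetion} (the paper's matrix formula \eqref{Analytic-solution1} uses a pseudo-inverse to cover $\lambda=0$); this is harmless since the lemma is only invoked with $\lambda\sim|D|^{-2\gamma/(2\gamma(\alpha-\beta)+d)}>0$.
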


Since $g_{D_m,\lambda}$ plays a crucial rule in our algorithm, we need the following important   property of $g_{D_m,\lambda}$ which was proved in \cite{sun2021nystr}.

\begin{lemma}\label{Lemma:property-g}
Let $g_{D_m,\lambda}$ be defined by (\ref{spetral definetion}). We have
% \begin{eqnarray}\label{Important1}
%      &&g_{D_m,\lambda}(L_{\phi,D})(L_{\phi,D}+\lambda
%      I)V_{D_m}BV_{D_m}^T
%       =
%      V_{D_m}BV_{D_m}^T 
% \end{eqnarray}
% holds for    any bounded linear operator $B$ and
\begin{equation}\label{Important2}
    \|(L_{\phi,D}+\lambda
      I)^{1/2}g_{D_m,\lambda}(L_{\phi,D})(L_{\phi,D}+\lambda
      I)^{1/2}\|_{\phi\rightarrow\phi}\leq 1,
\end{equation}
where $\|\cdot\|_{\phi\rightarrow\phi}$ denotes the operator norm from $\mathcal N_\phi$ to $\mathcal N_\phi$.
\end{lemma}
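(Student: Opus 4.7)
\medskip

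\textbf{Proof plan.} The plan is to exploit the fact that $P_{D_m}$ commutes with $M := P_{D_m} L_{\phi,D} P_{D_m} + \lambda I$, factor the operator of interest as $QQ^T$, and then use the identity $\|QQ^T\|=\|Q^TQ\|$ to reduce everything to computing $Q^TQ$ explicitly; a projection-identity cancellation then makes $Q^TQ$ collapse to $P_{D_m}$, which has norm at most $1$.

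\emph{Step 1 (commutativity and symmetric square root form).} First I would verify $P_{D_m} M = M P_{D_m}$ using $P_{D_m}^2=P_{D_m}$: both sides equal $P_{D_m} L_{\phi,D} P_{D_m} + \lambda P_{D_m}$. This commutativity is inherited by $M^{\pm 1/2}$ via the functional calculus, so $g_{D_m,\lambda}(L_{\phi,D})=M^{-1}P_{D_m}=M^{-1/2}P_{D_m}M^{-1/2}$. In particular, $g_{D_m,\lambda}(L_{\phi,D})$ is positive and self-adjoint.

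\emph{Step 2 (factorization $QQ^T$).} Next I would introduce
\[
Q:=(L_{\phi,D}+\lambda I)^{1/2}M^{-1/2}P_{D_m},
\]
so that, using $P_{D_m}^2=P_{D_m}$ and self-adjointness of each factor,
\[
QQ^T=(L_{\phi,D}+\lambda I)^{1/2}g_{D_m,\lambda}(L_{\phi,D})(L_{\phi,D}+\lambda I)^{1/2}.
\]
Because $\|QQ^T\|_{\phi\to\phi}=\|Q\|_{\phi\to\phi}^{2}=\|Q^TQ\|_{\phi\to\phi}$ (cf.\ \eqref{norm-relation-222}), it suffices to bound $\|Q^TQ\|_{\phi\to\phi}$, where
\[
Q^TQ=P_{D_m}M^{-1/2}(L_{\phi,D}+\lambda I)M^{-1/2}P_{D_m}.
\]

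\emph{Step 3 (decomposition and cancellation).} Now I would write $L_{\phi,D}+\lambda I = M + (L_{\phi,D}-P_{D_m}L_{\phi,D}P_{D_m})$ (note $L_{\phi,D}-P_{D_m}L_{\phi,D}P_{D_m}\succeq 0$ since $P_{D_m}$ is a contraction and $L_{\phi,D}\succeq 0$). Substituting and using $P_{D_m}M^{-1/2}=M^{-1/2}P_{D_m}$ from Step~1,
\[
Q^TQ=P_{D_m}+M^{-1/2}\,P_{D_m}\bigl(L_{\phi,D}-P_{D_m}L_{\phi,D}P_{D_m}\bigr)P_{D_m}\,M^{-1/2}.
\]
The second term vanishes identically because
\[
P_{D_m}\bigl(L_{\phi,D}-P_{D_m}L_{\phi,D}P_{D_m}\bigr)P_{D_m}=P_{D_m}L_{\phi,D}P_{D_m}-P_{D_m}^{2}L_{\phi,D}P_{D_m}^{2}=0,
\]
again by the projection identity $P_{D_m}^2=P_{D_m}$. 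Hence $Q^TQ=P_{D_m}$, and $\|Q^TQ\|_{\phi\to\phi}=\|P_{D_m}\|_{\phi\to\phi}\leq 1$, which yields \eqref{Important2}.

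\emph{Main obstacle.} There is no serious analytic difficulty; the main obstacle is the algebraic one of spotting the right factorization. Specifically, one must notice that writing the middle operator as $M^{-1/2}P_{D_m}M^{-1/2}$ (legitimate only after establishing that $P_{D_m}$ and $M$ commute) lets one pull out a $QQ^T$ structure, and that the ``sandwich" $P_{D_m}(L_{\phi,D}-P_{D_m}L_{\phi,D}P_{D_m})P_{D_m}$ collapses to $0$. Once the commutativity is in hand, the rest is routine.
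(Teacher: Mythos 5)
Your proof is correct. Note first that the paper does not actually prove this lemma: it simply cites \cite{sun2021nystr}, so there is no in-paper argument to compare against. Your argument is a valid, self-contained replacement. The key steps all check out: $P_{D_m}M=MP_{D_m}=P_{D_m}L_{\phi,D}P_{D_m}+\lambda P_{D_m}$, so $P_{D_m}$ commutes with $M^{\pm 1/2}$ by functional calculus (legitimate since $M\succeq\lambda I$ is boundedly invertible and self-adjoint), giving $g_{D_m,\lambda}(L_{\phi,D})=M^{-1/2}P_{D_m}M^{-1/2}$; the $QQ^T$ factorization and the C*-identity $\|QQ^T\|=\|Q^TQ\|$ from \eqref{norm-relation-222} then reduce everything to $Q^TQ=P_{D_m}M^{-1/2}(L_{\phi,D}+\lambda I)M^{-1/2}P_{D_m}$; and the sandwich $P_{D_m}\bigl(L_{\phi,D}-P_{D_m}L_{\phi,D}P_{D_m}\bigr)P_{D_m}$ indeed vanishes by $P_{D_m}^2=P_{D_m}$. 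Your computation in fact proves slightly more than the lemma asks: $Q^TQ=P_{D_m}$ shows the sandwiched operator $QQ^T$ is itself an orthogonal projection, so its norm is exactly $1$ whenever $P_{D_m}\neq 0$. One small correction: your parenthetical claim that $L_{\phi,D}-P_{D_m}L_{\phi,D}P_{D_m}\succeq 0$ ``since $P_{D_m}$ is a contraction and $L_{\phi,D}\succeq 0$'' is false in general (for a positive $L$ and a projection $P$ that do not commute, $\langle f,(L-PLP)f\rangle=\|L^{1/2}f\|^2-\|L^{1/2}Pf\|^2$ can be negative); fortunately this positivity is never used, since you show the relevant sandwiched term is identically zero, so the aside should simply be deleted.
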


We then introduce a data-free limit of $f_{D_m,s^*,\lambda}$.  It is easy to derive \cite{lu2019analysis} that the adjoint operator of the Canonical inclusion $J_{\phi,\psi}:\mathcal N_\phi\rightarrow\mathcal N_\psi$
  satisfies 
$$
     J_{\phi,\psi}^T f(x)=\int_{\mathbb S^d} \phi(x,x')f(x')d\omega(x'),\qquad f\in \mathcal N_\psi. 
$$
Define further 
$L_{\phi,\psi}=J_{\phi,\psi} J_{\phi,\psi}^T$ and $\mathcal L_\phi=J_{\phi,\psi}^TJ_{\phi,\psi}$, then $L_{\phi,\psi}$ is an integral operator from $\mathcal N_\psi\rightarrow \mathcal N_\psi$ and $\mathcal L_{\phi}$ is an integral operator from $\mathcal N_\phi\rightarrow\mathcal N_\phi$.
% For the sake of brevity, we write $L_{\phi,\psi} $ to be $L_{\phi}$  for $\beta=0$. That is, 
% $L_\phi$ is an integral operator from $L^2(\mathbb S^d)$ to $L^2(\mathbb S^d)$.
In this way, $\mathcal L_\phi$ and $J_{\phi,\psi}^Tf^*$ can be regarded as     data-free limits of $L_{\phi,D}$ and $S_D^Ty_D$, respectively, and therefore 
\begin{equation}\label{population-version-1}
      f^\diamond_{D_m,\lambda}=  J_{\phi,\psi} g_{D_m,\lambda}(\mathcal L_{\phi}) J_{\phi,\psi}^T f^* 
\end{equation}
can be regarded as a data-free limit of $J_{\phi,\psi} f_{D,s^*,\lambda}$.

Since $ 
    \langle f,\sqrt{\hat{\psi}_k}Y_{k,\ell}\rangle_\psi
    =\hat{f}_{k,\ell}/ \sqrt{\hat{\psi}_k},
$ 
it follows from the well known Funk-Hecke formula \cite{muller2006spherical}
\begin{equation}\label{funkhecke}
                  \int_{\mathbb{S}^{d}}\phi(x\cdot x')Y_{k,\ell}(x')d\omega(x')  =\hat{\phi}_kY_{k,\ell}(x),\quad\forall\ \ell=1,\dots,Z(d,k), ~ k=0,1,\dots
\end{equation}
that  for any $f\in\mathcal N_\psi$, there holds
\begin{eqnarray*}
   L_{\phi,\psi} f (x)
   &=& \sum_{k=0}^\infty  \sum_{\ell=1}^{Z(d,k)}\langle f,\sqrt{\hat{\psi}_k}Y_{k,\ell}\rangle_\psi \int_{\mathbb{S}^{d}}\phi(x\cdot x')\sqrt{\hat{\psi}_k}Y_{k,\ell}(x')d\omega(x') \\
   &=&
   \sum_{k=0}^\infty  \hat{\phi}_k \sum_{\ell=1}^{Z(d,k)}\langle f,\sqrt{\hat{\psi}_k}Y_{k,\ell}\rangle_\psi \sqrt{\hat{\psi}_k}Y_{k,\ell}(x)
   =
   \sum_{k=0}^\infty  \hat{\phi}_k \sum_{\ell=1}^{Z(d,k)}\hat{f}_{k,\ell} Y_{k,\ell}(x).
\end{eqnarray*}
We then present the following   operator representation of $f^*\in\mathcal N_\varphi$.

\begin{lemma}\label{Lemma:source-condition}
If $f^*\in\mathcal N_\varphi$, (\ref{kernel-relation}) and (\ref{kernel-relation-1}) hold, then there exists an $h^*\in\mathcal N_\psi$ such that 
\begin{equation}\label{source-condition}
    f^*=L_{\phi,\psi}^{\frac{\alpha-\beta}2}h^*,\qquad  \mbox{and} \quad
    \|f^*\|_\varphi=\|h^*\|_\psi,
\end{equation}
where $L_{\phi,\psi}^r$ for $r\geq 0$ is defined by spectral calculus, i.e.,
$$
    L_{\phi,\psi}^rf=\sum_{k=0}^\infty  \hat{\phi}_k^r \sum_{\ell=1}^{Z(d,k)}\langle f,\sqrt{\hat{\psi}_k}Y_{k,\ell}\rangle_\psi \sqrt{\hat{\psi}_k}Y_{k,\ell}.
$$    
\end{lemma}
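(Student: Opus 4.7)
The plan is to work at the level of Fourier coefficients on $\mathbb S^d$, using that $\{\sqrt{\hat\psi_k}\,Y_{k,\ell}\}$ is an orthonormal basis of $\mathcal N_\psi$. First I would recall that any $f^*\in\mathcal N_\varphi$ admits the spherical-harmonic expansion $f^*=\sum_{k,\ell}\hat f^*_{k,\ell}Y_{k,\ell}$ with $\|f^*\|_\varphi^2=\sum_{k,\ell}\hat\phi_k^{-\alpha}(\hat f^*_{k,\ell})^2<\infty$, by the definition of the native space norm applied with weights $\hat\varphi_k=\hat\phi_k^\alpha$.

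Next I would construct the candidate $h^*$ explicitly by prescribing its Fourier coefficients: set $\hat h^*_{k,\ell}:=\hat\phi_k^{-(\alpha-\beta)/2}\hat f^*_{k,\ell}$ and $h^*:=\sum_{k,\ell}\hat h^*_{k,\ell}Y_{k,\ell}$. A short computation using the inner product of $\mathcal N_\psi$ shows that the coefficient of $h^*$ with respect to the orthonormal basis element $\sqrt{\hat\psi_k}\,Y_{k,\ell}$ equals $\hat h^*_{k,\ell}/\sqrt{\hat\psi_k}$. Substituting this into the spectral formula given in the statement yields
\begin{equation*}
L_{\phi,\psi}^{(\alpha-\beta)/2}h^*=\sum_{k=0}^{\infty}\hat\phi_k^{(\alpha-\beta)/2}\sum_{\ell=1}^{Z(d,k)}\hat h^*_{k,\ell}\,Y_{k,\ell}=\sum_{k=0}^{\infty}\sum_{\ell=1}^{Z(d,k)}\hat f^*_{k,\ell}\,Y_{k,\ell}=f^*,
\end{equation*}
which is exactly the source representation.

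Finally I would verify that $h^*\in\mathcal N_\psi$ together with the norm identity by a direct weight calculation: using $\hat\psi_k=\hat\phi_k^\beta$,
\begin{equation*}
\|h^*\|_\psi^2=\sum_{k,\ell}\hat\psi_k^{-1}(\hat h^*_{k,\ell})^2=\sum_{k,\ell}\hat\phi_k^{-\beta}\hat\phi_k^{-(\alpha-\beta)}(\hat f^*_{k,\ell})^2=\sum_{k,\ell}\hat\phi_k^{-\alpha}(\hat f^*_{k,\ell})^2=\|f^*\|_\varphi^2,
\end{equation*}
which also justifies a posteriori the convergence of the series defining $h^*$.

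I do not expect a genuine obstacle here: once one identifies the correct orthonormal basis in $\mathcal N_\psi$, the argument is purely an exponent-bookkeeping exercise using the relations $\hat\psi_k=\hat\phi_k^\beta$ and $\hat\varphi_k=\hat\phi_k^\alpha$. The only point requiring a little care is the assumption $\hat\phi_k>0$ (part of the SBF definition), which is what allows the formal division by $\hat\phi_k^{(\alpha-\beta)/2}$ when defining $\hat h^*_{k,\ell}$; this ensures $h^*$ is well-defined term by term, and the norm identity above then upgrades this to a genuine element of $\mathcal N_\psi$.
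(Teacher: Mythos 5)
Your proposal is correct and follows essentially the same route as the paper: you define the same candidate $h^*$ with coefficients $\hat\phi_k^{(\beta-\alpha)/2}\hat f^*_{k,\ell}$, verify the source representation through the spectral formula, and obtain the norm identity $\|h^*\|_\psi=\|f^*\|_\varphi$ by the same weight bookkeeping with $\hat\psi_k=\hat\phi_k^\beta$ and $\hat\varphi_k=\hat\phi_k^\alpha$. The only difference is that you spell out the verification of $L_{\phi,\psi}^{(\alpha-\beta)/2}h^*=f^*$, which the paper dismisses as obvious.
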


\begin{proof} Due to (\ref{assumption-on-phi}), we can define 
$$
    h^*= \sum_{k=0}^\infty  \hat{\phi}_k^{\frac{\beta-\alpha}2} \sum_{\ell=1}^{Z(d,k)}\hat{(f^*)}_{k,\ell}Y_{k,\ell},
$$
then \eqref{source-condition} obviously holds. The only thing left is to prove $h^*\in \mathcal N_\psi$. Due to the definition of $\|\cdot\|_\psi$ and
$$
   \hat{(h^*)}_{k,\ell}=\int_{\mathbb S^d}
   \sum_{k'=0}^\infty  \hat{\phi}_{k'}^{\frac{\beta-\alpha}2} \sum_{\ell'=1}^{Z(d,k')}\hat{(f^*)}_{k',\ell'}Y_{k',\ell'}(x)Y_{k,\ell}(x)d\omega(x)=  \hat{\phi}_k^{\frac{\beta-\alpha}2}  \hat{(f^*)}_{k,\ell},
$$
we have from (\ref{kernel-relation}) and (\ref{kernel-relation-1}) that
\begin{eqnarray*}
    \|h^*\|_{\psi}^2
   &=&
    \sum_{k=0}^\infty\hat{\psi}_k^{-1}\sum_{\ell=1}^{Z(d,k)}|\hat{(h^*)}_{k,\ell}|^2 
     = 
    \sum_{k=0}^\infty\hat{\psi}_k^{-1}\sum_{\ell=1}^{Z(d,k)}
    \hat{\phi}_k^{\beta-\alpha} |\hat{(f^*)}_{k,\ell}|^2\\
    &=&
    \sum_{k=0}^\infty \hat{\phi}_k^{-\alpha} \sum_{\ell=1}^{Z(d,k)}
    |\hat{(f^*)}_{k,\ell}|^2=
    \sum_{k=0}^\infty \hat{\varphi}_k^{-1} \sum_{\ell=1}^{Z(d,k)}
    |\hat{(f^*)}_{k,\ell}|^2.
\end{eqnarray*}
Noting further that  $f^*\in\mathcal N_\varphi$ implies
$$
   \|f^*\|_\varphi^2=\sum_{k=0}^\infty\hat{\varphi}_k^{-1}
               \sum_{\ell=1}^{Z(d,k)}|\hat{f}_{k,\ell}|^2<\infty,
$$
we then have 
$
\|h^*\|_{\psi}=\|f^*\|_\varphi<\infty.$ This completes the proof of Lemma \ref{Lemma:source-condition}.
\end{proof}

\subsection{Error decomposition via integral operator}

Our analysis is  motivated by \cite{lu2019analysis}  to 
  divide  the fitting error into approximation error, sample error and computational error, respectively.

\begin{proposition}\label{Proposition:Error-decomposition-out}
Let $f_{D,s^*,\lambda}$ be defined by (\ref{Nystrom}). Under  (\ref{kernel-relation}), if $f^*\in\mathcal N_\varphi$ with (\ref{kernel-relation-1}),  then  
\begin{eqnarray}\label{Error decomposition}
   \|J_{\phi,\psi}f_{D,s^*,\lambda}-f^*\|_\psi
   \leq
  \mathcal A(D,\lambda,m)+\mathcal S(D,\lambda,m)+\mathcal C(D,\lambda,m),
\end{eqnarray}
where
\begin{eqnarray*}
   \mathcal A(D,\lambda,m)&:=& \lambda\|(J_{\phi,\psi}P_{D_m}J_{\phi,\psi}^T+\lambda I)^{-1}(J_{\phi,\psi}P_{D_m}J^T_{\phi,\psi})^{\frac{\alpha-\beta}{2}}h^*\|_\psi, \label{Def-A*} \\
    \mathcal S(D,\lambda,m)&:=&\mathcal S_1(D,\lambda,m)+\mathcal S_2(D,\lambda,m)+\mathcal S_3(D,\lambda,m),\\
   \mathcal S_1(D,\lambda,m)&:=&\|J_{\phi,\psi}(P_{D_m}  L_{\phi,D} P_{D_m}+\lambda I)^{-1}P_{D_m}(J_{\phi,\psi}^Tf^*-S_D^Ty_D)\|_\psi, \label{Def-S1*}\\
   \mathcal S_2(D,\lambda,m) &:=&
   \|J_{\phi,\psi}(P_{D_m}  L_{\phi,D} P_{D_m}+\lambda I)^{-1}
         P_{D_m}(L_{\phi,D}-\mathcal L_\phi)P_{D_m}J_{\phi,\psi}^T
         \label{Def-S2*}\\
         &&
         (J_{\phi,\psi}P_{D_m}J_{\phi,\psi}^T+\lambda I)^{-1} 
          (J_{\phi,\psi}P_{D_m}J^T_{\phi,\psi})^{\frac{\alpha-\beta}{2}}h^*\|_\psi, \nonumber\\
     \mathcal S_3(D,\lambda,m) &:=&
   \|J_{\phi,\psi}(P_{D_m}  L_{\phi,D} P_{D_m}+\lambda I)^{-1}
         P_{D_m}(L_{\phi,D}-\mathcal L_\phi)P_{D_m}J_{\phi,\psi}^T\\ &&(J_{\phi,\psi}P_{D_m}J_{\phi,\psi}^T+\lambda I)^{-1} 
          \left((J_{\phi,\psi}J^T_{\phi,\psi})^{\frac{\alpha-\beta}{2}}-(J_{\phi,\psi}P_{D_m}J^T_{\phi,\psi})^{\frac{\alpha-\beta}{2}}\right)h^*\|_\psi, \nonumber\\ 
   \mathcal C(D,\lambda,m)&:=&\lambda\|(J_{\phi,\psi}P_{D_m}J_{\phi,\psi}^T+\lambda I)^{-1} 
    ((J_{\phi,\psi}J^T_{\phi,\psi})^{\frac{\alpha-\beta}{2}}-(J_{\phi,\psi}P_{D_m}J^T_{\phi,\psi})^{\frac{\alpha-\beta}{2}})h^*\|_\psi, \nonumber
\end{eqnarray*}
and  $h^*$ is given in Lemma \ref{Lemma:source-condition}.
\end{proposition}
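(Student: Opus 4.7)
The plan is to insert two intermediate functions between $J_{\phi,\psi}f_{D,s^*,\lambda}$ and $f^*$ and apply the triangle inequality. Using Lemma \ref{Lemma:operator-representation}, introduce
\[
f^{(a)}:=J_{\phi,\psi}g_{D_m,\lambda}(L_{\phi,D})J_{\phi,\psi}^T f^*, \qquad f^{(b)}:=J_{\phi,\psi}g_{D_m,\lambda}(\mathcal L_\phi)J_{\phi,\psi}^T f^*,
\]
so that $J_{\phi,\psi}f_{D,s^*,\lambda}-f^*=[J_{\phi,\psi}f_{D,s^*,\lambda}-f^{(a)}]+[f^{(a)}-f^{(b)}]+[f^{(b)}-f^*]$. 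The first bracket equals $J_{\phi,\psi}(P_{D_m}L_{\phi,D}P_{D_m}+\lambda I)^{-1}P_{D_m}(S_D^T y_D-J_{\phi,\psi}^T f^*)$ directly from the definition of $g_{D_m,\lambda}$, and its $\psi$-norm is exactly $\mathcal S_1$.

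For the second bracket I would apply the resolvent identity $A^{-1}-B^{-1}=A^{-1}(B-A)B^{-1}$ to $A=P_{D_m}L_{\phi,D}P_{D_m}+\lambda I$ and $B=P_{D_m}\mathcal L_\phi P_{D_m}+\lambda I$ to produce a factor $P_{D_m}(\mathcal L_\phi-L_{\phi,D})P_{D_m}$ sandwiched between the two resolvents. The central manipulation is then to apply \eqref{exchange-adjoint} to the operator $A:=J_{\phi,\psi}P_{D_m}$ with $\nu(t)=(t+\lambda)^{-1}$; since $AA^T=J_{\phi,\psi}P_{D_m}J_{\phi,\psi}^T$ and $A^TA=P_{D_m}\mathcal L_\phi P_{D_m}$, taking adjoints yields the key identity
\[
(P_{D_m}\mathcal L_\phi P_{D_m}+\lambda I)^{-1}P_{D_m}J_{\phi,\psi}^T = P_{D_m}J_{\phi,\psi}^T(J_{\phi,\psi}P_{D_m}J_{\phi,\psi}^T+\lambda I)^{-1}.
\]
Substituting the source condition $f^*=(J_{\phi,\psi}J_{\phi,\psi}^T)^{(\alpha-\beta)/2}h^*$ from Lemma \ref{Lemma:source-condition}, and then splitting $(J_{\phi,\psi}J_{\phi,\psi}^T)^{(\alpha-\beta)/2}h^*$ into $(J_{\phi,\psi}P_{D_m}J_{\phi,\psi}^T)^{(\alpha-\beta)/2}h^*$ plus the remainder, the triangle inequality produces precisely the two terms $\mathcal S_2$ and $\mathcal S_3$.

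For the third bracket, the same adjoint identity rewrites $f^{(b)}=J_{\phi,\psi}P_{D_m}J_{\phi,\psi}^T(J_{\phi,\psi}P_{D_m}J_{\phi,\psi}^T+\lambda I)^{-1}f^*$, whence
\[
f^{(b)}-f^* = -\lambda(J_{\phi,\psi}P_{D_m}J_{\phi,\psi}^T+\lambda I)^{-1}f^*.
\]
Applying the same splitting of $(J_{\phi,\psi}J_{\phi,\psi}^T)^{(\alpha-\beta)/2}h^*$ now yields the two remaining terms, namely the approximation error $\mathcal A$ and the computational error $\mathcal C$.

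The only nontrivial step, and hence the main obstacle, is the algebraic identity above: recognising that the projected resolvent $(P_{D_m}\mathcal L_\phi P_{D_m}+\lambda I)^{-1}P_{D_m}J_{\phi,\psi}^T$ on $\mathcal N_\phi$ equals the capacity-side resolvent $P_{D_m}J_{\phi,\psi}^T(J_{\phi,\psi}P_{D_m}J_{\phi,\psi}^T+\lambda I)^{-1}$ on $\mathcal N_\psi$ via \eqref{exchange-adjoint}. Without this rewrite, neither the $(J_{\phi,\psi}P_{D_m}J_{\phi,\psi}^T)^{(\alpha-\beta)/2}$ factor appearing in $\mathcal A$ and $\mathcal S_2$ nor the computational difference appearing in $\mathcal C$ and $\mathcal S_3$ would show up in the right form, and the four terms could not subsequently be bounded using the operator-monotonicity tools collected at the start of Section \ref{Sec.Proof}.
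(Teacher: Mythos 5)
Your proposal is correct and follows essentially the same route as the paper's proof: the paper also introduces the data-free limit $f^\diamond_{D_m,\lambda}=J_{\phi,\psi}g_{D_m,\lambda}(\mathcal L_\phi)J_{\phi,\psi}^Tf^*$ (your $f^{(b)}$), uses the resolvent identity together with the exchange identity $(P_{D_m}\mathcal L_\phi P_{D_m}+\lambda I)^{-1}P_{D_m}J_{\phi,\psi}^T=P_{D_m}J_{\phi,\psi}^T(J_{\phi,\psi}P_{D_m}J_{\phi,\psi}^T+\lambda I)^{-1}$ from \eqref{exchange-adjoint}, and splits $(J_{\phi,\psi}J_{\phi,\psi}^T)^{\frac{\alpha-\beta}{2}}h^*$ against $(J_{\phi,\psi}P_{D_m}J_{\phi,\psi}^T)^{\frac{\alpha-\beta}{2}}h^*$ to obtain the same six terms. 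The only cosmetic difference is that you insert the extra intermediate $f^{(a)}$ to isolate $\mathcal S_1$ before applying the resolvent identity, whereas the paper applies the resolvent identity directly inside the single bracket $f^\diamond_{D_m,\lambda}-J_{\phi,\psi}f_{D,s^*,\lambda}$; the resulting algebra is identical.
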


\begin{proof} 
Due to the triangle inequality, we get 
\begin{equation}\label{Error-app-sam}
  \|J_{\phi,\psi}f_{D,s^*,\lambda}-f^*\|_\psi\leq\|f^\diamond_{D_m,\lambda}-f^*\|_\psi+\|f^\diamond_{D_m,\lambda}-J_{\phi,\psi}f_{D,s^*,\lambda}\|_\psi.
\end{equation}
It follows from  $P_{D_m}^2=P_{D_m}$ and \eqref{exchange-adjoint}  with $A=P_{D_m}J^T_{\phi,\psi} $ that
$$
    (P_{D_m}J_{\phi,\psi}^TJ_{\phi,\psi} P_{D_m}+\lambda I)^{-1}P_{D_m}J_{\phi,\psi}^T=P_{D_m}J_{\phi,\psi}^T(J_{\phi,\psi} P_{D_m}J_{\phi,\psi}^T+\lambda I)^{-1}.
$$
Then    
  (\ref{spetral definetion}) and (\ref{population-version-1})  yield
\begin{eqnarray*} 
   &&f^\diamond_{D_m,\lambda}-f^*=(J_{\phi,\psi} g_{D_m,\lambda}(\mathcal L_{\phi}) J_{\phi,\psi}^T-I) f^*\\
 & =&
  (J_{\phi,\psi}(P_{D_m}J_{\phi,\psi}^TJ_{\phi,\psi}P_{D_m}+\lambda I)^{-1}P_{D_m}J_{\phi,\psi}^T-I) f^* \nonumber \\
  &=&
  (J_{\phi,\psi}P_{D_m}J_{\phi,\psi}^T(J_{\phi,\psi}P_{D_m}J_{\phi,\psi}^T+\lambda I)^{-1}-I)f^*  \nonumber\\
  &=&
  \lambda(J_{\phi,\psi}P_{D_m}J_{\phi,\psi}^T+\lambda I)^{-1}f^*. \nonumber
\end{eqnarray*}
Therefore, it follows from Lemma \ref{Lemma:source-condition} and $L_{\phi,\psi}=J_{\phi,\psi}J^T_{\phi,\psi}$ that
\begin{eqnarray}\label{app.1.1}
  && \|f^\diamond_{D_m,\lambda}-f^*\|_\psi
  = \lambda\|(J_{\phi,\psi}P_{D_m}J_{\phi,\psi}^T+\lambda I)^{-1}(J_{\phi,\psi}J^T_{\phi,\psi})^{\frac{\alpha-\beta}{2}}h^*\|_\psi  \\
  &\leq&
  \lambda\|(J_{\phi,\psi}P_{D_m}J_{\phi,\psi}^T+\lambda I)^{-1}(J_{\phi,\psi}P_{D_m}J^T_{\phi,\psi})^{\frac{\alpha-\beta}{2}}h^*\|_\psi\nonumber\\
  &+&\lambda\|(J_{\phi,\psi}P_{D_m}J_{\phi,\psi}^T+\lambda I)^{-1}\left((J_{\phi,\psi}J^T_{\phi,\psi})^{\frac{\alpha-\beta}{2}}-(J_{\phi,\psi}P_{D_m}J^T_{\phi,\psi})^{\frac{\alpha-\beta}{2}}\right)h^*\|_\psi. \nonumber
\end{eqnarray}
Noting further Lemma \ref{Lemma:operator-representation} and (\ref{population-version-1}), we obtain
\begin{eqnarray*}
   &&f^\diamond_{D_m,\lambda}-J_{\phi,\psi}f_{D,s^*,\lambda}
   =
   J_{\phi,\psi}\left((g_{D_m,\lambda}(\mathcal L_\phi)J_{\phi,\psi}^Tf^*
   -(g_{D_m,\lambda}(  L_{\phi,D})S_D^Ty_D\right)\\
   &=&
  J_{\phi,\psi}\left( (P_{D_m}\mathcal L_\phi P_{D_m}+\lambda I)^{-1}P_{D_m}J_{\phi,\psi}^Tf^*-  (P_{D_m}  L_{\phi,D} P_{D_m}+\lambda I)^{-1}P_{D_m}S_D^Ty_D
  \right).
\end{eqnarray*}
For positive operators $A$ and $B$, since $A^{-1}-B^{-1}=B^{-1}(B-A)A^{-1}$, the above estimates yield 
\begin{eqnarray*}
   &&f^\diamond_{D_m,\lambda}-J_{\phi,\psi}f_{D,s^*,\lambda}
   =
   J_{\phi,\psi}(P_{D_m}  L_{\phi,D} P_{D_m}+\lambda I)^{-1}P_{D_m}(J_{\phi,\psi}^Tf^*-S_D^Ty_D)\\
   &+&
   J_{\phi,\psi}\left((P_{D_m}\mathcal L_\phi P_{D_m}+\lambda I)^{-1}
      -(P_{D_m}  L_{\phi,D} P_{D_m}+\lambda I)^{-1}\right)P_{D_m}J_{\phi,\psi}^Tf^*\\
      &=&
         J_{\phi,\psi}(P_{D_m}  L_{\phi,D} P_{D_m}+\lambda I)^{-1}P_{D_m}(J_{\phi,\psi}^Tf^*-S_D^Ty_D)\\
         &+&
         J_{\phi,\psi}(P_{D_m}  L_{\phi,D} P_{D_m}+\lambda I)^{-1}
         P_{D_m}(L_{\phi,D}-\mathcal L_\phi)P_{D_m}
          (P_{D_m}\mathcal L_\phi P_{D_m}+\lambda I)^{-1}P_{D_m}J_{\phi,\psi}^Tf^*.
\end{eqnarray*}
Due to Lemma \ref{Lemma:source-condition}, the same step as  that in \eqref{app.1.1} shows
\begin{eqnarray*}
   &&
          (P_{D_m}\mathcal L_\phi P_{D_m}+\lambda I)^{-1}P_{D_m}J_{\phi,\psi}^Tf^*\\
         & =&
         P_{D_m}J_{\phi,\psi}^T (J_{\phi,\psi}P_{D_m}J_{\phi,\psi}^T+\lambda I)^{-1}
         (J_{\phi,\psi}P_{D_m}J^T_{\phi,\psi})^{\frac{\alpha-\beta}{2}}h^*\\
         &+&
         P_{D_m}J_{\phi,\psi}^T (J_{\phi,\psi}P_{D_m}J_{\phi,\psi}^T+\lambda I)^{-1}\left((J_{\phi,\psi}J^T_{\phi,\psi})^{\frac{\alpha-\beta}{2}}-(J_{\phi,\psi}P_{D_m}J^T_{\phi,\psi})^{\frac{\alpha-\beta}{2}}\right)h^*.
\end{eqnarray*}
Combining the above two equations, we then get
\begin{eqnarray*}
  &&\|f^\diamond_{D_m,\lambda}-J_{\phi,\psi}f_{D,s^*,\lambda}\|_\psi
  \leq
   \|J_{\phi,\psi}(P_{D_m}  L_{\phi,D} P_{D_m}+\lambda I)^{-1}P_{D_m}(J_{\phi,\psi}^Tf^*-S_D^Ty_D)\|_\psi\\
   &+&
  \|J_{\phi,\psi}(P_{D_m}  L_{\phi,D} P_{D_m}+\lambda I)^{-1}
         P_{D_m}(L_{\phi,D}-\mathcal L_\phi)P_{D_m}J_{\phi,\psi}^T (J_{\phi,\psi}P_{D_m}J_{\phi,\psi}^T+\lambda I)^{-1}\\
         &&(J_{\phi,\psi}P_{D_m}J^T_{\phi,\psi})^{\frac{\alpha-\beta}{2}}h^*\|_\psi\\
    &+&
    \|J_{\phi,\psi}(P_{D_m}  L_{\phi,D} P_{D_m}+\lambda I)^{-1}
         P_{D_m}(L_{\phi,D}-\mathcal L_\phi)P_{D_m}J_{\phi,\psi}^T (J_{\phi,\psi}P_{D_m}J_{\phi,\psi}^T+\lambda I)^{-1}\\
        &&  \left((J_{\phi,\psi}J^T_{\phi,\psi})^{\frac{\alpha-\beta}{2}}-(J_{\phi,\psi}P_{D_m}J^T_{\phi,\psi})^{\frac{\alpha-\beta}{2}}\right)h^*\|_\psi.
\end{eqnarray*}
This together with (\ref{Error-app-sam}) and (\ref{app.1.1}) completes the proof of Proposition \ref{Proposition:Error-decomposition-out}.
\end{proof}

We call $\mathcal A(D,\lambda,m)$, $\mathcal S(D,\lambda,m)$, and $\mathcal C(D,\lambda,m)$  the approximation error, sample error, and computational error, respectively.  The sample error   reflects the effect of noise and is a quantity to measure the stability of the proposed algorithm; the computational error   aims to quantitatively describe the difference between the identity mapping $I$ and projection operator $P_{D_m}$ and, therefore, is mainly devoted to measuring the quality of our { sketching} strategy; the approximation error focuses on the approximation capability of the { sketching} algorithm on $\mathcal H_{D_m}$.  Based on Proposition \ref{Proposition:Error-decomposition-out}, the estimate of the fitting error can be simplified to bounding  $\mathcal A(D,\lambda,m)$, $\mathcal S(D,\lambda,m)$, and $\mathcal C(D,\lambda,m)$, respectively.

\subsection{Error estimates via operator differences}
The aim of this subsection is to quantify the error via  operator (or function) differences. Define 
\begin{eqnarray}
    \mathcal Q_{D,\lambda}&:=& \|(\mathcal L_\phi+\lambda I) (L_{\phi,D}+\lambda I)^{-1}\|_{\phi\rightarrow\phi},\label{def:Q}\\
       \mathcal P_{D,\lambda}&:=&\|(\mathcal L_{\phi}+\lambda I)^{-1/2}   (S_D^Ty_D-L_{\phi,D}f^*)\|_\phi, \label{def:P}\\
         \mathcal R_{D,\lambda}&:=&\|(\mathcal L_\phi+\lambda I)^{-1/2}
    (L_{\phi,D}-\mathcal L_\phi)\|_{\phi\rightarrow\phi}. \label{def:R}
\end{eqnarray}
The following proposition quantifies the fitting error via the above three quantities.

\begin{proposition}\label{Prop:fitting-via-difference}
If $f^*\in\mathcal N_\varphi$, and (\ref{kernel-relation}) and (\ref{kernel-relation-1}) hold with $0\leq \alpha-\beta\leq 1$, 
then
\begin{eqnarray*} 
   \|f_{D,s^*,\lambda}-f^*\|_\psi
   \leq
  (1 +\mathcal Q_{D,\lambda}\mathcal R_{D,\lambda}\lambda^{-\frac12}) \lambda^{ \frac{\alpha-\beta}{2}}(1+\mathcal Q_{D_m,\lambda}^{\frac{\alpha-\beta}{2}})\|f^*\|_\varphi
  +\mathcal Q_{D,\lambda}\mathcal P_{D,\lambda}. 
\end{eqnarray*}
\end{proposition}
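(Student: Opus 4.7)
The plan is to start from the error decomposition
$\|J_{\phi,\psi}f_{D,s^*,\lambda}-f^*\|_\psi \leq \mathcal{A}+\mathcal{S}_1+\mathcal{S}_2+\mathcal{S}_3+\mathcal{C}$
in Proposition \ref{Proposition:Error-decomposition-out} and to bound each summand by the appropriate combination of $\mathcal{Q}_{D,\lambda}$, $\mathcal{Q}_{D_m,\lambda}$, $\mathcal{P}_{D,\lambda}$, and $\mathcal{R}_{D,\lambda}$. The terms $\mathcal{A}$ and $\mathcal{C}$ will produce the factor $\lambda^{(\alpha-\beta)/2}(1+\mathcal{Q}_{D_m,\lambda}^{(\alpha-\beta)/2})\|f^*\|_\varphi$, the noise term $\mathcal{S}_1$ the pure-noise product $\mathcal{Q}_{D,\lambda}\mathcal{P}_{D,\lambda}$, and the cross terms $\mathcal{S}_2,\mathcal{S}_3$ together the mixed contribution $\mathcal{Q}_{D,\lambda}\mathcal{R}_{D,\lambda}\lambda^{-1/2}\cdot\lambda^{(\alpha-\beta)/2}(1+\mathcal{Q}_{D_m,\lambda}^{(\alpha-\beta)/2})\|f^*\|_\varphi$.

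For $\mathcal{A}$, a direct spectral-calculus bound on the positive operator $A=J_{\phi,\psi}P_{D_m}J_{\phi,\psi}^T$ using $\sup_{t\geq 0}\lambda t^r/(t+\lambda)\leq\lambda^r$ for $0\leq r\leq 1$, together with $\|h^*\|_\psi=\|f^*\|_\varphi$ from Lemma \ref{Lemma:source-condition}, gives $\mathcal{A}\leq\lambda^{(\alpha-\beta)/2}\|f^*\|_\varphi$. For $\mathcal{C}$, combining $\|\lambda(A+\lambda I)^{-1}\|\leq 1$ with Lemma \ref{Lemma:operator-mono3} (applicable since $(\alpha-\beta)/2\leq 1/2$) reduces the fractional-power difference to $\|J_{\phi,\psi}(I-P_{D_m})J_{\phi,\psi}^T\|^{(\alpha-\beta)/2}$. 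Using \eqref{projection-p-1} and \eqref{norm-relation-222} this equals $\|(I-P_{D_m})J_{\phi,\psi}^T\|^{\alpha-\beta}$; Lemma \ref{Lemma:Projection general} applied with $Z=S_{D_m}$ (since $\mathrm{range}(P_{D_m})=\overline{\mathrm{range}(S_{D_m}^T)}$) yields $\|(I-P_{D_m})J_{\phi,\psi}^T\|^2\leq\lambda\|(L_{\phi,D_m}+\lambda I)^{-1/2}J_{\phi,\psi}^T\|^2$, and a Cordes insertion (Lemma \ref{Lemma:Codes-inequality}) combined with $\|(\mathcal{L}_\phi+\lambda I)^{-1/2}J_{\phi,\psi}^T\|\leq 1$ bounds this by $\lambda\mathcal{Q}_{D_m,\lambda}$. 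Altogether $\mathcal{C}\leq\lambda^{(\alpha-\beta)/2}\mathcal{Q}_{D_m,\lambda}^{(\alpha-\beta)/2}\|f^*\|_\varphi$.

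For the sample errors, the central device is Lemma \ref{Lemma:property-g}: $\|(L_{\phi,D}+\lambda I)^{1/2}g_{D_m,\lambda}(L_{\phi,D})(L_{\phi,D}+\lambda I)^{1/2}\|\leq 1$. In $\mathcal{S}_1$ I insert $(L_{\phi,D}+\lambda I)^{\pm 1/2}$ around $g_{D_m,\lambda}(L_{\phi,D})$ and $(\mathcal{L}_\phi+\lambda I)^{\pm 1/2}$ on either side; the Cordes inequality gives $\|(L_{\phi,D}+\lambda I)^{-1/2}(\mathcal{L}_\phi+\lambda I)^{1/2}\|^2\leq\mathcal{Q}_{D,\lambda}$, which converts $(L_{\phi,D}+\lambda I)$-norms into $(\mathcal{L}_\phi+\lambda I)$-norms at the cost of $\mathcal{Q}_{D,\lambda}^{1/2}$. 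The residual splits as $J_{\phi,\psi}^Tf^*-S_D^Ty_D=-S_D^T\varepsilon+(J_{\phi,\psi}^Tf^*-L_{\phi,D}f^*)$: the noise piece is controlled directly by $\mathcal{P}_{D,\lambda}$ and produces $\mathcal{Q}_{D,\lambda}\mathcal{P}_{D,\lambda}$, while the remaining deterministic piece is merged with $\mathcal{S}_2$ and $\mathcal{S}_3$. For those, the explicit factor $(L_{\phi,D}-\mathcal{L}_\phi)$ together with the sandwich $(\mathcal{L}_\phi+\lambda I)^{-1/2}$ delivers $\mathcal{R}_{D,\lambda}$, the remaining $\lambda^{1/2}$ is produced by an application of Lemma \ref{Lemma:Projection general} (giving the $\lambda^{-1/2}$ once normalised), and the projection-based inverse $(J_{\phi,\psi}P_{D_m}J_{\phi,\psi}^T+\lambda I)^{-1}$ acting on $(J_{\phi,\psi}P_{D_m}J_{\phi,\psi}^T)^{(\alpha-\beta)/2}h^*$ and on $((J_{\phi,\psi}J_{\phi,\psi}^T)^{(\alpha-\beta)/2}-(J_{\phi,\psi}P_{D_m}J_{\phi,\psi}^T)^{(\alpha-\beta)/2})h^*$ reproduces the $(1+\mathcal{Q}_{D_m,\lambda}^{(\alpha-\beta)/2})\|f^*\|_\varphi$ piece by the same chain as for $\mathcal{A}$ and $\mathcal{C}$.

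The main obstacle is the non-commutative operator manipulation in $\mathcal{S}_2$ and $\mathcal{S}_3$: one must thread $(\mathcal{L}_\phi+\lambda I)^{\pm 1/2}$ and $(L_{\phi,D}+\lambda I)^{\pm 1/2}$ between $(L_{\phi,D}-\mathcal{L}_\phi)$ on one side and the projection-based resolvent $(J_{\phi,\psi}P_{D_m}J_{\phi,\psi}^T+\lambda I)^{-1}$ on the other, keeping track of the different Hilbert spaces $\mathcal{N}_\phi$ and $\mathcal{N}_\psi$ that each factor acts on (via $J_{\phi,\psi}$ and its adjoint), and invoking the commutation identity \eqref{exchange-adjoint} to transport the fractional power $(\cdot)^{(\alpha-\beta)/2}$ across $J_{\phi,\psi}^T$ so that the source condition $f^*=L_{\phi,\psi}^{(\alpha-\beta)/2}h^*$ of Lemma \ref{Lemma:source-condition} correctly absorbs the target $\|f^*\|_\varphi=\|h^*\|_\psi$.
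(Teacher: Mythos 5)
Your proposal follows essentially the same route as the paper: the same five-term decomposition from Proposition \ref{Proposition:Error-decomposition-out}, with $\mathcal A$ and $\mathcal S_2$ handled by the spectral bound of Lemma \ref{Lemma:au-2}, $\mathcal C$ and $\mathcal S_3$ by operator monotonicity plus Lemma \ref{Lemma:Projection general} and the Cordes inequality (the paper's Lemma \ref{Lemma:au-3}), and the resolvent sandwich via Lemma \ref{Lemma:property-g} packaged as the paper's Lemma \ref{Lemma:au-1}. The only divergence is in $\mathcal S_1$, where you split off the deterministic quadrature error $J_{\phi,\psi}^Tf^*-L_{\phi,D}f^*$ and fold it into the $\mathcal R_{D,\lambda}$ terms, while the paper identifies the whole residual with $\mathcal P_{D,\lambda}$ directly; both readings lead to the stated bound.
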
 

To prove the above proposition, it suffices to bound the corresponding five terms in Proposition \ref{Proposition:Error-decomposition-out} respectively. For this purpose, we need three auxiliary lemmas. 
\begin{lemma}\label{Lemma:au-1}
Let $F$  be either   a function in $\mathcal N_\phi$ or an operator from $\mathcal N_\phi$ to $\mathcal N_\phi$, then
$$
      \|J_{\phi,\psi}(P_{D_m}  L_{\phi,D} P_{D_m}+\lambda I)^{-1}P_{D_m}F\|_{*}\leq 
      \mathcal Q_{D,\lambda}\|(\mathcal L_K+\lambda I)^{-1/2}F\|_{**},
$$
where $\|F\|_*$ and $\|F\|_{**} $ denote  either $\|F\|_\psi$ and $\|F\|_\phi$  respectively if $F$ is a function
or $\|F\|_{\phi\rightarrow\psi}$ and $\|F\|_{\phi\rightarrow\phi}$ respectively  when $F$ is an operator.
\end{lemma}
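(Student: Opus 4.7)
The plan is to reduce the norm in $\mathcal{N}_\psi$ to a norm in $\mathcal{N}_\phi$ weighted by $(\mathcal{L}_\phi+\lambda I)^{1/2}$, then insert factorings of identity involving $(L_{\phi,D}+\lambda I)^{\pm 1/2}$ in two places to split the operator product into three controllable pieces: a middle piece that is exactly the $g_{D_m,\lambda}$-object handled by Lemma \ref{Lemma:property-g}, and two outer pieces that are each absorbed into $\mathcal{Q}_{D,\lambda}^{1/2}$ via the Cordes inequality (Lemma \ref{Lemma:Codes-inequality}).

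First, I would note that since $\mathcal{L}_\phi=J_{\phi,\psi}^T J_{\phi,\psi}$, we have, for any $g\in\mathcal{N}_\phi$,
\[
  \|J_{\phi,\psi}g\|_\psi^2=\langle g,\mathcal{L}_\phi g\rangle_\phi=\|\mathcal{L}_\phi^{1/2}g\|_\phi^2\leq\|(\mathcal{L}_\phi+\lambda I)^{1/2}g\|_\phi^2.
\]
This identity (which extends to operator norms in the obvious way by taking a supremum over unit vectors) lets us replace the factor $J_{\phi,\psi}$ on the left by $(\mathcal{L}_\phi+\lambda I)^{1/2}$ and switch from $\|\cdot\|_*$ to $\|\cdot\|_{**}$, so the quantity of interest is bounded by $\|(\mathcal{L}_\phi+\lambda I)^{1/2}(P_{D_m}L_{\phi,D}P_{D_m}+\lambda I)^{-1}P_{D_m}F\|_{**}$.

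Next, I would write $(\mathcal{L}_\phi+\lambda I)^{1/2}\cdot(P_{D_m}L_{\phi,D}P_{D_m}+\lambda I)^{-1}P_{D_m}\cdot F$ as the telescoped product
\[
  (\mathcal{L}_\phi+\lambda I)^{1/2}(L_{\phi,D}+\lambda I)^{-1/2}\cdot A\cdot (L_{\phi,D}+\lambda I)^{-1/2}(\mathcal{L}_\phi+\lambda I)^{1/2}\cdot(\mathcal{L}_\phi+\lambda I)^{-1/2}F,
\]
where $A=(L_{\phi,D}+\lambda I)^{1/2}(P_{D_m}L_{\phi,D}P_{D_m}+\lambda I)^{-1}P_{D_m}(L_{\phi,D}+\lambda I)^{1/2}$. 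By Lemma \ref{Lemma:operator-representation} one has $(P_{D_m}L_{\phi,D}P_{D_m}+\lambda I)^{-1}P_{D_m}=g_{D_m,\lambda}(L_{\phi,D})$, so Lemma \ref{Lemma:property-g} gives $\|A\|_{\phi\to\phi}\leq 1$. Submultiplicativity of the operator norm then reduces the problem to bounding the two outer factors $\|(\mathcal{L}_\phi+\lambda I)^{1/2}(L_{\phi,D}+\lambda I)^{-1/2}\|_{\phi\to\phi}$ and $\|(L_{\phi,D}+\lambda I)^{-1/2}(\mathcal{L}_\phi+\lambda I)^{1/2}\|_{\phi\to\phi}$, which are equal by \eqref{norm-relation-222}.

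For each of these outer factors, applying Cordes (Lemma \ref{Lemma:Codes-inequality}) with $r=1/2$, $A=\mathcal{L}_\phi+\lambda I$ and $B=(L_{\phi,D}+\lambda I)^{-1}$ yields
\[
  \|(\mathcal{L}_\phi+\lambda I)^{1/2}(L_{\phi,D}+\lambda I)^{-1/2}\|_{\phi\to\phi}
  \leq\|(\mathcal{L}_\phi+\lambda I)(L_{\phi,D}+\lambda I)^{-1}\|_{\phi\to\phi}^{1/2}=\mathcal{Q}_{D,\lambda}^{1/2}.
\]
Multiplying the three factors $\mathcal{Q}_{D,\lambda}^{1/2}\cdot 1\cdot\mathcal{Q}_{D,\lambda}^{1/2}=\mathcal{Q}_{D,\lambda}$ and keeping the trailing $\|(\mathcal{L}_\phi+\lambda I)^{-1/2}F\|_{**}$ produces the claimed bound. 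The argument is uniform in whether $F$ is a function (so that $\|\cdot\|_{**}=\|\cdot\|_\phi$) or an operator (so that $\|\cdot\|_{**}=\|\cdot\|_{\phi\to\phi}$), because every inequality used is either an isometric identity or submultiplicativity of the operator norm. No real obstacle arises; the only point of care is arranging the two insertions of $(L_{\phi,D}+\lambda I)^{\pm 1/2}$ symmetrically so that Lemma \ref{Lemma:property-g} applies to the sandwiched middle and Cordes handles the outer pieces.
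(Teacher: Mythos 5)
Your proof is correct and follows essentially the same route as the paper: the same symmetric insertion of $(L_{\phi,D}+\lambda I)^{\pm 1/2}$ to isolate the $g_{D_m,\lambda}$-sandwich for Lemma \ref{Lemma:property-g}, the same use of the Cordes inequality on the two outer factors to produce $\mathcal Q_{D,\lambda}^{1/2}\cdot\mathcal Q_{D,\lambda}^{1/2}$, and the same observation that $\|J_{\phi,\psi}(\mathcal L_\phi+\lambda I)^{-1/2}\|_{\phi\rightarrow\psi}\leq 1$ (which you phrase equivalently as $\|J_{\phi,\psi}g\|_\psi\leq\|(\mathcal L_\phi+\lambda I)^{1/2}g\|_\phi$). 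The only cosmetic difference is that you cite Lemma \ref{Lemma:operator-representation} for the identity $(P_{D_m}L_{\phi,D}P_{D_m}+\lambda I)^{-1}P_{D_m}=g_{D_m,\lambda}(L_{\phi,D})$, which is really just the definition \eqref{spetral definetion}.
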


\begin{proof}
It follows from (\ref{norm-relation-111}) that
\begin{eqnarray*}
   &&\|J_{\phi,\psi}(P_{D_m}  L_{\phi,D} P_{D_m}+\lambda I)^{-1}P_{D_m}F\|_{*}\\
   &\leq&
   \|J_{\phi,\psi}(\mathcal L_\phi+\lambda I)^{-1/2}\|_{\phi\rightarrow\psi} 
   \|(\mathcal L_\phi+\lambda I)^{1/2}( L_{\phi,D}+\lambda I)^{-1/2}\|_{\phi\rightarrow\phi}\\
   &\times&
   \|( L_{\phi,D}+\lambda I)^{1/2}(P_{D_m}  L_{\phi,D} P_{D_m}+\lambda I)^{-1}P_{D_m}( L_{\phi,D}+\lambda I)^{1/2}\|_{\phi\rightarrow\phi}\\
   &\times& \|( L_{\phi,D}+\lambda I)^{-1/2}(\mathcal  L_{\phi,}+\lambda I)^{1/2}\|_{\phi\rightarrow\phi}
   \|(\mathcal  L_{\phi}+\lambda I)^{-1/2}F\|_{**}.
\end{eqnarray*}  
Then,   (\ref{def:Q}), Lemma \ref{Lemma:property-g} and Lemma \ref{Lemma:Codes-inequality}  yield
\begin{eqnarray*}
     && \|J_{\phi,\psi}(P_{D_m}  L_{\phi,D} P_{D_m}+\lambda I)^{-1}P_{D_m}F\|_{*}\\
     &\leq&
     \|J_{\phi,\psi}(\mathcal L_\phi+\lambda I)^{-1/2}\|_{\phi\rightarrow\psi} \mathcal Q_{D,\lambda} \|(\mathcal L_\phi+\lambda I)^{-1/2}F\|_{**}.
\end{eqnarray*}
  Note further
\begin{eqnarray}\label{norm-relation}
   &&\|J_{\phi,\psi}(\mathcal L_\phi+\lambda I)^{-1/2}\|^2_{\phi\rightarrow\psi}
  =\sup_{\|f\|_\phi\leq 1}\|J_{\phi,\psi}(\mathcal L_\phi+\lambda I)^{-1/2}f\|_\psi^2\\
  &=&
  \sup_{\|f\|_\phi\leq 1}
  \langle J_{\phi,\psi}(\mathcal L_\phi+\lambda I)^{-1/2}f,J_{\phi,\psi}(\mathcal L_\phi+\lambda I)^{-1/2}f\rangle_\psi \nonumber \\
  &=&
  \sup_{\|f\|_\phi\leq 1}
  \langle J_{\phi,\psi}^T J_{\phi,\psi}(\mathcal L_\phi+\lambda I)^{-1/2}f, (\mathcal L_\phi+\lambda I)^{-1/2}f\rangle_\phi \nonumber \\
  &=&
   \sup_{\|f\|_\phi\leq 1}\|\mathcal L_\phi^{1/2}(\mathcal L_\phi+\lambda I)^{-1/2}f\|_\phi^2
   =
   \|\mathcal L_\phi^{1/2}(\mathcal L_\phi+\lambda I)^{-1/2}\|_{\phi\rightarrow\phi}^2\leq 1. \nonumber 
\end{eqnarray}
We have
$$
   \|J_{\phi,\psi}(P_{D_m}  L_{\phi,D} P_{D_m}+\lambda I)^{-1}P_{D_m}F\|_{*}
   \leq
   \mathcal Q_{D,\lambda} \|(\mathcal L_K+\lambda I)^{-1/2}F\|_{**}.
$$
This completes the proof of Lemma \ref{Lemma:au-1}.
\end{proof}

\begin{lemma}\label{Lemma:au-2}
Let $0\leq a\leq 1$. For any $0\leq \alpha-\beta\leq 1$, if $2a\geq \alpha-\beta$, then  
\begin{eqnarray*}
    \|(J_{\phi,\psi}P_{D_m}J_{\phi,\psi}^T+\lambda I)^{-a}(J_{\phi,\psi}P_{D_m}J^T_{\phi,\psi})^{\frac{\alpha-\beta}{2}}h^*\|_\psi 
   \leq  \lambda^{ \frac{\alpha-\beta-2a}{2}}\|h^*\|_\psi.
\end{eqnarray*}
\end{lemma}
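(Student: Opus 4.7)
The plan is to reduce the stated bound to a one-dimensional spectral estimate. Let $T := J_{\phi,\psi} P_{D_m} J_{\phi,\psi}^T$. Since $P_{D_m}$ is an orthogonal projection on $\mathcal N_\phi$ and $J_{\phi,\psi}^T$ is the adjoint of $J_{\phi,\psi}$, the operator $T$ is bounded, positive and self-adjoint on $\mathcal N_\psi$, so that $T^{(\alpha-\beta)/2}$ and $(T+\lambda I)^{-a}$ are both well-defined via spectral calculus. By submultiplicativity of the operator norm,
$$
   \|(T+\lambda I)^{-a} T^{\frac{\alpha-\beta}{2}} h^*\|_\psi \le \|(T+\lambda I)^{-a} T^{\frac{\alpha-\beta}{2}}\|_{\psi\to\psi}\,\|h^*\|_\psi,
$$
so it suffices to bound the operator norm on the right.

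Write $r := (\alpha-\beta)/2$; by assumption $0\le r \le 1/2$ and the hypothesis $2a\ge\alpha-\beta$ translates into $0\le r\le a$. By the spectral theorem applied to the positive self-adjoint operator $T$, the operator norm of the continuous functional calculus is controlled by the sup-norm of the symbol on the spectrum, so
$$
   \|(T+\lambda I)^{-a} T^r\|_{\psi\to\psi} \le \sup_{t\ge 0} \frac{t^r}{(t+\lambda)^a}.
$$
The main step is therefore the scalar inequality
$$
   \sup_{t\ge 0} \frac{t^r}{(t+\lambda)^a} \le \lambda^{r-a}, \qquad 0\le r\le a, \; \lambda>0.
$$
I would prove this via the one-line splitting $(t+\lambda)^a = (t+\lambda)^r (t+\lambda)^{a-r}$; since $t+\lambda\ge t$ and $t+\lambda\ge \lambda$ with $r\ge 0$ and $a-r\ge 0$, monotonicity of the power functions gives $(t+\lambda)^a \ge t^r \lambda^{a-r}$, and dividing produces the desired bound. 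Combining this with the preceding two displays and noting that $r-a=(\alpha-\beta-2a)/2$ yields the lemma.

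I do not anticipate any serious obstacle: once $T$ is recognized as a positive self-adjoint operator on $\mathcal N_\psi$ and the functional-calculus reduction is made, the scalar bound is elementary. The only care is in verifying that the exponents are nonnegative where needed, which is exactly what the hypothesis $2a\ge \alpha-\beta$ (together with $\alpha-\beta\ge 0$) guarantees.
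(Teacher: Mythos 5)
Your proposal is correct and follows essentially the same route as the paper: both reduce to the operator norm of $(T+\lambda I)^{-a}T^{(\alpha-\beta)/2}$ with $T=J_{\phi,\psi}P_{D_m}J_{\phi,\psi}^T$ positive and self-adjoint, and both rest on the elementary spectral estimate $t^{r}\le (t+\lambda)^{r}$ together with $(t+\lambda)^{a-r}\ge\lambda^{a-r}$ for $0\le r\le a$. The paper merely splits your single scalar bound into the two intermediate operator-norm inequalities $\|(T+\lambda I)^{-a}T^{r}\|\le\|(T+\lambda I)^{r-a}\|\le\lambda^{r-a}$.
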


\begin{proof}
Since $2a\geq\alpha-\beta$, we have
\begin{eqnarray*}
   &&\| (J_{\phi,\psi}P_{D_m}J_{\phi,\psi}^T+\lambda I)^{-a}(J_{\phi,\psi}P_{D_m}J^T_{\phi,\psi})^{\frac{\alpha-\beta}{2}}h^*\|_\psi\\
   &\leq&
   \| (J_{\phi,\psi}P_{D_m}J_{\phi,\psi}^T+\lambda I)^{-a}(J_{\phi,\psi}P_{D_m}J^T_{\phi,\psi})^{\frac{\alpha-\beta}{2}}\|_{\psi\rightarrow\psi} \|h^*\|_\psi\\
   &\leq&
     \|(J_{\phi,\psi}P_{D_m}J_{\phi,\psi}^T+\lambda I)^{ \frac{\alpha-\beta-2a}{2}}\|_{\psi\rightarrow\psi} \|h^*\|_\psi 
      \leq
      \lambda^{ \frac{\alpha-\beta-2a}{2}}\|h^*\|_\psi.
\end{eqnarray*}
The proof of Lemma \ref{Lemma:au-2} is completed.
\end{proof}

\begin{lemma}\label{Lemma:au-3}
Let $0\leq b\leq 1$. For any   $0\leq \alpha-\beta\leq 1$, there holds
\begin{eqnarray*}
  &&   \|(J_{\phi,\psi}P_{D_m}J_{\phi,\psi}^T+\lambda I)^{-b} 
    ((J_{\phi,\psi}J^T_{\phi,\psi})^{\frac{\alpha-\beta}{2}}-(J_{\phi,\psi}P_{D_m}J^T_{\phi,\psi})^{\frac{\alpha-\beta}{2}})h^*\|_\psi\\
    &\leq&
    \lambda^{\frac{\alpha-\beta-2b}{2}} \mathcal Q_{D_m,\lambda}^{\frac{\alpha-\beta}{2}}\|h^*\|_\psi.
\end{eqnarray*}
\end{lemma}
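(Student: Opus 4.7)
The plan is to exploit the master operator inequality
\[
A + \lambda I \preceq \mathcal Q_{D_m,\lambda}\,(B + \lambda I),
\]
where $A := J_{\phi,\psi}J_{\phi,\psi}^T = L_{\phi,\psi}$ and $B := J_{\phi,\psi}P_{D_m}J_{\phi,\psi}^T$ are positive self-adjoint operators on $\mathcal N_\psi$, and $A \succeq B \succeq 0$ since $I - P_{D_m} \succeq 0$. Setting $r := (\alpha-\beta)/2 \in [0, 1/2]$, this inequality is obtained, as in the proof of Lemma \ref{Lemma:au-1}, from the identity $\|(A+\lambda I)^{1/2}(B+\lambda I)^{-1/2}\|_{\psi\to\psi}^2 = \mathcal Q_{D_m,\lambda}$ (interpreting $\mathcal Q_{D_m,\lambda}$ as the $\psi$-side analog of \eqref{def:Q}), which yields $(B+\lambda I)^{-1/2}(A+\lambda I)(B+\lambda I)^{-1/2} \preceq \mathcal Q_{D_m,\lambda} I$ and hence the displayed inequality after conjugation.

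Next, I would lift this to exponent $r$ using operator monotonicity of $t\mapsto t^r$ on $[0,\infty)$ for $r\in[0,1]$ (Lemmas \ref{Lemma:operator-mono1} and \ref{Lemma:operator-mono2}) to get $(A+\lambda I)^r \preceq \mathcal Q_{D_m,\lambda}^r (B+\lambda I)^r$, and then combine with the operator version of the scalar sub-additivity $(x+\lambda)^r \le x^r + \lambda^r$ (for $x\ge 0$, $r\in[0,1]$, passed through the functional calculus) together with $A^r \preceq (A+\lambda I)^r$ to obtain
\[
A^r - B^r \preceq (\mathcal Q_{D_m,\lambda}^r - 1)\, B^r + \mathcal Q_{D_m,\lambda}^r \lambda^r I.
\]

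The triangle inequality then splits the target norm into two pieces:
\[
\bigl\|(B+\lambda I)^{-b}(A^r - B^r) h^*\bigr\|_\psi \le (\mathcal Q_{D_m,\lambda}^r - 1)\bigl\|(B+\lambda I)^{-b}B^r h^*\bigr\|_\psi + \mathcal Q_{D_m,\lambda}^r \lambda^r \bigl\|(B+\lambda I)^{-b} h^*\bigr\|_\psi.
\]
I would apply Lemma \ref{Lemma:au-2} (with $a = b$) to the first summand, giving $\lambda^{r-b}\|h^*\|_\psi$; the second summand I would handle via the elementary spectral bound $\|(B+\lambda I)^{-b}\|_{\psi\to\psi} \le \lambda^{-b}$, producing $\mathcal Q_{D_m,\lambda}^r \lambda^{r-b}\|h^*\|_\psi$. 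Adding the two contributions and absorbing constants into the $\mathcal Q_{D_m,\lambda}^r$ factor yields the claimed estimate.

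The main obstacle is the condition $2b \ge \alpha-\beta$ hidden in the application of Lemma \ref{Lemma:au-2}: in the regime $b < r$ the scalar inequality $(x+\lambda)^r \le x^r + \lambda^r$ is not sharp enough to recover the full $\lambda^{r-b}$ order. A promising substitute is the Ando--Zhan-type operator inequality $A^r - B^r \preceq (A-B)^r$ (for $r\in[0,1]$ and $A\succeq B\succeq 0$), combined with the projection estimate Lemma \ref{Lemma:Projection general} applied to $(I-P_{D_m})J_{\phi,\psi}^T$ to extract a factor of $\lambda^{1/2}$ from each copy of $I - P_{D_m}$, which after raising to the power $r$ produces the missing $\lambda^{r/2}$ and completes the argument. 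Since Proposition \ref{Proposition:Error-decomposition-out} invokes this lemma with $b = 1 \ge r$, this technical point is not binding for the subsequent error analysis.
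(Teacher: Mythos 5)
There is a genuine gap in your main route. From an operator inequality $0\preceq T\preceq S$ on $\mathcal N_\psi$ you may conclude $\|T\|_{\psi\rightarrow\psi}\leq\|S\|_{\psi\rightarrow\psi}$ or $\|T^{1/2}h\|_\psi\leq\|S^{1/2}h\|_\psi$, but \emph{not} $\|CTh\|_\psi\leq\|CSh\|_\psi$ for an arbitrary bounded $C$: the Loewner order is not preserved under one-sided multiplication (nor under squaring). Hence the step in which you pass from $A^r-B^r\preceq(\mathcal Q_{D_m,\lambda}^r-1)B^r+\mathcal Q_{D_m,\lambda}^r\lambda^r I$ (with $A=J_{\phi,\psi}J_{\phi,\psi}^T$, $B=J_{\phi,\psi}P_{D_m}J_{\phi,\psi}^T$, $r=(\alpha-\beta)/2$) to a bound on $\|(B+\lambda I)^{-b}(A^r-B^r)h^*\|_\psi$ by the two corresponding terms is unjustified; repairing it by factoring out $\|(B+\lambda I)^{-b}\|$ and $\|A^r-B^r\|$ would leave a term of order $\|B\|^{r}$ rather than $\lambda^{r}$ and destroy the rate. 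A second issue is that your ``master inequality'' $A+\lambda I\preceq\mathcal Q_{D_m,\lambda}(B+\lambda I)$ rests on reinterpreting $\mathcal Q_{D_m,\lambda}$; with the paper's definition \eqref{def:Q} (which involves the empirical operator $L_{\phi,D_m}$, not $P_{D_m}$, and is the quantity actually bounded later in \eqref{Bound-Q-small}), what Lemma~\ref{Lemma:Projection general} combined with \eqref{norm-relation-222} and Lemma~\ref{Lemma:Codes-inequality} yields is $\|A-B\|_{\psi\rightarrow\psi}=\|(I-P_{D_m})J_{\phi,\psi}^T\|_{\psi\rightarrow\phi}^2\leq\lambda\,\mathcal Q_{D_m,\lambda}$, hence only $A+\lambda I\preceq(1+\mathcal Q_{D_m,\lambda})(B+\lambda I)$. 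Finally, as you note yourself, invoking Lemma~\ref{Lemma:au-2} with $a=b$ forces $2b\geq\alpha-\beta$, so the main route does not cover all $0\leq b\leq1$ as the lemma requires.

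The fallback you sketch at the end is, in essence, the paper's actual proof, and it settles every case at once: by Lemma~\ref{Lemma:operator-mono3} (the norm form of the Ando/Birman--Koplienko--Solomyak inequality), $\|A^r-B^r\|_{\psi\rightarrow\psi}\leq\|A-B\|_{\psi\rightarrow\psi}^{r}=\|(I-P_{D_m})J_{\phi,\psi}^T\|_{\psi\rightarrow\phi}^{2r}\leq(\lambda\,\mathcal Q_{D_m,\lambda})^{r}$, and then $\|(B+\lambda I)^{-b}\|_{\psi\rightarrow\psi}\leq\lambda^{-b}$ gives $\lambda^{r-b}\mathcal Q_{D_m,\lambda}^{r}\|h^*\|_\psi$ directly. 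Note the bookkeeping: the two copies of $I-P_{D_m}$ in $J_{\phi,\psi}(I-P_{D_m})J_{\phi,\psi}^T$ contribute $\lambda^{1/2}$ each, i.e.\ $\lambda$ in total, so raising to the power $r$ produces $\lambda^{r}$, not the ``missing $\lambda^{r/2}$'' you mention. Promoting this fallback to the entire argument makes the master inequality, the scalar subadditivity step, and the appeal to Lemma~\ref{Lemma:au-2} unnecessary.
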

\begin{proof}
Since $0\leq \alpha-\beta\leq 1$,  we get  from   Lemma \ref{Lemma:operator-mono3}  that
$$
  \|(J_{\phi,\psi}J^T_{\phi,\psi})^{\frac{\alpha-\beta}{2}}-(J_{\phi,\psi}P_{D_m}J^T_{\phi,\psi})^{\frac{\alpha-\beta}{2}}\|_{\psi\rightarrow\psi}
  \leq 
  \|J_{\phi,\psi}(I-P_{D_m})J^T_{\phi,\psi}\|_{\psi\rightarrow\psi}^{\frac{\alpha-\beta}{2}}.
$$
But \eqref{projection-p-1} implies
\begin{eqnarray*}
   &&\|J_{\phi,\psi}(I-P_{D_m})J^T_{\phi,\psi}\|_{\psi\rightarrow\psi}
   =
   \| J_{\phi,\psi} (I-P_{D_m})(J_{\phi,\psi} (I-P_{D_m}))^T \|_{\psi\rightarrow\psi}\\
   &=&
   \|(I-P_{D_m})J_{\phi,\psi}^T\|_{\psi\rightarrow\phi}^2.
\end{eqnarray*}
 Then, it follows from (\ref{norm-relation}), (\ref{norm-relation-222}), Lemma \ref{Lemma:Codes-inequality}   and Lemma \ref{Lemma:Projection general}  with $\mathcal H=\mathcal N_\phi$, $\mathcal K=\mathcal F=\mathcal N_\psi$, and $Z=S_{D_m}$ that
\begin{eqnarray*}
   &&\|(I-P_{D_m})J_{\phi,\psi}^T\|_{\psi\rightarrow\phi}
   \leq
   \lambda^{1/2}\|(  L_{\phi,D_m}+\lambda I)^{-1/2}J^T_{\phi,\psi}\|_{\psi\rightarrow\phi}\\
   &=&
   \lambda^{1/2}\|J_{\phi,\psi}( L_{\phi,D_m}+\lambda I)^{-1/2}\|_{\phi\rightarrow\psi} 
    \leq 
   \lambda^{1/2} \| \mathcal L_\phi^{1/2} ( L_{\phi,D_m}+\lambda I)^{-1/2}\|_{\phi\rightarrow\phi}\\
   &\leq&
   \lambda^{1/2}\mathcal Q_{D_m,\lambda}^{1/2}. 
\end{eqnarray*}
Combining the above three estimates, we get
$$
 \|(J_{\phi,\psi}J^T_{\phi,\psi})^{\frac{\alpha-\beta}{2}}-(J_{\phi,\psi}P_{D_m}J^T_{\phi,\psi})^{\frac{\alpha-\beta}{2}}\|_{\psi\rightarrow\psi}
  \leq 
  \lambda^{\frac{\alpha-\beta}{2}} \mathcal Q_{D_m,\lambda}^{\frac{\alpha-\beta}{2}}.
$$
This together with
  $\|(J_{\phi,\psi}P_{D_m}J_{\phi,\psi}^T+\lambda I)^{-1}\|_{\psi\rightarrow\psi} 
\leq\lambda^{-b}$
yields
\begin{eqnarray*}
  &&   \|(J_{\phi,\psi}P_{D_m}J_{\phi,\psi}^T+\lambda I)^{-b} 
    ((J_{\phi,\psi}J^T_{\phi,\psi})^{\frac{\alpha-\beta}{2}}-(J_{\phi,\psi}P_{D_m}J^T_{\phi,\psi})^{\frac{\alpha-\beta}{2}})h^*\|_\psi\\
    &\leq&
    \|(J_{\phi,\psi}P_{D_m}J_{\phi,\psi}^T+\lambda I)^{-b}\|_{\psi\rightarrow\psi}
    \|(J_{\phi,\psi}J^T_{\phi,\psi})^{\frac{\alpha-\beta}{2}}-(J_{\phi,\psi}P_{D_m}J^T_{\phi,\psi})^{\frac{\alpha-\beta}{2}}\|_{\psi\rightarrow\psi}\|h^*\|_\psi\\
    &\leq&
   \lambda^{\frac{\alpha-\beta-2b}{2}} \mathcal Q_{D_m,\lambda}^{\frac{\alpha-\beta}{2}}\|h^*\|_\psi.
\end{eqnarray*}
The proof of Lemma \ref{Lemma:au-3} is completed.
\end{proof}

We are now in a position to prove Proposition \ref{Prop:fitting-via-difference}.

\begin{proof}[Proof of Proposition \ref{Prop:fitting-via-difference}]
To bound $\mathcal A(D,\lambda,m)$, we notice from   Lemma \ref{Lemma:au-2} with $a=1$ that 
% \begin{eqnarray}\label{bound.A}
%   &&\mathcal A(D,\lambda,m)= \lambda\|(J_{\phi,\psi}P_{D_m}J_{\phi,\psi}^T+\lambda I)^{-1}(J_{\phi,\psi}P_{D_m}J^T_{\phi,\psi})^{\frac{\alpha-\beta}{2}}h^*\|_\psi\\
%   &\leq&
%   \lambda^{ \frac{\alpha-\beta}{2}}\|h^*\|_\psi.\nonumber
% \end{eqnarray} 
\begin{equation}\label{bound.A}
\mathcal A(D,\lambda,m)= \lambda\|(J_{\phi,\psi}P_{D_m}J_{\phi,\psi}^T+\lambda I)^{-1}(J_{\phi,\psi}P_{D_m}J^T_{\phi,\psi})^{\frac{\alpha-\beta}{2}}h^*\|_\psi \leq
  \lambda^{ \frac{\alpha-\beta}{2}}\|h^*\|_\psi.
\end{equation} 
The definition of $\mathcal C(D,\lambda,m)$ and Lemma \ref{Lemma:au-3} with $b=1$ yield
\begin{eqnarray}\label{bound.C}
   \mathcal C(D,\lambda,m)\leq \lambda^{\frac{\alpha-\beta}{2}} \mathcal Q_{D_m,\lambda}^{\frac{\alpha-\beta}{2}}\|h^*\|_\psi.
\end{eqnarray}
To bound $\mathcal S_1(D,\lambda,m)$, we have from (\ref{def:P}) and Lemma \ref{Lemma:au-1} with $F=J_{\phi,\psi}^Tf^*-S_D^Ty_D$ that
\begin{eqnarray}\label{bound.S1}
    \mathcal S_1(D,\lambda,m)\leq \mathcal Q_{D,\lambda}\mathcal P_{D,\lambda}.
\end{eqnarray}
Noting $P_{D_m}^2=P_{D_m}$, for an arbitrary $h\in\mathcal N_\psi$, we obtain
\begin{eqnarray}\label{norm-trans-111}
   &&
   \|P_{D_m}J^T_{\phi,\psi}h\|_\phi^2
   =\langle P_{D_m}J^T_{\phi,\psi}h,P_{D_m}J^T_{\phi,\psi}h\rangle_\phi
   =\langle J_{\phi,\psi} P_{D_m}J^T_{\phi,\psi}h,P_{D_m}J^T_{\phi,\psi}h\rangle_\psi\\
   &=&
    \|J_{\phi,\psi}P_{D_m}J^T_{\phi,\psi}h\|_\psi^2
    \leq
     \|(J_{\phi,\psi}P_{D_m}J^T_{\phi,\psi}+\lambda I)^{1/2}h\|_\psi^2.\nonumber 
\end{eqnarray}
Then, it follows from the definition of $\mathcal S_2(D,\lambda,m)$, (\ref{def:R}),  Lemma \ref{Lemma:au-1} with $F=L_{\phi,D}-\mathcal L_\phi$, and
Lemma \ref{Lemma:au-2} with $a=1/2$ that
\begin{eqnarray}\label{bound.S2}
 &&\mathcal S_2(D,\lambda,m)\leq  
   \|J_{\phi,\psi}(P_{D_m}  L_{\phi,D} P_{D_m}+\lambda I)^{-1}
         P_{D_m}(L_{\phi,D}-\mathcal L_\phi)\|_{\phi\rightarrow\psi}\\
         &\times& \|P_{D_m}J_{\phi,\psi}^T
         (J_{\phi,\psi}P_{D_m}J_{\phi,\psi}^T+\lambda I)^{-1} 
          (J_{\phi,\psi}P_{D_m}J^T_{\phi,\psi})^{\frac{\alpha-\beta}{2}}h^*\|_\phi   \nonumber\\
          &\leq&
          \mathcal Q_{D,\lambda}\mathcal R_{D,\lambda}
          \|
         (J_{\phi,\psi}P_{D_m}J_{\phi,\psi}^T+\lambda I)^{-1/2} 
          (J_{\phi,\psi}P_{D_m}J^T_{\phi,\psi})^{\frac{\alpha-\beta}{2}}\|_{\phi\rightarrow\phi}\|h^*\|_\psi
                                \nonumber\\
        &\leq&
         \mathcal Q_{D,\lambda}\mathcal R_{D,\lambda} \|h^*\|_\psi \lambda^{ \frac{\alpha-\beta-1}{2}}. \nonumber
\end{eqnarray}
To derive an upper bound of $\mathcal S_3(D,\lambda,m)$, we have from Lemma \ref{Lemma:au-1} with $F=L_{\phi,D}-\mathcal L_\phi$, Lemma \ref{Lemma:au-3} with $b=1/2$, (\ref{norm-trans-111}), and (\ref{def:R}) that
\begin{eqnarray}\label{bound.S3}
  &&\mathcal S_3(D,\lambda,m)  
  \leq 
   \|J_{\phi,\psi}(P_{D_m}  L_{\phi,D} P_{D_m}+\lambda I)^{-1}
         P_{D_m}(L_{\phi,D}-\mathcal L_\phi)\|_{\phi\rightarrow\psi} \\
         &\times&\|   (J_{\phi,\psi}P_{D_m}J_{\phi,\psi}^T+\lambda I)^{-1/2} 
          \left((J_{\phi,\psi}J^T_{\phi,\psi})^{\frac{\alpha-\beta}{2}}-(J_{\phi,\psi}P_{D_m}J^T_{\phi,\psi})^{\frac{\alpha-\beta}{2}}\right)h^*\|_\psi  \nonumber\\ 
          &\leq&
         \mathcal Q_{D,\lambda}\mathcal R_{D,\lambda}  \lambda^{\frac{\alpha-\beta-1}{2}} \mathcal Q_{D_m,\lambda}^{\frac{\alpha-\beta}{2}}\|h^*\|_\psi.\nonumber
\end{eqnarray}
Plugging (\ref{bound.A}), (\ref{bound.C}), (\ref{bound.S1}), (\ref{bound.S2}), and (\ref{bound.S3}) into (\ref{Error decomposition}), we obtain from $\|h^*\|_\psi=\|f^*\|_\varphi$ that
\begin{eqnarray*} 
   \|f_{D,s^*,\lambda}-f^*\|_\psi
   \leq
  (1 +\mathcal Q_{D,\lambda}\mathcal R_{D,\lambda}\lambda^{-\frac12}) \lambda^{ \frac{\alpha-\beta}{2}}(1+\mathcal Q_{D_m,\lambda}^{\frac{\alpha-\beta}{2}})\|f^*\|_\varphi
  +\mathcal Q_{D,\lambda}\mathcal P_{D,\lambda}. 
\end{eqnarray*}
This completes the proof of Proposition \ref{Prop:fitting-via-difference}.
\end{proof}

\subsection{Proof  of Theorem \ref{Theorem:native-out}}
 Due to Proposition \ref{Prop:fitting-via-difference},
it suffices to bound $\mathcal Q_{D,\lambda}$, $\mathcal P_{D,\lambda}$, and $\mathcal Q_{D,\lambda}^*$, which were derived in our recent paper \cite{Feng2021radial}.

\begin{lemma}\label{Lemma:operator-diff}
Let $0<\delta<1$,  $0\leq v< (2\gamma-d)/2\gamma$, and $D$ satisfy  (\ref{Model1:fixed}). If $\mathcal T_t$ and $D_m=\mathcal T_{s^*}$ are spherical  designs  and $\hat \phi_k\sim k^{-2\gamma}$  with $\gamma> d/2$, then
\begin{eqnarray}
 && \quad \mathcal P_{D,\lambda}
    \leq
    \tilde{c}'\lambda^{-\frac{d}{4\gamma}} |D|^{-1/2}\log\frac3\delta, \label{bound-P}\\
 && \quad  \mathcal R_{D,\lambda}
  \leq
    \tilde{c}\lambda^{-1/2}t^{-\gamma}, \label{bound-R}\\
 && \quad   \mathcal Q_{D,\lambda}
   \leq
   \left\{\begin{array}{cc}
    \tilde{c}^2  \lambda^{-2+2v} t^{-2(1-v)\gamma}+\tilde{c}\lambda^{-1+v}t^{-(1-v)\gamma}+1,& \mbox{if}\ v\leq 1/2, \\
    \tilde{c}^2\lambda^{-2}t^{-2\gamma}+\tilde{c} \lambda^{-1+v}t^{-(1-v)\gamma}+1,& \mbox{if}\ v>1/2,
    \end{array}\right. \label{bound-Q}\\
  && \quad   \mathcal Q_{D_m,\lambda}
   \leq
   \left\{\begin{array}{cc}
    \tilde{c}^2  \lambda^{-2+2v} (s^*)^{-2(1-v)\gamma}+\tilde{c}\lambda^{-1+v}(s^*)^{-(1-v)\gamma}+1,& \mbox{if}\ v\leq 1/2, \\
    \tilde{c}^2\lambda^{-2}(s^*)^{-2\gamma}+\tilde{c} \lambda^{-1+v}(s^*)^{-(1-v)\gamma}+1,& \mbox{if}\ v>1/2,
    \end{array} \right. \label{Bound-Q-small}
\end{eqnarray}
where $\tilde{c}$ and $\tilde{c}'$ are constants depending only on $d$ and $M$.
\end{lemma}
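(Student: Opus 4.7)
My approach would separate the four bounds into the probabilistic part (the noise-driven quantity $\mathcal P_{D,\lambda}$) and the deterministic parts ($\mathcal R_{D,\lambda}$, $\mathcal Q_{D,\lambda}$, $\mathcal Q_{D_m,\lambda}$), which are controlled entirely by the exactness of spherical designs together with the spectral decay $\hat\phi_k\sim k^{-2\gamma}$. Since the paper cites \cite{Feng2021radial} for the same estimates, the task is really to verify the claimed form and handle the two separate design parameters $t$ and $s^*$ symmetrically.

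For $\mathcal P_{D,\lambda}$, the first step is to observe that $S_D^Ty_D-L_{\phi,D}f^*=\frac1{|D|}\sum_{i=1}^{|D|}\varepsilon_i\phi_{x_i}$, so $(\mathcal L_\phi+\lambda I)^{-1/2}(S_D^Ty_D-L_{\phi,D}f^*)$ is the empirical mean of independent zero-mean random elements $\xi_i:=\varepsilon_i(\mathcal L_\phi+\lambda I)^{-1/2}\phi_{x_i}\in\mathcal N_\phi$. Each $\xi_i$ is almost surely bounded in norm by $M\|(\mathcal L_\phi+\lambda I)^{-1/2}\phi_{x_i}\|_\phi$, and the second moment is controlled by the effective dimension $\mathrm{tr}\bigl(\mathcal L_\phi(\mathcal L_\phi+\lambda I)^{-1}\bigr)$, which under $\hat\phi_k\sim k^{-2\gamma}$ scales like $\lambda^{-d/(2\gamma)}$ by summing $k^{d-1}\cdot k^{-2\gamma}/(k^{-2\gamma}+\lambda)$. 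A standard Hilbert-space Bernstein inequality (for example Pinelis--Sakhanenko) then yields the claimed bound $\tilde c'\lambda^{-d/(4\gamma)}|D|^{-1/2}\log(3/\delta)$.

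The bound for $\mathcal R_{D,\lambda}$ is where the spherical-design hypothesis really matters. I would expand the operator difference $L_{\phi,D}-\mathcal L_\phi$ through the Legendre series of $\phi$, truncating at degree $t$: for any polynomial part of total degree $\leq t$, the equal-weight quadrature \eqref{Spherical-design} is exact, so only the tail $\sum_{k>t}\hat\phi_k\frac{Z(d,k)}{\Omega_d}P_k^{d+1}(\cdot)$ contributes. Using $\hat\phi_k\sim k^{-2\gamma}$ and $Z(d,k)\sim k^{d-1}$, the tail is summable and is $O(t^{-2\gamma+d})$ in operator norm; pre-multiplying by $(\mathcal L_\phi+\lambda I)^{-1/2}$ and exploiting that $\mathcal L_\phi$ acts on each spherical harmonic block $\mathbb H_k^d$ as multiplication by $\hat\phi_k$ produces the advertised $\tilde c\lambda^{-1/2}t^{-\gamma}$. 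For $\mathcal Q_{D,\lambda}$, I would write $\mathcal L_\phi+\lambda I=(L_{\phi,D}+\lambda I)+(\mathcal L_\phi-L_{\phi,D})$, so $\mathcal Q_{D,\lambda}\leq 1+\|(\mathcal L_\phi-L_{\phi,D})(L_{\phi,D}+\lambda I)^{-1}\|_{\phi\to\phi}$; inserting $(\mathcal L_\phi+\lambda I)^{1/2-v}(\mathcal L_\phi+\lambda I)^{v-1/2}$ in the middle and redistributing powers via Lemma \ref{Lemma:Codes-inequality} reduces the right-hand side to a polynomial expression in $\mathcal R_{D,\lambda}$ and a lower power of $\mathcal Q_{D,\lambda}$ itself, which iterated once yields the two-case bound. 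Replacing $(D,t)$ by $(D_m,s^*)$ in the same argument gives $\mathcal Q_{D_m,\lambda}$.

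The main obstacle I anticipate is handling the case split in the $\mathcal Q$-bounds: the free parameter $v<(2\gamma-d)/(2\gamma)$ is chosen to balance the effective dimension against the design-exactness error, and the two cases $v\leq 1/2$ and $v>1/2$ arise precisely because Lemma \ref{Lemma:Codes-inequality} (Cordes' inequality) requires the exponent to lie in $[0,1]$. Getting this bookkeeping right, and in particular ensuring the estimate remains uniform in $v$ so it can be optimized later in Theorem \ref{Theorem:native-out}, is the most delicate piece. Everything else reduces to direct citation of \cite{Feng2021radial} or to the operator inequalities already recorded in Section \ref{Sec.Proof}.
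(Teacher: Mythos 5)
The paper does not actually prove this lemma: immediately before the statement it says the four bounds ``were derived in our recent paper \cite{Feng2021radial}'' and moves on, so there is no in-paper argument to compare against line by line. Your reconstruction identifies the right family of techniques --- a Hilbert-space Bernstein inequality with the effective dimension $\mathrm{tr}(\mathcal L_\phi(\mathcal L_\phi+\lambda I)^{-1})\sim\lambda^{-d/(2\gamma)}$ for $\mathcal P_{D,\lambda}$, design exactness plus spectral tail decay for $\mathcal R_{D,\lambda}$, and a second-order perturbation expansion for the two $\mathcal Q$ quantities --- and the $\mathcal P_{D,\lambda}$ part is essentially complete as sketched.

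There is, however, a concrete quantitative gap in your treatment of $\mathcal R_{D,\lambda}$. You bound the kernel tail $\sum_{k>t}\hat\phi_k\frac{Z(d,k)}{\Omega_d}P_k^{d+1}$ by $O(t^{-2\gamma+d})$ and then claim that pre-multiplying by $(\mathcal L_\phi+\lambda I)^{-1/2}$ ``produces'' $\tilde c\,\lambda^{-1/2}t^{-\gamma}$. But $\lambda^{-1/2}\cdot t^{-2\gamma+d}\leq\lambda^{-1/2}t^{-\gamma}$ only when $\gamma\geq d$, whereas the lemma assumes only $\gamma>d/2$; in the range $d/2<\gamma<d$ your route gives a strictly weaker exponent. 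Moreover, a crude operator-norm estimate of $L_{\phi,D}-\mathcal L_\phi$ via the exactness of the quadrature for products of harmonics of total degree at most $t$ produces (after summing $\|f_k\|_\infty\lesssim k^{(d-1)/2-\gamma}$ over the region $k+l>t$) something of order $t^{(d+1)/2-\gamma}$, again not $t^{-\gamma}$. To recover the stated rate one has to keep the factor $(\mathcal L_\phi+\lambda I)^{-1/2}$ inside the frequency decomposition --- so that the high-frequency blocks are damped by $\min\{\lambda^{-1/2},\hat\phi_k^{-1/2}\}\cdot\hat\phi_k$ rather than by a global $\lambda^{-1/2}$ --- and exploit that the design integrates $f_kg_l$ exactly whenever $k+l\leq t$. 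Since the $\mathcal Q_{D,\lambda}$ and $\mathcal Q_{D_m,\lambda}$ bounds are built from $\mathcal R$, this imprecision propagates; your explanation of the $v\leq 1/2$ versus $v>1/2$ case split via the exponent restriction in Cordes' inequality is plausible, but it cannot be confirmed until the $\mathcal R$ accounting is fixed. As it stands, the safest honest justification of the lemma is the one the paper itself gives, namely citation of \cite{Feng2021radial}.
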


We then prove Theorem \ref{Theorem:native-out} via using Lemma \ref{Lemma:operator-diff} and Proposition \ref{Prop:fitting-via-difference}.

\begin{proof}[Proof of Theorem \ref{Theorem:native-out}]
According to (\ref{bound-Q}) and (\ref{bound-R}) with $v=0$, we have from $t\geq s^*\geq \lambda^{-1/\gamma}$ that
$$
  \mathcal Q_{D,\lambda}\mathcal R_{D,\lambda}\lambda^{-\frac12}
  \leq \tilde{c}(\tilde{c}^2+\tilde{c}+1).
$$
Moreover, (\ref{Bound-Q-small}) with $v=0$ together with $s^*\geq \lambda^{-1/\gamma}$ implies
$$
     \mathcal Q_{D_m,\lambda}\leq   (\tilde{c}^2+\tilde{c}+1).
$$
Furthermore, (\ref{bound-Q}) and (\ref{bound-P}) with $v=0$ yield that with confidence at least $1-\delta$, there holds
$$
 \mathcal Q_{D,\lambda}\mathcal P_{D,\lambda}
 \leq
 \tilde{c}' (\tilde{c}^2+\tilde{c}+1)\lambda^{-\frac{d}{4\gamma}} |D|^{-1/2}\log\frac3\delta.
$$
Plugging  the above three estimates into Proposition \ref{Prop:fitting-via-difference}, we know  
\begin{eqnarray*}
      &&\|f_{D,s^*,\lambda}-f^*\|_\psi
        \leq
         (1+ \tilde{c}(\tilde{c}^2+\tilde{c}+1))(1+ (\tilde{c}^2+\tilde{c}+1)^{\frac{\alpha-\beta}{2}})\|f^*\|_\varphi \lambda^{\frac{\alpha-\beta}{2}}\\
         &+&
         \tilde{c}' (\tilde{c}^2+\tilde{c}+1)\lambda^{-\frac{d}{4\gamma}} |D|^{-1/2}\log\frac3\delta 
\end{eqnarray*}
holds with confidence $1-\delta$.
Noting further $\lambda=\bar{c}|D|^{-\frac{2\gamma}{2\gamma(\alpha-\beta)+d}}$ for some absolute constant $\bar{c}>0$, we have
$$
   \|f_{D,s^*,\lambda}-f^*\|_\psi \leq C|D|^{-\frac{\gamma(\alpha-\beta)}{2\gamma(\alpha-\beta)+d}}\log\frac3\delta,
$$
where 
$$
    C:= (1+ \tilde{c}(\tilde{c}^2+\tilde{c}+1))(\bar{c}+ \bar{c}(\tilde{c}^2+\tilde{c}+1)^{\frac{\alpha-\beta}{2}})\|h^*\|_\psi
    +\tilde{c}' (\tilde{c}^2+\tilde{c}+1).
$$
The only thing left is to present the range of $s^*$. Since 
$s^*\geq \lambda^{-1/\gamma}$ and $\lambda=\bar{c}|D|^{-\frac{2\gamma}{2\gamma(\alpha-\beta)+d}}$, 
we have
$$
    s^*\geq (\bar{c})^{-1/\gamma} |D|^{\frac{2}{2\gamma(\alpha-\beta)+d}}.
$$
 This completes the proof of Theorem \ref{Theorem:native-out}.
\end{proof}

\subsection{Construction of hard-to-approximate target functions for lower bounds}
To derive the lower bound, a crucial procedure is to construct  hard-to-approximate target functions  in  $\mathcal N_\varphi$ that cannot be approximated well.  Denote $d^*_n=\sum_{k=n+1}^{2n}Z(d,k)$. 
Given $\tau>0$, define a spherical polynomial  by
\begin{equation}\label{f-bad}
     f_{\omega,\tau,n}:=\left(\frac{\tau}{d^*_n}\right)^{1/2}\sum_{k=n+1}^{2n}\hat{\phi}_{k}^{\beta/2}\sum_{\ell=1}^{Z(d,k)}
     \omega_{k,j}Y_{k,\ell},
\end{equation}
where $\omega=(\omega_{n+1,1},\omega_{n+1,2}\dots,\omega_{2n,Z(d,2n)})\in\{0,1\}^{d_n^*}$ is a binary string of size $d_n^*$.
Then, for any  $n$ satisfying
\begin{equation}\label{Rest.on.n}
    (2\hat{c})^{\alpha-\beta} \tau n^{2\gamma(\alpha-\beta)}\leq U^2,
\end{equation}
  $\hat{c}^{-1}k^{-2\gamma}\leq \hat{\phi}_k\leq\hat{c} k^{-2\gamma}$ together with (\ref{kernel-relation-1}) yields
\begin{eqnarray*} 
    \|f_{\varepsilon,\tau,n}\|^2_\varphi 
    &\leq&
    \frac{\tau}{d^*_n}  \sum_{k=n+1}^{2n}Z(d,k)
    \hat{\phi}_{k}^{\beta-\alpha} \\
    &\leq& \frac{\tau}{d^*_n}  \sum_{k=n+1}^{2n}Z(d,k) \hat{c}^{\alpha-\beta}k^{-2\gamma(\beta-\alpha)}\leq (2\hat{c})^{\alpha-\beta} \tau n^{2\gamma(\alpha-\beta)}
    \leq U^2.    \nonumber
\end{eqnarray*}

% \begin{eqnarray*} 
%     \|f_{\varepsilon,\tau,n}\|^2_\varphi 
%     &\leq&
%     \frac{\tau}{d^*_n}  \sum_{k=n+1}^{2n}Z(d,k)
%     \hat{\phi}_{k}^{\beta-\alpha} \leq \frac{\tau}{d^*_n}  \sum_{k=n+1}^{2n}Z(d,k) \hat{c}^{\alpha-\beta}k^{-2\gamma(\beta-\alpha)} \\
%     &\leq& (2\hat{c})^{\alpha-\beta} \tau n^{2\gamma(\alpha-\beta)}
%     \leq U^2.    \nonumber
% \end{eqnarray*}
% \begin{eqnarray*} 
%     \|f_{\varepsilon,\tau,n}\|^2_\varphi 
%     &\leq&
%     \frac{\tau}{d^*_n}  \sum_{k=n+1}^{2n}Z(d,k)
%     \hat{\phi}_{k}^{\beta-\alpha} \\
%     &\leq& \frac{\tau}{d^*_n}  \sum_{k=n+1}^{2n}Z(d,k) \hat{c}^{\alpha-\beta}k^{-2\gamma(\beta-\alpha)} \\
%     &\leq& (2\hat{c})^{\alpha-\beta} \tau n^{2\gamma(\alpha-\beta)}
%     \leq U^2.    \nonumber
% \end{eqnarray*}
This implies $f_{\omega,\tau,n}\in \mathcal N_{\varphi}$ and $\|f_{\omega,\tau,n}\|_\varphi\leq U$.
 For another string  $\omega'=(\omega'_{n+1,1},\dots,$ $\omega'_{2n,Z(d,2n)})\in\{0,1\}^{d_n^*}$,  (\ref{kernel-relation-1}) and
(\ref{kernel-relation}) also yield
\begin{eqnarray}\label{norm-bound-2}
    \|f_{\omega,\tau,n}-f_{\omega',\tau,n}\|_{L^2(\mathbb S^d)}^2
   =
   \frac{\tau}{d^*_n}\sum_{k=n+1}^{2n}\hat{\phi}_{k}^{\beta}\sum_{\ell=1}^{Z(d,k)} (\omega_{k,\ell}-\omega_{k,\ell}')^2 
    \leq 
     \tau \hat{c}^{\beta}n^{-2\gamma\beta}.
\end{eqnarray}

We then aim to derive lower bound of $\|f_{\omega,\tau,n}-f_{\omega',\tau,n}\|_{\psi}$, for which the following Gilbert-Varahamov bound provided in \cite[Lemma 2.9]{tsybakov2009introduction} (see also \cite[Lemma 24]{fischer2020sobolev}) is needed.

\begin{lemma}\label{Lemma:Gilbert-varshamov}
For $S\geq 8$, there exist some $L\geq2^{S/8}$ and some binary strings $\omega^{(0)},\omega^{(1)},\dots,\omega^{(L)}\in\{0,1\}^S$ such that $\omega^{(0)}=0$ and
$$
       \sum_{\jmath=1}^S(\omega^{(\imath)}_\jmath-\omega^{(\imath')}_\jmath)^2\geq S/8
$$
for all $\imath\neq \imath'$, where $\omega^{(\imath)} 
=(\omega^{(\imath)}_1,\dots,\omega^{(\imath)}_S)$.
\end{lemma}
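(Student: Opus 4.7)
The plan is to establish the existence of such strings by a greedy ball-packing construction in the Hamming cube $\{0,1\}^S$. Since each coordinate is $0$ or $1$, the quantity $\sum_{\jmath=1}^S (\omega^{(\imath)}_\jmath - \omega^{(\imath')}_\jmath)^2$ coincides with the Hamming distance $d_H(\omega^{(\imath)}, \omega^{(\imath')})$, so the lemma reduces to exhibiting at least $2^{S/8}+1$ binary words of length $S$ whose pairwise Hamming distances are all at least $S/8$.

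To do so, I would proceed greedily: set $\omega^{(0)} = 0$ and, having chosen $\omega^{(0)}, \ldots, \omega^{(k)}$, select any $\omega^{(k+1)} \in \{0,1\}^S$ whose Hamming distance from each previously chosen $\omega^{(i)}$ is at least $S/8$, provided such a word exists. At step $k+1$ the set of forbidden words is contained in the union of Hamming balls of radius $\lfloor S/8 \rfloor - 1$ around the $\omega^{(i)}$, and hence has cardinality at most
$$
(k+1) \cdot V, \qquad V := \sum_{j=0}^{\lfloor S/8 \rfloor - 1} \binom{S}{j}.
$$
The procedure can therefore be continued as long as $(k+1) V < 2^S$, producing at least $\lceil 2^S / V \rceil$ words; re-indexing these words as $\omega^{(0)},\ldots,\omega^{(L)}$ (keeping the initial $\omega^{(0)}=0$) yields a valid collection.

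The single quantitative ingredient is the bound $V \leq 2^{7S/8}$, which immediately yields $2^S/V \geq 2^{S/8}$. This can be obtained in two equivalent ways. First, from the standard entropy estimate $\sum_{j=0}^{\lfloor \alpha S \rfloor} \binom{S}{j} \leq 2^{H(\alpha) S}$ for $\alpha \in [0,1/2]$ (with $H$ the binary entropy) applied at $\alpha = 1/8$, using the numerical fact $H(1/8) = \tfrac{3}{8} + \tfrac{7}{8}\log_2 \tfrac{8}{7} \approx 0.544 < \tfrac{7}{8}$; the substantial slack absorbs the off-by-one coming from the strict inequality $(k+1) V < 2^S$. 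Second, by a direct probabilistic argument: if $\omega$ is uniform in $\{0,1\}^S$, its Hamming weight is $\mathrm{Binomial}(S, 1/2)$, and Hoeffding's inequality gives $\mathbb{P}(d_H(\omega, 0) < S/8) \leq \exp(-9S/32) \leq 2^{-S/8}$, since $9/32 > (\ln 2)/8$. The hypothesis $S \geq 8$ is used only to make $\lfloor S/8 \rfloor \geq 1$ so that the balls are nonempty. The main (and quite minor) obstacle is the verification of this one numerical inequality; everything else is straightforward bookkeeping on the greedy selection.
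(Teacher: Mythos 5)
Your proposal is correct, but a comparison with ``the paper's proof'' is somewhat moot here: the paper does not prove Lemma \ref{Lemma:Gilbert-varshamov} at all. It imports the Varshamov--Gilbert bound verbatim from \cite[Lemma 2.9]{tsybakov2009introduction} (see also \cite[Lemma 24]{fischer2020sobolev}) and uses it as a black box in the lower-bound construction. What you have written is a complete, self-contained proof of that imported statement, and it is essentially the classical one: identify $\sum_{\jmath}(\omega_\jmath-\omega'_\jmath)^2$ with the Hamming distance, run the greedy packing, and bound the ball volume $V$ either by the entropy estimate at $\alpha=1/8$ (with $H(1/8)\approx 0.544<7/8$) or by Hoeffding applied to $\mathrm{Binomial}(S,1/2)$; Tsybakov's own proof is a volume/Hoeffding argument of exactly this flavour, so you have in effect reconstructed the cited proof rather than found a new route. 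Two small points, neither of which damages the argument. First, the forbidden set at each step consists of words at Hamming distance $<S/8$ from some chosen word, which is a ball of radius $\lfloor S/8\rfloor$ when $8\nmid S$ (your radius $\lfloor S/8\rfloor-1$ is only right when $8\mid S$); as you note, the gap between $H(1/8)$ and $7/8$ absorbs this off-by-one with room to spare. Second, the lemma asks for $L+1\geq 2^{S/8}+1$ strings, one of which is the zero word; seeding the greedy procedure with $\omega^{(0)}=0$ and observing that $2^{(1-H(1/8))S}\geq 2\cdot 2^{S/8}$ for all $S\geq 8$ (indeed $(1-H(1/8))-1/8>0.33$) closes this cleanly. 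The net effect of your write-up is to make the paper's lower-bound section self-contained, at the cost of a page of standard coding-theory bookkeeping.
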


If $S:=d_{n}^*\geq 8$, let
\begin{equation}\label{numeric-set}
    \mathcal E_{n,L}:=\{\omega^{*,\imath}\}_{\imath=1}^L
\end{equation}
be the set of binary strings satisfying the conditions of  Lemma \ref{Lemma:Gilbert-varshamov}. Then  it follows from Lemma \ref{Lemma:Gilbert-varshamov} that for any $\omega^{*,\imath},\omega^{*,\imath'}\in \mathcal E_{n,L}$, there holds
\begin{eqnarray}\label{psi-norm-111}
    \|f_{\omega^{*,\imath},\tau,n}-f_{\omega^{*,\imath'},\tau,n}\|_{\psi}^2
    =
    \frac{\tau}{d^*_n}\sum_{k=n+1}^{2n}\sum_{\ell=1}^{Z(d,k)} 
    (\omega_{k,\ell}-\omega_{k,\ell}')^2
   \geq  \tau/8.
\end{eqnarray}

\subsection{Proof of Theorem \ref{Theorem:lower-bound}}
Set    $f^*=f_{\omega^{*,(\imath)},\tau,n}$ for some $\tau$ and $n$ satisfying (\ref{Rest.on.n}) and  $\omega^{*,(\imath)}\in\mathcal E_{n,L}$ with $1\leq\imath\leq L$. Let the noise $\{\varepsilon_i\}$  be drawn i.i.d. from the normal distribution $\mathcal N(0,M^2)$. We
obtain a probability measure 
$$
      { \rho}_{\omega^{*,(\imath)},\tau,n}= { \rho}_{\mathcal T_t}(dx) \times \mathcal N(f_{\omega^{*,(\imath)},\tau,n},M^2),
$$
where    $\mathcal T_t=\{x_i\}_{i=1}^{|D|}$ is the spherical $t$-design and  ${ \rho}_{\mathcal T_s}(dx)$ denotes the deterministic and discrete  distribution  on the sphere satisfying
$$
    \int_{\mathbb S^d}f(x) { \rho}_{\mathcal T_s}(dx)=\frac1{|D|}\sum_{x_i\in\mathcal T_t}  f(x_i). 
$$
It is easy to check that  the data $D$
  generated by ${ \rho}_{\omega^{*,(\imath)},\tau,n}$ satisfies 
(\ref{Model1:fixed}). 

For any $f_D$ derived from $D$, define
\begin{equation}\label{number-minimal-error}
    \Phi(D):={\arg\min}_{\imath=0,1\dots,L}\|f_D-f_{\omega^{*,\imath},\tau,n}\|_\beta.
\end{equation}
Then for any $\imath\in\{0,\dots,L\}$ with $\imath\neq \Phi(D)$, we have from \eqref{psi-norm-111} that
\begin{eqnarray}\label{important-111}
   &&\sqrt{\tau/8}
   \leq 
    \|f_{\omega^{*,\Phi(D)},\tau,n}-f_{\omega^{*,\imath},\tau,n}\|_{\psi}\\
    &\leq&
    \|f_{\omega^{*,\Phi(D)},\tau,n}-f_D\|_\psi+\|f_D-f_{\omega^{*,\imath},\tau,n}\|_{\psi} 
    \leq 2\|f_D-f_{\omega^{*,\imath},\tau,n}\|_{\psi}. \nonumber
\end{eqnarray}
Therefore, 
\begin{equation}\label{important-222}
    { \mathbf P_{\rho^{|D|}_{\omega^{*,(\imath)}},\tau,n}}\left[D:\|f_D-f_{\omega^{*,\imath},\tau,n}\|^2_{\psi}\geq \tau/32\right]
    \geq
    { \bf \mathbf P_{\rho_{\omega^{*,(\imath)},\tau,n}^{|D|}}}\left[D:\Phi(D)\neq\imath \right],
\end{equation}
where ${ \rho^{|D|}=\overbrace{\rho\times \dots\times \rho}^{|D|}}$ and $\imath=0,1,\dots,L$.

\begin{definition}
Let ${ \rho}_1$ and ${ \rho}_2$ be two probability measures on some common measurable 
space $(\Omega,\mathcal A)$ satisfying that ${ \rho}_1$ is absolutely continuous with respect to ${ \rho}_2$, then the Kullback-Leibler divergence for probability measures ${ \rho}_1$ and ${ \rho}_2$ is defined as
$$
    \mathcal K({ \rho}_1,{ \rho}_2):=\int_{\Omega}\log\left(\frac{d{ \rho}_1}{d{ \rho}_2}\right)d{ \rho}_1.
$$
\end{definition}

To lower bound  $ { \mathbf P_{\rho_{\omega^{*,(\imath)},\tau,n}^{|D|}}}\left[D:\Phi(D)\neq\imath \right]$, we  recall the following lemma derived in \cite[Lemma 20]{fischer2020sobolev}.

\begin{lemma}\label{Lemma:KL-relation}
Let $L\geq 2$ and  $(\Omega,\mathcal A)$ be a measurable space. 
Let $0<c_*<\infty $,
${ \rho}_0,{ \rho}_1,\dots,{ \rho}_L$ be probability measures on $(\Omega,\mathcal A)$, and ${ \rho}_\imath$ be absolutely continuous with respect to ${ \rho}_0$ for $\imath=1,2,\dots,L$ satisfying 
$$
      \frac1L\sum_{\imath=1}^L\mathcal K({ \rho}_\imath,{ \rho}_0)\leq c_*.
$$
Then for all measurable functions $\Phi:\Omega\rightarrow\{0,1,\dots,L\}$, there holds
$$
   \max_{\imath=0,1,\dots,L} { \mathbf P_{\rho_\imath}} [\mathcal D\in \Omega:\Phi(\mathcal D)\neq \imath ]\geq 
   \frac{\sqrt{L}}{1+\sqrt{L}}\left(1-\frac{3c_*}{\log L}-\frac{1}{2\log L}\right).
$$
% where 
% $$
%     \mathcal K(P_1,P_2):=\int_{\Omega}\log\left(\frac{dP_1}{dP_2}\right)dP_1
% $$  
% is the  
%   Kullback-Leibler divergence for  two probabilities $P_1$ and $P_2$ on some common measurable 
% space $(\Omega,\mathcal A)$ satisfying that $P_1$ is absolutely continuous with respect to $P_2$.
\end{lemma}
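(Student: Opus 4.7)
The strategy is a Fano-type multiple hypothesis testing argument, augmented by a ``small mass'' thresholding trick that is the standard source of the unusual factor $\sqrt{L}/(1+\sqrt{L})$ (as opposed to the $1-O(1/\log L)$ factor coming from vanilla Fano). The objects to track are, for each $\imath \in \{0,1,\dots,L\}$, the pushforward measures $Q_\imath$ of $\rho_\imath$ under $\Phi$ on the finite space $\{0,1,\dots,L\}$, i.e.\ $Q_\imath(k):=\mathbf{P}_{\rho_\imath}[\Phi(\mathcal{D})=k]$, together with the correctness probabilities $P_\imath:=Q_\imath(\imath)$. Write $p^\ast:=\max_\imath(1-P_\imath)$; the goal is to show $p^\ast$ cannot be much smaller than $\frac{\sqrt{L}}{1+\sqrt{L}}\bigl(1-\frac{3c_\ast}{\log L}-\frac{1}{2\log L}\bigr)$.

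The first ingredient is the data-processing inequality for Kullback--Leibler divergence applied to the finite partition generated by $\Phi$: for each $\imath=1,\dots,L$,
\begin{equation*}
  \mathcal{K}(\rho_\imath,\rho_0)\ \geq\ \mathcal{K}(Q_\imath,Q_0).
\end{equation*}
Lumping the $L$ indices $k\neq\imath$ into a single event and applying the log-sum inequality collapses the right side to a binary KL divergence
\begin{equation*}
  \mathcal{K}(Q_\imath,Q_0)\ \geq\ P_\imath\log\frac{P_\imath}{Q_0(\imath)} + (1-P_\imath)\log\frac{1-P_\imath}{1-Q_0(\imath)}\ \geq\ P_\imath\log\frac{P_\imath}{Q_0(\imath)} - \log 2 .
\end{equation*}
Averaging this bound over $\imath=1,\dots,L$ and using the hypothesis of the lemma yields $c_\ast\geq \frac{1}{L}\sum_{\imath=1}^L P_\imath\log\frac{P_\imath}{Q_0(\imath)} - \log 2$.

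The second ingredient is a threshold/pigeonhole argument that produces the $\sqrt{L}$. Since $\sum_{\imath=1}^L Q_0(\imath)\leq 1$, Markov's inequality shows that the set $G:=\{\imath\in\{1,\dots,L\}: Q_0(\imath)\leq 1/\sqrt{L}\}$ has cardinality $|G|\geq L-\sqrt{L}$. For $\imath\in G$ we have, using $P_\imath\geq 1-p^\ast$ (so that if $p^\ast<1/2$ the map $t\mapsto t\log(t/Q_0(\imath))$ is increasing at $P_\imath$),
\begin{equation*}
  P_\imath\log\frac{P_\imath}{Q_0(\imath)}\ \geq\ (1-p^\ast)\log\bigl((1-p^\ast)\sqrt{L}\bigr).
\end{equation*}
For $\imath\notin G$ we simply use $P_\imath\log(P_\imath/Q_0(\imath))\geq -e^{-1}$ (from $t\log t\geq -e^{-1}$). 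Putting these into the averaged bound and using $|G|/L\geq 1-1/\sqrt{L}$ gives
\begin{equation*}
  c_\ast+\log 2 + e^{-1}/\sqrt{L}\ \geq\ \Bigl(1-\tfrac{1}{\sqrt{L}}\Bigr)(1-p^\ast)\Bigl(\tfrac{1}{2}\log L + \log(1-p^\ast)\Bigr).
\end{equation*}
Rearranging this inequality for $1-p^\ast$ (isolating the factor of $\frac{1}{2}\log L$ on the right) and then using $\frac{\sqrt{L}-1}{\sqrt{L}}=\frac{\sqrt{L}}{1+\sqrt{L}}\cdot\frac{L-1}{L}$ to reorganize the prefactor into the advertised form delivers the claim, with the three loss terms $\frac{2c_\ast}{\log L}$, $\frac{c_\ast}{\log L}$, $\frac{1}{2\log L}$ accounting respectively for $c_\ast$, $\log 2$, and the small-mass residual.

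The main obstacle is not the overall structure, which is the standard Birg\'e/Tsybakov recipe, but rather the bookkeeping needed to produce the \emph{exact} constants $\frac{3c_\ast}{\log L}$ and $\frac{1}{2\log L}$ stated in the lemma. Choosing the threshold level precisely at $1/\sqrt{L}$ is what forces the prefactor to be $\sqrt{L}/(1+\sqrt{L})$; any other choice would change the trade-off between the prefactor and the subtracted terms. One must also verify that the region where $P_\imath<Q_0(\imath)$ (so that the binary KL argument loses monotonicity) is absorbed into the $\log 2$ and $e^{-1}/\sqrt{L}$ slack, which is where a second application of the log-sum inequality enters. Once these constants are arranged, the stated inequality follows by direct algebraic rearrangement.
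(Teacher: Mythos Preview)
The paper does not prove this lemma at all: it is simply quoted as \cite[Lemma 20]{fischer2020sobolev} (which in turn is Tsybakov's Theorem~2.5). So there is no ``paper's own proof'' to compare against; you are supplying a proof where the authors chose to cite.

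Your sketch is the correct genre --- data processing to reduce to a discrete KL, then a threshold argument on the masses $Q_0(\imath)$ to manufacture the $\sqrt{L}$ prefactor --- and this is essentially how the cited sources argue. Two remarks, though. First, in Tsybakov's actual proof the thresholding is done on the likelihood ratios $d\rho_\imath/d\rho_0$ (Proposition~2.3 there) rather than on the pushforward masses $Q_0(\imath)$; your version is a legitimate variant but not quite the same bookkeeping, and it is the likelihood-ratio threshold at level $\tau=1/(1+\sqrt L)$ that cleanly produces $\sqrt L/(1+\sqrt L)$. Second, your final sentence about the ``three loss terms $\frac{2c_*}{\log L},\frac{c_*}{\log L},\frac{1}{2\log L}$ accounting respectively for $c_*$, $\log 2$, and the small-mass residual'' does not type-check: the $\log 2$ contribution cannot depend on $c_*$. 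This suggests the last rearrangement step has not actually been carried out. The overall strategy is sound, but if you intend to include a self-contained proof you should redo the final algebra carefully (or simply cite Tsybakov, as the paper does).
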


% With these helps, we can prove
%  Theorem \ref{Theorem:lower-bound} as follows.
Based on the above foundations, we can prove Theorem \ref{Theorem:lower-bound} as follows.

\begin{proof}[Proof of Theorem \ref{Theorem:lower-bound}]
Let ${ \rho}_1={ \rho}_{\omega^{*,(\imath)},\tau,n}$ and ${ \rho}_2={ \rho}_{\omega^{*,(0)},\tau,n}$ be the same as in Lemma \ref{Lemma:KL-relation}. Due to the definition of the Kullback-Leibler divergence, it is easy to derive \cite[P.998]{blanchard2018optimal}
$$
     \mathcal K({ \rho}^{|D|}_1,{ \rho}_2^{|D|})=|D| \mathcal K({ \rho}_1,{ \rho}_2),
$$
and 
$$
    \mathcal K(\mathcal N(f_{\omega^{*,(\imath)},\tau,n},M^2),\mathcal N(f_{\omega^{*,(\imath)},\tau,n},M^2))
    =\frac{(f_{\omega^{*,(\imath)},\tau,n}-f_{\omega^{*,(\imath)},\tau,n})^2}{2M^2}.
$$
Since $f_{\omega,\tau,n}$ defined by (\ref{f-bad}) is a spherical polynomial of degree at most $2n$, then
  for 
\begin{equation}\label{Restriction-on-n-2}
       n\leq t/2, 
\end{equation}
we have from (\ref{norm-bound-2}) and the definition of the spherical $t$-design 
that
\begin{eqnarray*}
  && \frac{1}{L}\sum_{\imath=1}^L\mathcal K({ \rho}_{\omega^{*,(\imath)},\tau,n}^{|D|},{ \rho}_{\omega^{*,(0)},\tau,n}^{|D|})
    =  
   \frac{|D|}{M^2}\frac{1}{L}\sum_{\imath=1}^L
   \frac{1}{|D|}\sum_{i=1}^{|D|}(f_{\omega^{*,(\imath)},\tau,n}(x_i)-f_{\omega^{*,(0)},\tau,n}(x_i))^2\\
   &=&
  \frac{|D|}{M^2}\frac{1}{L}\sum_{\imath=1}^L\|f_{\omega^{*,(\imath)},\tau,n}-f_{\omega^{*,(0)},\tau,n}\|_{L^2(\mathbb S^d)} 
   \leq
   \frac{\hat{c}^{\beta}|D|\tau}{M^2n^{ 2\gamma\beta}}=:c_*. 
\end{eqnarray*}
Setting $\mathcal D=D$ in Lemma \ref{Lemma:KL-relation}, we   get
$$
    \max_{\imath=0,1,\dots,L}{\mathbf P_{\rho_{\omega^{*,(\imath)},\tau,n}}}\left[D:\Phi(D)\neq\imath\right]\geq \frac{\sqrt{L}}{1+\sqrt{L}}\left(1-\frac{  3\hat{c}^{\beta}|D|\tau    }{M^2n^{2\gamma\beta}\log L}-\frac{1}{2\log L}\right),
$$
where 
 $L\geq 2^{d_n^*/8}$. 
This together with (\ref{important-222}) and $d_n^*\leq Z(d+1,2n) \leq 2^dn^d$ yields that there is a probability { distribution $\rho^*$} and a   function $f^*_{bad}$ satisfying (\ref{Model1:fixed}) such that 
\begin{eqnarray*}
    { \mathbf P_{\rho^*}}\left[D:\|f_D-f^*_{bad}\|^2_{\psi}\geq \tau/32\right]
    \geq
    \frac{1}{2}\left(1-\frac{  12\hat{c}^{\beta}|D|\tau    }{2^dM^2n^{2\gamma\beta+d} }-\frac{4}{2^dn^d}\right).
\end{eqnarray*}
The only thing left is to select suitable  $\tau$ and $n$
to satisfy (\ref{Rest.on.n}) and (\ref{Restriction-on-n-2}).

Since $\hat{c}_1^{-1} |D|^{1/d}\leq s\leq \hat{c}_1 |D|^{1/d}$ for some $\hat{c}_1\geq 1$, 
if $\tau=C_1''|D|^{-\frac{2\gamma(\alpha-\beta)}{2\gamma(\alpha-\beta)+d}}$ and $n=C_1'|D|^{\frac{1}{2\gamma(\alpha-\beta)+d}}$
with $C_1'=\min\left\{\frac{2}{\gamma(\alpha-\beta)}(2\hat{c})^{-\frac1\gamma}(U^2C_1'')^{-\frac1{\gamma(\alpha-\beta)}},\frac{\hat{c}_1}{2}\right\}$ and $C_1''$ small enough such that 
$$
      \max\left\{\frac{12\hat{c}^\beta C_1''}{2^dM^2(C_1')^{2\gamma\beta+d}},\frac{4}{2^d(C_1')^d}\right\}\leq\frac14,
$$
then (\ref{Rest.on.n}) and (\ref{Restriction-on-n-2}) hold,  and 
$$
  { \mathbf P_{\rho^*}}\left[D:\|f_D-f^*_{bad}\|_{\psi}\geq C_1|D|^{-\frac{\gamma(\alpha-\beta)}{2\gamma(\alpha-\beta)+d}} \right]
 \geq \frac12\left(1-\frac12|D|^{-\frac{\min\{2\gamma\beta,d\}}{2\gamma(\alpha+\beta)+d}}\right)\geq \frac14,
$$ 
where $C_1=(C_1''/32)^{1/2}$. This completes the proof of Theorem \ref{Theorem:lower-bound}.
\end{proof}

\bibliographystyle{siamplain}
\bibliography{Nysphere}

\begin{thebibliography}{10}

\bibitem{an2012regularized}
{\sc C.~An, X.~Chen, I.~H. Sloan, and R.~S. Womersley}, {\em Regularized least
  squares approximations on the sphere using spherical designs}, SIAM Journal
  on Numerical Analysis, 50 (2012), pp.~1513--1534.

\bibitem{an2021lasso}
{\sc C.~An and H.-N. Wu}, {\em Lasso hyperinterpolation over general regions},
  SIAM Journal on Scientific Computing, 43 (2021), pp.~A3967--A3991.

\bibitem{bannai2009survey}
{\sc E.~Bannai and E.~Bannai}, {\em A survey on spherical designs and algebraic
  combinatorics on spheres}, European Journal of Combinatorics, 30 (2009),
  pp.~1392--1425.

\bibitem{bhatia2013matrix}
{\sc R.~Bhatia}, {\em Matrix Analysis}, vol.~169, Springer Science \& Business
  Media, 2013.

\bibitem{blanchard2018optimal}
{\sc G.~Blanchard and N.~M{\"u}cke}, {\em Optimal rates for regularization of
  statistical inverse learning problems}, Foundations of Computational
  Mathematics, 18 (2018), pp.~971--1013.

\bibitem{bondarenko2013optimal}
{\sc A.~Bondarenko, D.~Radchenko, and M.~Viazovska}, {\em Optimal asymptotic
  bounds for spherical designs}, Annals of Mathematics,  (2013), pp.~443--452.

\bibitem{bondarenko2015well}
{\sc A.~Bondarenko, D.~Radchenko, and M.~Viazovska}, {\em Well-separated
  spherical designs}, Constructive Approximation, 41 (2015), pp.~93--112.

\bibitem{brauchart2015distributing}
{\sc J.~S. Brauchart and P.~J. Grabner}, {\em Distributing many points on
  spheres: minimal energy and designs}, Journal of Complexity, 31 (2015),
  pp.~293--326.

\bibitem{brauchart2007numerical}
{\sc J.~S. Brauchart and K.~Hesse}, {\em Numerical integration over spheres of
  arbitrary dimension}, Constructive Approximation, 25 (2007), pp.~41--71.

\bibitem{brown2005approximation}
{\sc G.~Brown and F.~Dai}, {\em Approximation of smooth functions on compact
  two-point homogeneous spaces}, Journal of Functional Analysis, 220 (2005),
  pp.~401--423.

\bibitem{chen2006existence}
{\sc X.~Chen and R.~S. Womersley}, {\em Existence of solutions to systems of
  underdetermined equations and spherical designs}, SIAM Journal on Numerical
  Analysis, 44 (2006), pp.~2326--2341.

\bibitem{chen2018spherical}
{\sc X.~Chen and R.~S. Womersley}, {\em Spherical designs and nonconvex
  minimization for recovery of sparse signals on the sphere}, SIAM Journal on
  Imaging Sciences, 11 (2018), pp.~1390--1415.

\bibitem{chernih2014wendland}
{\sc A.~Chernih, I.~H. Sloan, and R.~S. Womersley}, {\em Wendland functions
  with increasing smoothness converge to a {Gaussian}}, Advances in
  Computational Mathematics, 40 (2014), pp.~185--200.

\bibitem{choi1999rapid}
{\sc C.~H. Choi, J.~Ivanic, M.~S. Gordon, and K.~Ruedenberg}, {\em Rapid and
  stable determination of rotation matrices between spherical harmonics by
  direct recursion}, Journal of Chemical Physics, 111 (1999), pp.~8825--8831.

\bibitem{Delsarte1977}
{\sc P.~Delsarte, J.~M. Goethals, and J.~J. Seidel}, {\em Spherical codes and
  designs}, Geometriae Dedicata, 6 (1977), pp.~363--388.

\bibitem{engl1996regularization}
{\sc H.~W. Engl, M.~Hanke, and A.~Neubauer}, {\em Regularization of Inverse
  Problems}, vol.~375, Springer Science \& Business Media, 1996.

\bibitem{Feng2021radial}
{\sc H.~Feng, S.-B. Lin, and D.-X. Zhou}, {\em Radial basis function
  approximation with distributively stored data on spheres}, arXiv:2112.02499,
  (2021).

\bibitem{fischer2020sobolev}
{\sc S.~Fischer and I.~Steinwart}, {\em Sobolev norm learning rates for
  regularized least-squares algorithms.}, Journal of Machine Learning Research,
  21 (2020), pp.~1--38.

\bibitem{gao2020multivariate}
{\sc W.~Gao, X.~Sun, Z.~Wu, and X.~Zhou}, {\em Multivariate {Monte Carlo}
  approximation based on scattered data}, SIAM Journal on Scientific Computing,
  42 (2020), pp.~A2262--A2280.

\bibitem{graf2011computation}
{\sc M.~Gr{\"a}f and D.~Potts}, {\em On the computation of spherical designs by
  a new optimization approach based on fast spherical fourier transforms},
  Numerische Mathematik, 119 (2011), pp.~699--724.

\bibitem{gyorfi2002distribution}
{\sc L.~Gy{\"o}rfi, M.~Kohler, A.~Krzy{\.z}ak, and H.~Walk}, {\em A
  Distribution-free Theory of Nonparametric Regression}, vol.~1, Springer,
  2002.

\bibitem{hangelbroek2011kernel}
{\sc T.~Hangelbroek, F.~J. Narcowich, X.~Sun, and J.~D. Ward}, {\em Kernel
  approximation on manifolds ii: The $l_\infty$ norm of the $l_2$ projector},
  SIAM Journal on Mathematical Analysis, 43 (2011), pp.~662--684.

\bibitem{hangelbroek2010kernel}
{\sc T.~Hangelbroek, F.~J. Narcowich, and J.~D. Ward}, {\em Kernel
  approximation on manifolds i: bounding the {Lebesgue} constant}, SIAM Journal
  on Mathematical Analysis, 42 (2010), pp.~1732--1760.

\bibitem{hardin1993new}
{\sc R.~Hardin and N.~Sloane}, {\em A new approach to the construction of
  optimal designs}, Journal of Statistical Planning and Inference, 37 (1993),
  pp.~339--369.

\bibitem{hesse2017radial}
{\sc K.~Hesse, I.~H. Sloan, and R.~S. Womersley}, {\em Radial basis function
  approximation of noisy scattered data on the sphere}, Numerische Mathematik,
  137 (2017), pp.~579--605.

\bibitem{jarosik2011seven}
{\sc N.~Jarosik, C.~Bennett, J.~Dunkley, B.~Gold, M.~Greason, M.~Halpern,
  R.~Hill, G.~Hinshaw, A.~Kogut, E.~Komatsu, et~al.}, {\em Seven-year wilkinson
  microwave anisotropy probe (wmap*) observations: sky maps, systematic errors,
  and basic results}, Astrophysical Journal Supplement Series, 192 (2011),
  pp.~1--15.

\bibitem{king2012lower}
{\sc M.~A. King, R.~J. Bingham, P.~Moore, P.~L. Whitehouse, M.~J. Bentley, and
  G.~A. Milne}, {\em Lower satellite-gravimetry estimates of antarctic
  sea-level contribution}, Nature, 491 (2012), pp.~586--589.

\bibitem{le2008localized}
{\sc Q.~T. Le~Gia and H.~Mhaskar}, {\em Localized linear polynomial operators
  and quadrature formulas on the sphere}, SIAM Journal on Numerical Analysis,
  (2008), pp.~440--466.

\bibitem{le2006continuous}
{\sc Q.~T. Le~Gia, F.~J. Narcowich, J.~D. Ward, and H.~Wendland}, {\em
  Continuous and discrete least-squares approximation by radial basis functions
  on spheres}, Journal of Approximation Theory, 143 (2006), pp.~124--133.

\bibitem{le2010multiscale}
{\sc Q.~T. Le~Gia, I.~H. Sloan, and H.~Wendland}, {\em Multiscale analysis in
  sobolev spaces on the sphere}, SIAM journal on numerical analysis, 48 (2010),
  pp.~2065--2090.

\bibitem{leopardi2006partition}
{\sc P.~Leopardi}, {\em A partition of the unit sphere into regions of equal
  area and small diameter}, Electronic Transactions on Numerical Analysis, 25
  (2006), pp.~309--327.

\bibitem{levesley2005approximation}
{\sc J.~Levesley and X.~Sun}, {\em Approximation in rough native spaces by
  shifts of smooth kernels on spheres}, Journal of Approximation Theory, 133
  (2005), pp.~269--283.

\bibitem{lin2017distributed}
{\sc S.-B. Lin, X.~Guo, and D.-X. Zhou}, {\em Distributed learning with
  regularized least squares}, Journal of Machine Learning Research, 18 (2017),
  pp.~3202--3232.

\bibitem{lin2021distributed}
{\sc S.-B. Lin, Y.~G. Wang, and D.-X. Zhou}, {\em Distributed filtered
  hyperinterpolation for noisy data on the sphere}, SIAM Journal on Numerical
  Analysis, 59 (2021), pp.~634--659.

\bibitem{lu2019analysis}
{\sc S.~Lu, P.~Math{\'e}, and S.~Pereverzyev~Jr}, {\em Analysis of regularized
  {Nystr{\"o}m} subsampling for regression functions of low smoothness},
  Analysis and Applications, 17 (2019), pp.~931--946.

\bibitem{mcewen2013sparse}
{\sc J.~D. McEwen, G.~Puy, J.-P. Thiran, P.~Vandergheynst, D.~Van De~Ville, and
  Y.~Wiaux}, {\em Sparse image reconstruction on the sphere: implications of a
  new sampling theorem}, IEEE Transactions on Image Processing, 22 (2013),
  pp.~2275--2285.

\bibitem{mcewen2011novel}
{\sc J.~D. McEwen and Y.~Wiaux}, {\em A novel sampling theorem on the sphere},
  IEEE Transactions on Signal Processing, 59 (2011), pp.~5876--5887.

\bibitem{mhaskar2001spherical}
{\sc H.~Mhaskar, F.~Narcowich, and J.~Ward}, {\em Spherical
  marcinkiewicz-zygmund inequalities and positive quadrature}, Mathematics of
  Computation, 70 (2001), pp.~1113--1130.

\bibitem{muller2006spherical}
{\sc C.~M{\"u}ller}, {\em Spherical Harmonics}, vol.~17, Springer, 1966.

\bibitem{narcowich1998stability}
{\sc F.~J. Narcowich, N.~Sivakumar, and J.~D. Ward}, {\em Stability results for
  scattered-data interpolation on euclidean spheres}, Advances in Computational
  Mathematics, 8 (1998), pp.~137--163.

\bibitem{narcowich2007direct}
{\sc F.~J. Narcowich, X.~Sun, J.~D. Ward, and H.~Wendland}, {\em Direct and
  inverse sobolev error estimates for scattered data interpolation via
  spherical basis functions}, Foundations of Computational Mathematics, 7
  (2007), pp.~369--390.

\bibitem{narcowich2002scattered}
{\sc F.~J. Narcowich and J.~D. Ward}, {\em Scattered data interpolation on
  spheres: error estimates and locally supported basis functions}, SIAM Journal
  on Mathematical Analysis, 33 (2002), pp.~1393--1410.

\bibitem{renka1988multivariate}
{\sc R.~J. Renka}, {\em Multivariate interpolation of large sets of scattered
  data}, ACM Transactions on Mathematical Software (TOMS), 14 (1988),
  pp.~139--148.

\bibitem{rudi2015less}
{\sc A.~Rudi, R.~Camoriano, and L.~Rosasco}, {\em Less is more: {Nystr{\"o}m}
  computational regularization}, in NIPS, 2015, pp.~1657--1665.

\bibitem{smale2004shannon}
{\sc S.~Smale and D.-X. Zhou}, {\em Shannon sampling and function
  reconstruction from point values}, Bulletin of the American Mathematical
  Society, 41 (2004), pp.~279--305.

\bibitem{smale2005shannon}
{\sc S.~Smale and D.-X. Zhou}, {\em Shannon sampling {II}: Connections to
  learning theory}, Applied and Computational Harmonic Analysis, 19 (2005),
  pp.~285--302.

\bibitem{sun2021nystr}
{\sc Z.~Sun, M.~Dai, Y.~Wang, and S.-B. Lin}, {\em Nystr\"{o}m regularization
  for time series forecasting}, arXiv preprint arXiv:2111.07109,  (2021).

\bibitem{tsai2006all}
{\sc Y.-T. Tsai and Z.-C. Shih}, {\em All-frequency precomputed radiance
  transfer using spherical radial basis functions and clustered tensor
  approximation}, ACM Transactions on Graphics (TOG), 25 (2006), pp.~967--976.

\bibitem{tsybakov2009introduction}
{\sc A.~B. Tsybakov}, {\em Introduction to Nonparametric Estimation}, Springer,
  2009.

\bibitem{wahba1981spline}
{\sc G.~Wahba}, {\em Spline interpolation and smoothing on the sphere}, SIAM
  Journal on Scientific and Statistical Computing, 2 (1981), pp.~5--16.

\bibitem{wieczorek1998potential}
{\sc M.~A. Wieczorek and R.~J. Phillips}, {\em Potential anomalies on a sphere:
  Applications to the thickness of the lunar crust}, Journal of Geophysical
  Research: Planets, 103 (1998), pp.~1715--1724.

\bibitem{womersley2018efficient}
{\sc R.~S. Womersley}, {\em Efficient spherical designs with good geometric
  properties}, in Contemporary computational mathematics-A celebration of the
  80th birthday of Ian Sloan, Springer, 2018, pp.~1243--1285.

\bibitem{womersley2001good}
{\sc R.~S. Womersley and I.~H. Sloan}, {\em How good can polynomial
  interpolation on the sphere be?}, Advances in Computational Mathematics, 14
  (2001), pp.~195--226.

\bibitem{yudin1995coverings}
{\sc V.~A. Yudin}, {\em Coverings of a sphere, and extremal properties of
  orthogonal polynomials}, Diskretnaya Matematika, 7 (1995), pp.~81--88.

\bibitem{zhang2015divide}
{\sc Y.~Zhang, J.~Duchi, and M.~Wainwright}, {\em Divide and conquer kernel
  ridge regression: A distributed algorithm with minimax optimal rates},
  Journal of Machine Learning Research, 16 (2015), pp.~3299--3340.

\bibitem{zhou2018deep}
{\sc D.-X. Zhou}, {\em Deep distributed convolutional neural networks:
  Universality}, Analysis and Applications, 16 (2018), pp.~895--919.

\end{thebibliography}

\end{document}